\newcommand{\cmark}{\textcolor{green!80!black}{\ding{51}}}
\newcommand{\xmark}{\textcolor{red}{\ding{55}}}
\theoremstyle{plain}
\newtheorem{theorem}{Theorem}[section]
\newtheorem{proposition}[theorem]{Proposition}
\newtheorem{lemma}[theorem]{Lemma}
\theoremstyle{definition}
\newtheorem{definition}[theorem]{Definition}
\newtheorem{assumption}[theorem]{Assumption}
\theoremstyle{remark}
\DeclareMathOperator*{\argmin}{arg\,min}
\DeclareMathOperator{\thetab}{\bm\theta}
\newcolumntype{d}[1]{D{.}{.}{4}}%
\newcommand{\subhead}[1]{\multicolumn{1}{c}{#1}}%
\DeclareMathOperator{\x}{\textbf{x}}
\DeclareMathOperator{\y}{\textbf{y}}
\DeclareMathOperator{\w}{\textbf{w}}
\DeclareMathOperator{\p}{\textbf{p}}
\DeclareMathOperator{\q}{\textbf{q}}
\DeclareMathOperator{\s}{\textbf{s}}
\DeclareMathOperator{\z}{\textbf{z}}
\DeclareMathOperator{\h}{\textbf{h}}
\DeclareMathOperator{\lv}{\bm \ell}
\DeclareMathOperator{\B}{\mathcal{B}}
\DeclareMathOperator{\Pm}{\textbf{P}}
\DeclareMathOperator{\Sm}{\textbf{S}}
\DeclareMathOperator{\U}{\textbf{U}}
\DeclareMathOperator{\V}{\textbf{V}}
\DeclareMathOperator{\X}{\textbf{X}}
\DeclareMathOperator{\Q}{\textbf{Q}}
\DeclareMathOperator{\zero}{\textbf{0}}
\DeclareMathOperator{\true}{\diamond}
\newcommand{\rev}[1]{{#1}}
\icmltitlerunning{Communication-Efficient Feature Selection in Vertical Federated Learning}
\begin{document}

\twocolumn[
\icmltitle{LESS-VFL: Communication-Efficient Feature Selection for \\ Vertical Federated Learning}

\begin{icmlauthorlist}
\icmlauthor{Timothy Castiglia}{rpi}
\icmlauthor{Yi Zhou}{ibm}
\icmlauthor{Shiqiang Wang}{ibm}
\icmlauthor{Swanand Kadhe}{ibm}
\icmlauthor{Nathalie Baracaldo}{ibm}
\icmlauthor{Stacy Patterson}{rpi}
\end{icmlauthorlist}

\icmlaffiliation{rpi}{Rensselaer Polytechnic Institute}
\icmlaffiliation{ibm}{IBM Research}

\icmlcorrespondingauthor{Timothy Castiglia}{castigliatim@gmail.com}
\icmlkeywords{Machine Learning, Federated Learning, Feature Selection, Distributed Systems}

\vskip 0.3in
]
\printAffiliationsAndNotice{} %

\begin{abstract}
We propose LESS-VFL, a communication-efficient feature selection method for distributed systems with vertically partitioned data. We consider a system of a server and several parties with local datasets that share a sample ID space but have different feature sets. The parties wish to collaboratively train a model for a prediction task. As part of the training, the parties wish to remove unimportant features in the system to improve generalization, efficiency, and explainability. In LESS-VFL, after a short pre-training period, the server optimizes its part of the global model to determine the relevant outputs from party models. This information is shared with the parties to then allow local feature selection without communication. We analytically prove that LESS-VFL removes spurious features from model training. We provide extensive empirical evidence that LESS-VFL can achieve high accuracy and remove spurious features at a fraction of the communication cost of other feature selection approaches.
\end{abstract}

\section{Introduction} \label{intro.sec}

Federated learning has recently become of interest to the research community, 
and has shown promise in several application areas, such as healthcare, smart transportation, 
and predictive energy systems~\cite{Sun2019-ki, kairouz2019advances, zhou2021survey}.
Federated learning algorithms support distributed model training
among parties without the need to directly share local private data.

Vertical Federated Learning (VFL), an important class of federated learning algorithms, has received a significant
amount of attention lately~\citep{DBLP:journals/tist/YangLCT19, verticalAutoencoders, castiglia2022compressed}.
VFL works consider the case where parties store a shared sample ID space,
but different private feature sets. For example, a healthcare provider, 
a wearable technology company, and an insurance company wish to collaboratively
train a model for disease prediction. The parties have information on
the same individuals (sample ID space), but each party stores different health information (feature space).
In VFL, parties typically use local feature extractor models, such as deep neural networks, 
to produce low-dimensional \emph{embeddings} of local feature sets~\citep{FDML, SplitNN}.
The server takes embeddings as input to a fusion model for predictions.
We provide an example VFL model in Figure~\ref{model.fig}.

\begin{figure}[t]
    \centering
    \includegraphics[width=0.4\textwidth]{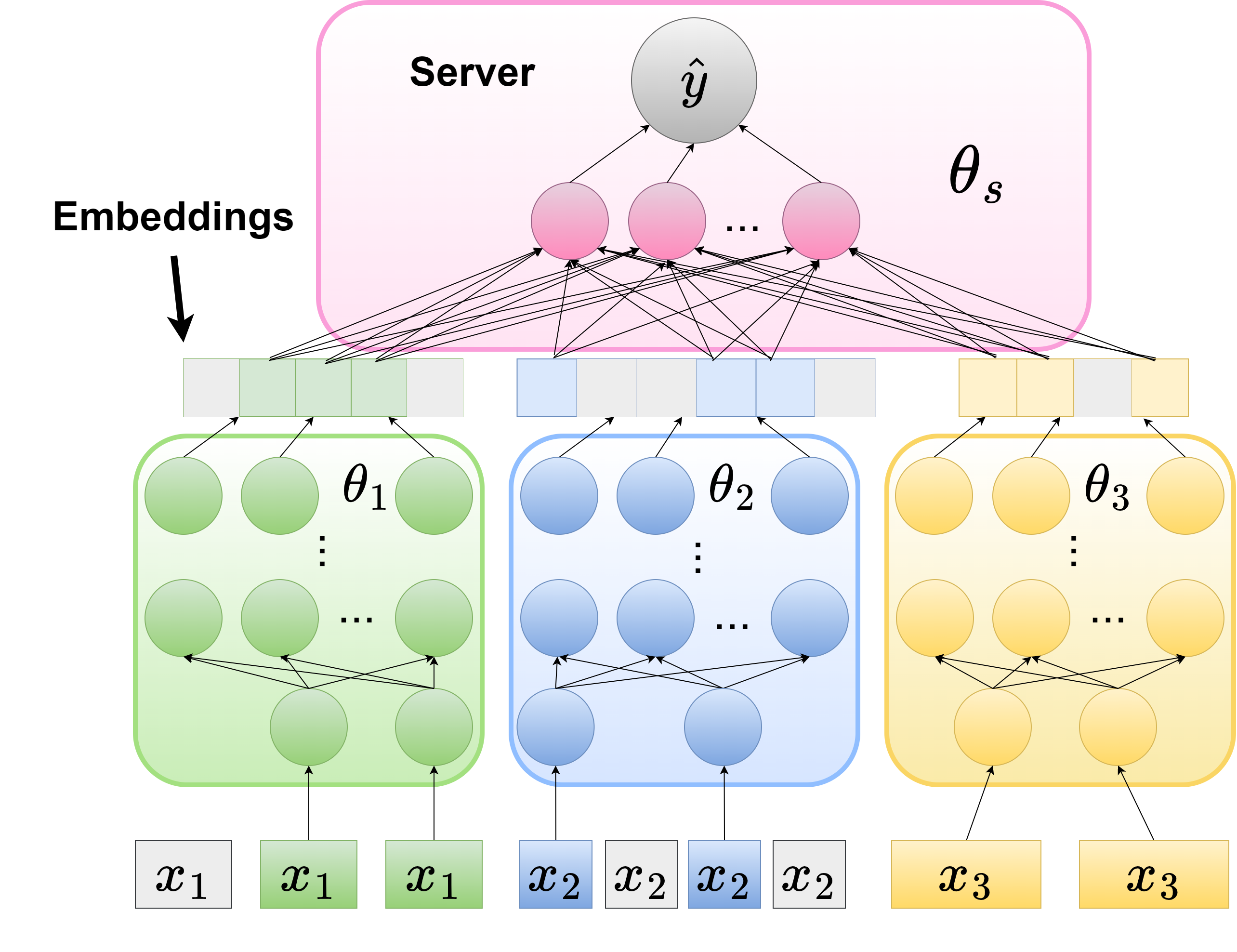}
    \caption{Example VFL model architecture. Non-significant features and embedding components (in gray) are removed after training with LESS-VFL.}
    \vspace{-0.5em}
    \label{model.fig}
\end{figure}

Feature selection is an important part of machine learning tasks.
Often, datasets contain many spurious features that do not relate
to the current prediction task.
For example, health care providers may train models using
electronic medical records (EMRs), 
which contain clinical documents, results from routine visits, and 
many features that may be irrelevant to disease diagnosis~\cite{CaninoSGTZV16}.
Failing to remove spurious features can have drastic effects on generalization.
In Figure~\ref{spur.fig}, we compare the test accuracy of VFL training 
with the original dataset against training with the dataset and an 
additional set of Gaussian noise features.
Simply adding in these spurious features causes the test accuracy of VFL to fall drastically.
In addition to improving model generalization, feature selection is often used for model explainability~\cite{clinciu2019survey}.

\begin{figure}[t]
    \centering
    \includegraphics[width=0.45\textwidth]{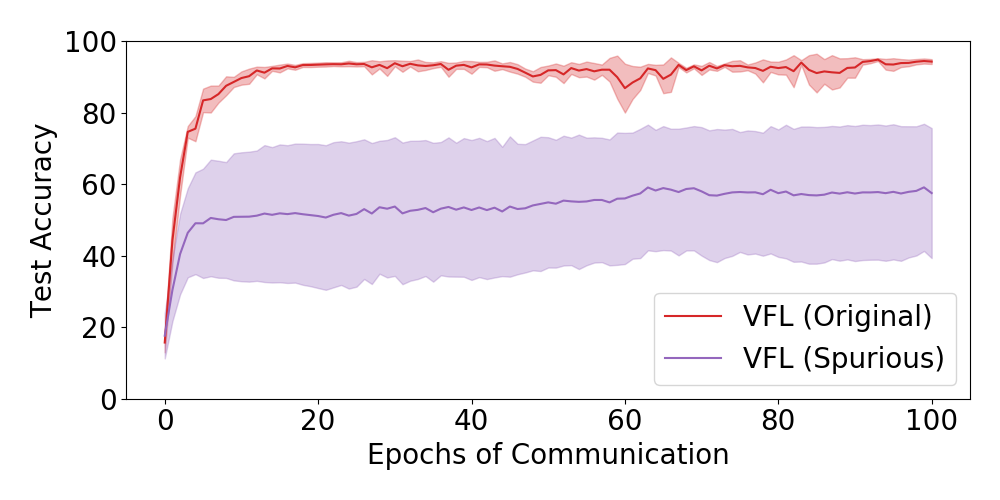}
    \caption{Test accuracy on the Activity dataset (details in Section~\ref{exp.sec}). VFL (Original) denotes the test accuracy of running Algorithm~\ref{vfl.alg} with the original dataset. VFL (Spurious) denotes the test accuracy on the dataset with spurious features added in. The solid line is the average of $5$ runs and the shaded region represents the standard deviation.}
    \label{spur.fig}
\end{figure}

Most centralized feature selection algorithms cannot be directly
translated to VFL because it either requires direct data sharing or 
is communication inefficient.
Parties in VFL may be globally distributed, leading communication to be
expensive in time, money, and resources. 
Thus it is important to design communication-efficient VFL algorithms. 
Although a few works propose VFL feature selection algorithms (summarized in Table~\ref{related.table}), 
no work formally analyzes the
feature selection problem in VFL and creates a method that
provably removes spurious features.
In this work, we seek to answer the following:
\emph{Can we design a communication-efficient VFL feature selection 
algorithm and formally verify that it removes spurious features and achieves high accuracy?}

\textbf{Related Work.} 
Feature selection algorithms tend to broadly fit into three categories:
filter, wrapper, and embedded methods~\cite{ChandrashekarS14}.
Filter methods %
use statistical metrics of the data to determine feature importance a priori to training a model. 
These methods require direct access to features to calculate the metrics and
cannot be directly applied to the VFL setting without sharing raw data.
Wrapper methods %
typically 
involve retraining a model several times to determine the importance of different feature subsets.
This is impractical for the VFL setting where model training requires a large amount of communication between parties and the server.
Embedded methods %
involve 
training a model while simultaneously determining the importance of all 
features.
These methods may fully train a model before performing feature selection, 
or gradually remove unimportant features during training.
Embedded methods that remove unimportant features during training
seem to be a good fit for VFL, however
they must be adapted to support distributed training and 
keep communication overhead low.

\rev{There have been a few works that propose embedded VFL feature selection methods~\cite{MultiVFL2020, ChenZGYFW021, HouSFY22, ZhangLHCZ22, zhang2022embedded, ChenDLWH22, Feng22Group, fedsdgfs}.}
However, most of these methods lack support
for deep neural networks or require a fully trained model 
to begin feature selection (see Table~\ref{related.table}).
Critically, none of these works provide theoretical evidence that 
spurious features are removed with their proposed methods,
only providing empirical evidence. 
An important open problem is how to formalize the feature selection 
problem in the VFL setting and provide a theoretical framework for 
proving that unimportant features are removed.

\begin{table}
    \caption{VFL feature selection algorithms.}
\label{related.table}
\vskip 0.1in
\small
\centering
\resizebox{0.49\textwidth}{!}{
\begin{tabular}{lccc}
    \toprule 
    \multirow{2}{*}{\begin{tabular}{@{}c@{}} \textbf{VFL Feature} \\ \textbf{Selection Algorithm} \end{tabular} } & 
    \multirow{2}{*}{\begin{tabular}{@{}c@{}} \textbf{Supports neural} \\ \textbf{networks} \end{tabular} } &
    \multirow{2}{*}{\begin{tabular}{@{}c@{}} \textbf{Features selected} \\ \textbf{during training} \end{tabular} } &
    \multirow{2}{*}{\begin{tabular}{@{}c@{}} \textbf{Provably removes} \\ \textbf{spurious features} \end{tabular} } \\
    & & & \\
    \midrule
    MMVFL~\cite{MultiVFL2020}      & \xmark & \cmark & \xmark \\
    Fed-EINI~\cite{ChenZGYFW021}      & \xmark & \cmark & \xmark \\
    \citet{HouSFY22}          & \xmark & \cmark & \xmark \\
    \citet{zhang2022embedded} & \xmark & \cmark & \xmark \\
    \citet{ZhangLHCZ22}       & \cmark & \xmark & \xmark \\
    EVFL~\cite{ChenDLWH22}        & \cmark & \xmark & \xmark \\
    VFLFS~\cite{Feng22Group}    & \cmark & \cmark & \xmark \\
    \rev{FedSDG-FS~\cite{fedsdgfs}}    & \cmark & \cmark & \xmark \\
    \midrule
    LESS-VFL (ours)             & \cmark & \cmark & \cmark \\
    \bottomrule
\end{tabular}}
\end{table}

\textbf{Contributions.} 
In this work, we formalize the VFL feature selection problem and
propose Local communication-Efficient group laSSo for Vertical Federated Learning (LESS-VFL),
an embedded feature selection method for VFL that provably removes
spurious features in a communication-efficient manner.
Our method utilizes group lasso regularization~\cite{ZhaoHW15, ZhangWSZP20, 0010ZWPP21} in a novel way that reduces the amount of communication between parties.
After a short pre-training period, the server determines a set of ``significant" embedding components from each party.
Using this information, each party performs feature selection utilizing group lasso locally without communication.
Although it has been proven that a centralized implementation of group lasso removes spurious features~\cite{DinhH20Neurips}, it is
not obvious that our method can provide similar guarantees in VFL.
We prove in our analysis that the parties asymptotically
solve the feature selection problem 
in terms of the sample size given the set of
significant embedding components.
In our experiments, we compare LESS-VFL to applying group lasso 
regularization directly to VFL and find that our method can greatly reduce the 
communication cost of feature selection.

We summarize our contributions:
\begin{itemize}[leftmargin=*, nosep]
    \item We formalize the feature selection problem for VFL in Section~\ref{problem.sec}.
    \item We propose a three-stage approach, namely LESS-VFL, along with a practical implementation in Section~\ref{alg.sec}.
    \item We prove analytically that LESS-VFL removes spurious features and achieves high accuracy in Section~\ref{analysis.sec}.
    \item We provide empirical evidence that LESS-VFL achieves high accuracy at a low communication cost in Section~\ref{exp.sec}.
\end{itemize}

\section{Problem Formulation and Background} \label{problem.sec}

We consider a system with $M$ parties and a server.
Each party $m$ stores $d_m$ features for $N$ training samples. 
We denote party $m$'s dataset as $\X_m \in \mathbb{R}^{N \times d_m}$.
We let the $i$-th sample in $\X_m$ be denoted as $\x_m^{(i)}$.
We assume that each party's dataset is aligned, i.e., $\x_m^{(i)}$
and $\x_j^{(i)}$ for all parties $m \neq j$ are different features
for data sample $i$.
We let a combined data sample be denoted as $\x^{(i)} = [\x_1^{(i)}, \ldots, \x_M^{(i)}]$
and let $\mathcal{X}$ be the set of all possible values for a sample $\x^{(i)}$.
We denote the combined dataset as $\X = [\X_1,\ldots,\X_M]$.
We assume the server stores the training labels $\y$.
Note that any party can play the role of server if it stores the labels.

Each party trains a local model $\h_m(\cdot)$ with parameters 
$\thetab_m$, and the server trains a server model $\h_s(\cdot)$
with parameters $\thetab_s$.
The output of the party models are called \emph{embeddings}. 
All party embeddings act as input to the server fusion model $\h_s(\cdot)$. 
The full VFL model $f(\cdot)$ is defined as follows:
\begin{align*}
    f(\bm\Theta ; \x^{(i)}) \coloneqq
    \h_s(\thetab_s, \h_1(\thetab_1 ; \x_1^{(i)}), \ldots, \h_M(\thetab_M ; \x_M^{(i)})).
\end{align*}
where $\bm\Theta = [\thetab_s^{\top}, \thetab_1^{\top}, \ldots, \thetab_M^{\top}]^{\top}$.

We formalize the feature selection problem.
Recall from Section~\ref{intro.sec} that not all input features may be 
significant to the current prediction task.
We formalize this notion of significance for any given input $\w$ and model $u(\thetab)$, which extends Definition 2.2 in \citet{DinhH20Neurips} to VFL systems.
\begin{definition}\label{sig.def} 
For a given model $u$ with parameters $\thetab$, 
let $\w^j$ be the $j$-th input to the model $u(\thetab ; \w)$, and $g^j(\w,s)$ be a function that replaces $\w^j$ with value $s$.
    The input $\w^j$ is \emph{non-significant} in this model $u(\thetab; \w)$
    iff $u(\thetab;\w) = u(\thetab; g^j(\w,s))$ for all $s \in \mathbb{R}$. 
    Otherwise, $\w^j$ is \emph{significant}.
\end{definition}

We want to emphasize that in Definition~\ref{sig.def}, the set of significant inputs is dependent on the model parameters $\thetab$. 
Throughout this paper, we apply the notion of significance in 
Definition~\ref{sig.def} to the following two scenarios specifically:
\begin{enumerate}[leftmargin=*, nosep]
\item When the input is the set of \textit{training features} and the model is the classifier, i.e.,  $\w = \x$, $u(\cdot) = f(\cdot)$, and $\thetab = \bm\Theta$ in Definition~\ref{sig.def};
\item When the input is the set of \textit{embedding components} generated based on parties' local models and the model is the server model, i.e. $\w = [\h_1(\cdot) ; \ldots ; \h_M(\cdot)]$, ${u(\cdot) = \h_s(\cdot)}$, and $\thetab = \thetab_s$.
\end{enumerate}

We define $\bm\Theta^{\true} = [(\thetab_s^{\true})^{\top}, (\thetab_1^{\true})^{\top}, \ldots, (\thetab_M^{\true})^{\top}]^{\top}$ as the \emph{generating model} that generated the training labels:
${y^{(i)} = f(\bm\Theta^{\true} ; \x^{(i)}) + \epsilon^{(i)}}$ where $\x^{(i)}$
are drawn from a distribution $\mathcal{P}_{\X,y}$ and 
$\epsilon \sim \mathcal{N}(0, \sigma^2)$ (formal definition in Section~\ref{analysis.sec}). 
Our main goal in feature selection is to determine the set of 
significant features for $\bm\Theta^{\true}$.
We let the set of significant features for a party $m$'s 
generating model $\thetab_m^{\true}$ be $\s_m$ and the set of 
non-significant features be $\z_m$ for any data sample $\x_m$.
We can consider the input layer weights that correspond
to the significant and non-significant features:
$$\x_m = (\s_m, \z_m)~\text{and}~\pi(\thetab_m^{\true}) = (\U_m, \V_m)$$
where $\pi(\thetab)$ extracts the input weights in a model $\thetab$, 
and $\U_m$ and $\V_m$ are the input layer weights for the 
significant and non-significant features in $\thetab_m^{\true}$, respectively. 
Note that the separation between $\U_m$ and $\V_m$ is \textit{not known} 
during training and is simply used for mathematical convenience.

The goal of embedded VFL feature selection is to find a 
model that simultaneously gives similar predictions to the generating model $\bm\Theta^{\true}$ and sets non-significant feature weights to zero:
\begin{align}
    \min_{\bm\Theta} R(\bm\Theta ; \mathcal{P}_{\X,\y}) 
    \text{ s.t. } &\U_m^k \neq \zero~~\forall m \in [M],~\forall k \in [d_m^s] \nonumber \\
                  &\V_m^l = \zero~~\forall m \in [M],~\forall l \in [d_m^z]
    \label{fs.prob}
\end{align}
where $R(\cdot)$ is some generalization risk over the data distribution
$\mathcal{P}_{\X,\y}$ (e.g., expected squared loss, cross-entropy) 
and $d_m^s$ and $d_m^z$ are the number of significant and non-significant features at party $m$, respectively. 
Setting the input weights on non-significant features to zero removes their
influence in the network, thus it essentially removes the features from the model (shown visually in Figure~\ref{model.fig}).
Note $\V_m$ may not necessarily be zero in 
$\bm\Theta^{\true}$, as the effect of non-significant features can be
lost at any layer in the model $f(\bm\Theta^{\true})$.

A popular centralized method to solve \eqref{fs.prob} for neural networks is group lasso~\citep{ZhaoHW15, ZhangWSZP20, 0010ZWPP21}.
If we apply group lasso directly to the VFL setting, 
then we can define the estimator as follows:
\begin{align}
    \bar{\bm\Theta} \coloneqq \argmin_{\bm\Theta} R_N(\bm\Theta ; \X ; \y) 
    + \textstyle{\sum_{m=1}^{M}} \lambda_m G(\thetab_m) 
    \label{lasso.eq}
\end{align}
where $R_N(\cdot)$ is an empirical risk that approximates $R(\cdot)$ over $N$ training samples,
and $G(\cdot)$ is $L_{2,1}$ regularization:
\begin{align*}
    G(\thetab_m) \coloneqq
    \textstyle{\sum_{j=1}^{d_m}} \|\pi(\thetab_m)^j\|_2 
\end{align*}
where the projection $\pi(\cdot)$ extracts the input layer weights of 
$\thetab_m$ and $d_m$ is the number of input features.
The regularizer $G(\cdot)$ sparsifies the input layer weights on each 
feature, pushing irrelevant feature weights to zero.

\textbf{Why not standard group lasso?}
Minimizing the group lasso objective \eqref{lasso.eq} using 
standard VFL training~\citep{FDML, SplitNN}
requires the parties and server to exchange embeddings 
and partial derivatives every iteration of training (see Algorithm~\ref{vfl.alg}).
Instead of communicating embeddings at every iteration, is it possible
to perform feature selection locally at each client given auxiliary 
information from the server?
In the next section, we propose a feature selection method to 
solve~\eqref{fs.prob} that utilizes local training with minimal communication between the parties and the server.

\section{Algorithm} \label{alg.sec}

We present LESS-VFL, a communication-efficient approach 
to perform feature selection in VFL.
We formalize the three stages of LESS-VFL and present
a practical implementation.

\begin{algorithm}[t]
    \begin{algorithmic}[1]
    \STATE \textbf{Input}: pre-trained model parameters $\hat{\thetab}_s,\hat{\thetab}_1,\ldots,\hat{\thetab}_M$
    
    \FOR {$m \leftarrow 1, \ldots, M$ in parallel}
        \STATE Send $\h_m(\hat{\thetab}_m ; \X_m)$ to server
    \ENDFOR
    
    \STATE \textbf{Initialize}: $\thetab_s^0 \leftarrow \hat{\thetab}_s$
    \FOR {$t \leftarrow 0, \ldots, T_1-1$}
        \STATE Randomly sample $\B \subset [N]$
        \STATE $\hat{\Phi} \leftarrow 
            \{\thetab_s^t, \h_1(\hat{\thetab}_1 ; \X^{(\B)}_m), 
            \ldots, \h_M(\hat{\thetab}_M ; \X^{(\B)}_m)\}$
        \STATE $\thetab_s^{t+1} \leftarrow \text{prox}_{\lambda_s, \eta_s^t}\left(\thetab_s^t - \eta_s^t
                \nabla_s R_{\B}(\hat{\Phi} ; \y^{(\B)})\right)$  
    \ENDFOR

    \FOR {$m \leftarrow 1, \ldots, M$ in parallel}
        \STATE $\mathcal{K}_m = \{k ~|~ \| \pi(\thetab_s^{T_1})^k \|_2 > 0 \}$       
        \STATE \textbf{Initialize}: $\thetab_m^0 \leftarrow \hat{\thetab}_m$
    \ENDFOR

    \FOR {$t \leftarrow 0, \ldots, T_2-1$}
        \FOR {$m \leftarrow 1, \ldots, M$ in parallel}
            \STATE Randomly sample $\B \subset [N]$
            \STATE $\thetab_m^{t+1} = \text{prox}_{\lambda_m, \eta_m^t}\left(\thetab_m^t - \eta_m^t \nabla H_{\B}(\thetab_m^t ; \hat{\thetab}_m; \mathcal{K}_m) \right)$
        \ENDFOR
    \ENDFOR
    \STATE $\bar{\bm\Theta} \leftarrow [\thetab_s^{T_1}, \thetab_1^{T_2}, \ldots, \thetab_M^{T_2}]$
    \end{algorithmic}
    \caption{LESS-VFL implemented using P-SGD}
    \label{fvfl.alg}
\end{algorithm}

\textbf{Stage 1 -- Pre-training.}
The parties and server begin by solving the following empirical risk minimization problem:
\begin{align}
    \hat{\bm\Theta} \coloneqq \textstyle{\argmin_{\bm\Theta}}~ R_N(\bm\Theta ; \X ; \y).
    \label{pretrain.eq}
\end{align}
Standard VFL training, described in Algorithm~\ref{vfl.alg} in 
Appendix~\ref{algapp.sec}~\citep{FDML, SplitNN}, 
is a practical method to find an approximate solution to \eqref{pretrain.eq}. %

\textbf{Stage 2 -- Embedding Component Selection.}
In this stage, the server determines the set of significant
embedding components.
Each party sends its current pre-trained embeddings
$\h_m(\hat{\thetab}_m ; \x^{(i)})$ for each sample $i$.
These embeddings are fixed and used as input to the server model during this stage.
With some abuse of notation, we let 
$R_N(\thetab_s, \h_1(\hat{\thetab}_1), \ldots, \h_M(\hat{\thetab}_M))$
be the empirical risk of the server model using pre-trained embeddings
as inputs. The server solves the following:
\begin{align}
    \bar{\thetab}_s \coloneqq \argmin_{\thetab_s} 
        R_N(\thetab_s, \h_1(\hat{\thetab}_1), \ldots, \h_M(\hat{\thetab}_M))
        + \lambda_s G(\thetab_s)
    \label{server_lasso.eq}
\end{align}
where $G(\cdot)$ is the $L_{2,1}$ regularizer on the input layer of
$\thetab_s$
and $\hat{\thetab}_m$ are party $m$'s pre-trained parameters after pre-training. 
Note that the server uses the pre-trained embeddings as input, and does not
require communication with the parties to calculate $R_N(\cdot)$.
Solving \eqref{server_lasso.eq} simultaneously minimizes the risk while
sparsifying embedding component weights.
This is illustrated in Figure~\ref{model.fig}, where
non-significant embedding components (in gray) no 
longer provide input to the server model.

In Algorithm~\ref{fvfl.alg} (lines 1--10) we provide a practical method to find an approximate solution to \eqref{server_lasso.eq}.
The parties generate embeddings for all data samples using the pre-trained models and send them to the server (lines 1--4).
The server then starts embedding component selection (lines 5--10).
The server randomly samples a mini-batch $\B$ of indices,
then calculates the partial derivative of the risk with respect to 
the server model: $\nabla_s R_{\B}(\cdot)$.
The server then employs proximal stochastic gradient descent (P-SGD).
We let $\text{prox}_{\lambda, \eta}(\thetab)$
with parameter $\lambda$ and step size $\eta$
denote the closed-form solution to the proximal operator for $L_{2,1}$ regularization:
\begin{align*}
    \text{prox}_{\lambda,\eta}(\Pm^j) = 
    \begin{cases} 
        \Pm^j - \lambda \eta \frac{\Pm^j}{\|\Pm^j\|_2} & \| \Pm^j \|_2 > \lambda \eta \\
        \zero & \| \Pm^j \|_2 \leq \lambda \eta
    \end{cases}
\end{align*}
where $\Pm^j \coloneqq \pi(\thetab)^j$ is the $j$-th group of input weights.
After training, any embedding components with non-zero input weights are 
considered significant, and each party $m$ is sent its list of  
significant embedding components indices $\mathcal{K}_m$ (lines 11--14).

\textbf{Stage 3 -- Feature Selection.}
In this stage, each party's goal is to match the values of the significant 
embedding components while removing non-significant features from its model.
We denote the squared difference between the party's embedding value and the pre-trained embedding values over the set of significant components $\mathcal{K}_m$:
\begin{align*}
    &e(\thetab_m ; \hat{\thetab}_m ; \mathcal{K}_m ; \x_m^{(i)}) \coloneqq 
    \\ &~~~~~~~~~~~~~~~~~~~~~~~~
    \textstyle{\sum_{k \in \mathcal{K}_m}}
    (\h_m(\thetab_m ; \x_m^{(i)})^k - \h_m(\hat{\thetab}_m ; \x_m^{(i)})^k)^2
\end{align*}
where $\h_m(\thetab_m ; \x_m^{(i)})^k$ is the $k$-th embedding component.
Each party minimizes $e(\cdot)$ for each data sample
while sparsifying its input layer weights:
\begin{align}
    \bar{\thetab}_m \coloneqq \argmin_{\thetab_m} H_N(\thetab_m ; \hat{\thetab}_m ; \mathcal{K}_m ; \X_m) 
    + \lambda_m G(\thetab_m)
    \label{client_lasso.eq}     
\end{align}
where,
\begin{align*}
    H_N(\thetab_m ; \hat{\thetab}_m ; \mathcal{K}_m ; \X_m) \coloneqq 
    \frac{1}{N} \textstyle{\sum_{i=1}^N} e(\thetab_m ; \hat{\thetab}_m ; \mathcal{K}_m ; \x_m^{(i)}).
\end{align*}
A practical method to find an approximate solution to \eqref{client_lasso.eq}
can be seen in Algorithm~\ref{fvfl.alg} (lines 15--20).
Each party updates its model using
the mini-batch gradient $\nabla H_{\B}(\cdot)$ and applying
$\text{prox}_{\lambda_m, \eta_m^t}(\thetab)$ with
regularization parameter $\lambda_m$.

After minimizing \eqref{client_lasso.eq}, any input feature weights
set to zero are considered non-significant and removed from the model.
This is illustrated in Figure~\ref{model.fig}, where input weights from non-significant features (in gray) are removed.
Once feature selection is complete, 
one can further refine the network with the 
remaining features using Algorithm~\ref{vfl.alg} if desired. 

\textbf{Algorithm Cost.}
Stage 1 of LESS-VFL is the same as standard VFL, and thus has the same 
communication cost per iteration.
Stages 2 and 3 only require one round of communication where
parties send current embeddings to the server. 
The number of iterations $T_1$ and $T_2$ in Algorithm~\ref{fvfl.alg}
controls the computation cost at the server and parties respectively, which one can tune.

\textbf{Privacy.} 
LESS-VFL uses information already shared during VFL training to perform feature selection. Thus it provides the same privacy guarantees as standard VFL.
Although no raw data is shared between parties,
VFL may be vulnerable to reconstruction attacks and label leakage.
There have been techniques applied on top of VFL to protect against such attacks~\citep{Qiu2022hashing, zou2022defending},
and these can be similarly applied to LESS-VFL.
Our analysis in Section~\ref{analysis.sec} still holds
when applying these methods.

\textbf{Theory vs. Practice.}
We note that Algorithms~\ref{fvfl.alg} and~\ref{vfl.alg} provide approximate
solutions to each stage's optimization problem when running a fixed number of iterations.
By using P-SGD, input weights are set to zero and features are selected 
without the need for convergence, even if it is not the optimal set.
Our analysis of LESS-VFL in Section~\ref{analysis.sec} considers the 
ideal case where \eqref{pretrain.eq}, \eqref{server_lasso.eq}, and \eqref{client_lasso.eq} are solved exactly.
However, we find in our experiments in Section~\ref{exp.sec} that LESS-VFL
can remove spurious features and achieve high accuracy with only a 
few communication rounds for pre-training and an approximate solution 
from Algorithm~\ref{fvfl.alg}.

\rev{\textbf{Hyper-parameter tuning.}
Determining the best hyper-parameters for LESS-VFL (e.g. regularization parameters, number of pre-training epochs) can be done in an efficient manner. The parties and server can produce several pre-trained models for Stage 1 with different numbers of iterations. In Stage 2, the server can then explore the space of server model regularization parameters $\lambda_s$ without communication. For Stage 3, the server can share the resulting sets of significant embedding component indices with the parties, and each party then can tune its local regularization parameter $\lambda_m$ without communication.
For choosing the numbers of iterations $T_1$ in Stage 2, the server can minimize its optimization problem locally until the training loss plateaus. Similarly, for choosing $T_2$ in Stage 3, each party can minimize its local feature selection problem until its proxy training loss plateaus.}

\section{Theoretical Analysis} \label{analysis.sec}

We analyze LESS-VFL and prove under which conditions the 
algorithm minimizes risk and removes spurious features.
We assume each party $m$'s network is structured as follows:
\begin{itemize}[leftmargin=*, nosep]
    \item input layer: $\lv^1_m(\x_m) = \Pm_m \cdot \x_m + \p_m$
    \item hidden layers: 
    
    ~~~~~$\lv^j_m(\x_m)= \zeta^j_m(\Sm_m^j, \lv_m^{j-1}(\x_m),\ldots, \lv_m^{1}(\x_m))$
    \item output layer: $\h_m(\thetab_m ; \x_m) = \Q_m \cdot \lv_m^{L-1}(\x_m) + \q_m$
\end{itemize}
where $d^j_m$ are the number of neurons in the $j$-th hidden layer for party $m$,
$\Pm_m \in \mathbb{R}^{d^1_m \times d_m}$, $\Q_m \in \mathbb{R}^{d^{L}_m \times d^{L-1}_m}$, $\p_m \in \mathbb{R}^{d^1_m}$, $\q_m \in \mathbb{R}^{d^L_m}$,
and $\zeta^j_m(\cdot)$ are functions of previous layers with parameters $\Sm_m^j$.
We define the server network structure the same, denoted with subscript $s$.
This structure generalizes to several types of neural networks,
including feed-forward networks, convolutional networks, and 
many residual networks~\citep{DinhH20Neurips}.

We make the following assumptions, standard in model-based feature 
selection~\cite{huang2010variable, wu2009variable, DinhH20Neurips}:
\begin{assumption} \label{data.assum}
    Training data $\{(\x^{(i)}, y^{(i)})\}_{i=1}^N$ are sampled i.i.d. from distribution $\mathcal{P}_{\X, \y}$
    such that the input density $p_{\X}$ is positive and continuous on its 
    open domain $\mathcal{X}$
    and ${y^{(i)} = f(\bm\Theta^{\true} ; \x^{(i)}) + \epsilon^{(i)}}$ where $\epsilon \sim \mathcal{N}(0, \sigma^2)$. 
\end{assumption}
Assumption~\ref{data.assum} states that there is a
generating model $f(\bm\Theta^{\true})$ that generates the training labels with some Gaussian noise.
This ensures that feature selection is possible since the learned model
$f(\bar{\bm\Theta})$ matches the structure of the generating model.
The assumption on the input density $p_{\X}$ ensures that there are 
no perfect correlations between input features.
Note that since $\mathcal{X}$ is an open domain, we assume that 
all underlying features are continuous for this analysis.
\begin{assumption} \label{analytic.assum}
    The hidden layer functions $\zeta_m^j(\cdot)$ in all models are analytic. 
    The empirical risk is mean squared error:
$
    R_N(\bm\Theta ; \X ; \y) \coloneqq
    \frac{1}{N} \sum_{i=1}^{N} (f(\bm\Theta ; \x^{(i)}) - y^{(i)} )^2
$
and the generalization risk function is expected squared error:
$
    R(\bm\Theta ; \mathcal{P}_{\X,\y}) \coloneqq 
    \mathbb{E}_{(\x,y) \sim \mathcal{P}_{\X,\y}} [ \left(f(\bm\Theta ; \x) - y \right)^2 ].
$
\end{assumption}
Assumption~\ref{analytic.assum} ensures that 
the risk function is analytic, which allows us to reason about the distance between the learned model and the generating model.
Note that under the definition of the generating model in Assumption~\ref{data.assum}, $\bm\Theta^{\true}$ minimizes the expected squared error $R(\cdot)$.

\rev{Next, we formalize our goal to find parameters 
$\bar{\bm\Theta}$ that solves \eqref{fs.prob}, i.e. performs the same as the generating model $\bm\Theta^{\true}$
while removing non-significant features.}
We define the set $\mathcal{T}^*$ as the parameters that achieve the same risk as the generating model:
$$\mathcal{T}^* \coloneqq \{\bm\Theta : R(\bm\Theta) = R(\bm\Theta^{\true})\}.$$
\rev{Recall from Section~\ref{problem.sec} that for $\bm\Theta^{\true}$, it is not necessarily the case that the input weights on non-significant features are zero. 
The same holds for any model in $\mathcal{T}^*$.
Thus, we define a subset of parameters in $\mathcal{T}^*$} 
that also have weights on non-significant features set to zero:
$$\mathcal{T}_{\phi}^* \coloneqq  \{\bm\Theta : \bm\Theta \in \mathcal{T}^* \text{ and } \V_m = \zero \},$$
where $\V_m$ are the input weights on non-significant features in the 
generating model $\bm\Theta^{\true}$ (definition in Section~\ref{problem.sec}).
We define the distance of a vector $\thetab$ from a set of vectors $\mathcal{S}$ as:
$$d(\thetab, \mathcal{S}) = \inf_{\thetab' \in \mathcal{S}} \| \thetab - \thetab' \|_2.$$
\rev{The feature selection problem~\eqref{fs.prob} is solved if $d(\bar{\bm\Theta}, \mathcal{T}_{\phi}^*) \rightarrow 0$ for our learned parameters $\bar{\bm\Theta}$.}
\citet{DinhH20Neurips} proved this can be achieved using group lasso in centralized machine learning problems.
Our goal is to show that our three-stage method can achieve the same in VFL settings.

\subsection{Main Result}

We present our main result below.
\begin{theorem} \label{main_feature.thm}
Let $\tilde{\bm\Theta} = \argmin_{\bm\Theta \in \mathcal{T}^*} \| \bm\Theta - \hat{\bm\Theta} \|_2$. 
    Let $\mathcal{K}_m$ and $\mathcal{Z}_m$ be the set of party $m$'s significant and non-significant embedding components in the model $\tilde{\bm\Theta}$, respectively.
    Let $\bar{\bm\Theta}$ be the model after solving \eqref{pretrain.eq}, 
\eqref{server_lasso.eq}, and \eqref{client_lasso.eq} in succession.
Under Assumptions \ref{data.assum} and \ref{analytic.assum}, we conclude:

    (i) After solving \eqref{server_lasso.eq} and obtaining $\bar{\thetab}_s$, for all $m$,
    $\pi(\bar{\thetab}_s)^k \neq \zero$ for all $k \in \mathcal{K}_m$ and
    $\pi(\bar{\thetab}_s)^l \rightarrow \zero$ for all $l \in \mathcal{Z}_m$,
    where $\pi(\bar{\thetab}_s)$ are the embedding layer weights.
    
(ii) 
When the server finds $\mathcal{K}_m$ for all $m$, then
for any $\delta > 0$ and $\nu > 1$, 
if $\lambda_s \sim N^{-1/4}$ and
$\lambda_m \sim N^{-1/4}$ for all $m$,
    \begin{align}
    d(\bar{\bm\Theta}, \mathcal{T}_{\phi}^*) = 
    O\left( \sqrt{M} \left( \frac{\log N}{N} \right)^{\frac{1}{4(\nu-1)}} \right)
    \label{main_feature.eq}
    \end{align}
    with probability $1-\delta$.
\end{theorem}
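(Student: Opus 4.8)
The plan is to treat each of the three stages as its own instance of the centralized group-lasso feature-selection problem analyzed by \citet{DinhH20Neurips}, and to chain the stages together through the anchor model $\tilde{\bm\Theta}$. I would begin with a pre-training lemma. Since $\hat{\bm\Theta}$ exactly minimizes the mean-squared empirical risk on the (bounded) parameter region, $R_N(\hat{\bm\Theta}) \le R_N(\bm\Theta^{\true})$; combining this with a uniform concentration bound on $R-R_N$ and a Gaussian-maximum bound $\max_i|\epsilon^{(i)}| = O(\sqrt{\nu\log N})$, which holds with probability $1-\delta$ once $N$ is large relative to $\log(1/\delta)$ and is where $\nu$ and the $\log N$ factor first enter, gives $R(\hat{\bm\Theta}) - R(\bm\Theta^{\true}) = O(\sqrt{\log N/N})$. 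Because $R(\bm\Theta) - R(\bm\Theta^{\true}) = \mathbb{E}[(f(\bm\Theta)-f(\bm\Theta^{\true}))^2] \ge 0$ is analytic in $\bm\Theta$ (Assumption~\ref{analytic.assum}) and vanishes exactly on $\mathcal{T}^*$, a {\L}ojasiewicz-type inequality inverts this to $d(\hat{\bm\Theta},\mathcal{T}^*) = O((\log N/N)^{1/(4(\nu-1))})$, so $\|\hat{\bm\Theta}-\tilde{\bm\Theta}\|_2$ is of the same order. I would also record the consequence that every $\bm\Theta \in \mathcal{T}^*$, in particular $\tilde{\bm\Theta}$, realizes the prediction function $f(\bm\Theta^{\true})$ almost everywhere, which is what lets $\tilde{\bm\Theta}$ serve as a ``ground truth'' in the later stages.

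For part (i), Stage~2 is exactly the group-lasso program \eqref{server_lasso.eq}: a fit of the server network $\h_s$ with the \emph{fixed} pre-trained embeddings $\h_m(\hat{\thetab}_m)$ as inputs. I would use $\tilde{\thetab}_s$ as the reference model. By smoothness of the networks on a bounded parameter region, $\h_m(\hat{\thetab}_m)$ differs from $\h_m(\tilde{\thetab}_m)$ by $O(\|\hat{\bm\Theta}-\tilde{\bm\Theta}\|_2)$, and $\h_s(\tilde{\thetab}_s ; \h_1(\tilde{\thetab}_1),\dots,\h_M(\tilde{\thetab}_M)) \equiv f(\bm\Theta^{\true})$, so $\tilde{\thetab}_s$ is within $O(\sqrt{\log N/N} + \|\hat{\bm\Theta}-\tilde{\bm\Theta}\|_2 + \lambda_s)$ of optimal for \eqref{server_lasso.eq}. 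Feeding this into the \citet{DinhH20Neurips} argument applied to $\h_s$ yields the dichotomy: for $k \in \mathcal{K}_m$ the coordinate is genuinely significant for $\tilde{\thetab}_s$, so zeroing $\pi(\thetab_s)^k$ raises the population objective of \eqref{server_lasso.eq} by a fixed positive amount that dominates the vanishing $\lambda_s$-saving, forcing $\pi(\bar{\thetab}_s)^k \ne \zero$; for $l \in \mathcal{Z}_m$ a nonzero $\pi(\thetab_s)^l$ only adds penalty, so $\pi(\bar{\thetab}_s)^l \to \zero$, and once $N$ is large the $\ell_{2,1}$ subgradient optimality condition makes it exactly $\zero$. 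Hence the server recovers precisely $\mathcal{K}_m$ and broadcasts it.

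For part (ii), condition on the high-probability event of part (i), so each party holds the correct $\mathcal{K}_m$, and analyze Stage~3, the group-lasso program \eqref{client_lasso.eq}: a fit of $\h_m$ to its own pre-trained embedding values restricted to the coordinates in $\mathcal{K}_m$. The structural fact I would use is that a scalar input enters the network only through its input-layer weight column, so zeroing the columns $\V_m$ of $\Pm_m$ replaces $\h_m(\tilde{\thetab}_m;\x_m)$ by its evaluation at the point where $\x_m$'s $\z_m$-coordinates are set to $0$; since the $\z_m$-features are non-significant for $f(\bm\Theta^{\true}) = f(\tilde{\bm\Theta})$, this does not change $f$, and hence (modulo the no-cancellation point below) it leaves the coordinates of $\h_m(\tilde{\thetab}_m)$ indexed by $\mathcal{K}_m$ unchanged. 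Then the $\V_m$-sparsified version of $\tilde{\thetab}_m$ is an approximate minimizer of \eqref{client_lasso.eq} up to $O(\|\hat{\bm\Theta}-\tilde{\bm\Theta}\|_2 + \lambda_m)$, and the \citet{DinhH20Neurips} dichotomy again forces $\bar{\thetab}_m$ to keep the significant-feature columns nonzero, to drive the columns indexed by $\z_m$ to $\zero$, and to stay within $O((\log N/N)^{1/(4(\nu-1))})$ of a $\z_m$-sparse minimizer when $\lambda_m \sim N^{-1/4}$. Finally I would assemble $\bar{\bm\Theta} = [\bar{\thetab}_s, \bar{\thetab}_1,\dots,\bar{\thetab}_M]$ and bound $d(\bar{\bm\Theta},\mathcal{T}_{\phi}^*)$ block by block: the Stage-2 and Stage-3 estimates together with the structural fact exhibit, in each of the $M+1$ blocks, a nearby choice consistent with membership in $\mathcal{T}_{\phi}^*$ and at distance $O((\log N/N)^{1/(4(\nu-1))})$, and summing $M+1$ squared block errors under the square root produces the $\sqrt{M}$ factor of \eqref{main_feature.eq}.

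The step I expect to be the main obstacle is precisely the ``no-cancellation'' claim underlying Stage~3: that zeroing the non-significant feature columns, while by construction leaving $f$ unchanged, also does not perturb the \emph{significant} embedding components $\mathcal{K}_m$ (equivalently, that a feature can be globally non-significant for $f$ only if it is non-significant for every embedding coordinate that the server model actually reads). This must rule out a feature whose effect on a $\mathcal{K}_m$-coordinate of $\h_m(\tilde{\thetab}_m)$ is real but is annihilated by $\h_s$ along that particular curve. I would attempt to establish it using analyticity of the network and positivity/continuity of $p_{\X}$ (Assumption~\ref{data.assum}) to show such degeneracies cannot persist on a positive-measure set, but I flag this as the place most likely to require an additional non-degeneracy condition on $\bm\Theta^{\true}$ (or on the pre-trained $\hat{\bm\Theta}$). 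The remaining pieces — the pre-training concentration bound, the parameter-Lipschitz estimates for smooth networks on compact sets, and bookkeeping constants through the three {\L}ojasiewicz inversions — are routine given \citet{DinhH20Neurips}.
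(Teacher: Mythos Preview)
Your proposal follows essentially the same architecture as the paper: a pre-training concentration lemma, a server-side group-lasso analysis (Theorem~\ref{server.thm}/Lemma~\ref{server.lemma}), a party-side group-lasso analysis (Theorem~\ref{client.thm}/Lemma~\ref{client.lemma}) with $H$ as the proxy objective, and a final block-wise assembly that produces the $\sqrt{M}$ factor exactly as in the paper's last display. Two points, however, diverge from the paper and should be corrected.

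First, you misidentify the origin of $\nu$. It is \emph{not} a Gaussian-tail parameter entering through $\max_i|\epsilon^{(i)}|$; in the paper $\nu$ is the {\L}ojasiewicz exponent from Lemma~\ref{lbound.lemma}, i.e., the inequality $c_2\,d(\bm\Theta,\mathcal{T}^*)^{\nu}\le R(\bm\Theta)-R(\bm\Theta^{\true})$, and it reappears every time that inequality (or its analogue for $H$) is inverted. The $\log N$ factor comes separately from the uniform deviation bound $|R_N-R|\le c_1\log N/\sqrt{N}$ (Lemma~\ref{genbound.lemma}). Relatedly, the paper's pre-training bound (Lemma~\ref{pretrain.lemma}) is $d(\hat{\bm\Theta},\mathcal{T}^*)=O(\log N/\sqrt{N})$, not $O((\log N/N)^{1/(4(\nu-1))})$; this sharper rate is what is fed into the Lipschitz perturbation terms in Lemmas~\ref{server.lemma} and~\ref{client.lemma}, and only after combining with the $\lambda^{\nu/(\nu-1)}$ term and taking the $1/\nu$-th root does the exponent $1/(4(\nu-1))$ emerge.

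Second, the ``no-cancellation'' claim you single out as the main obstacle is exactly Proposition~\ref{sig.prop} (and Proposition~\ref{sig2.prop} for $M>1$), and the paper dispatches it in three lines by a definitional contradiction: if a non-significant feature $j$ perturbed a significant embedding component $k$, then because $k$ is significant for $\h_s$ the output $f$ would change, contradicting non-significance of $j$. The paper does not invoke analyticity of the network, positivity of $p_{\X}$, or any extra non-degeneracy condition on $\bm\Theta^{\true}$ for this step; it is treated as the most elementary ingredient and then used (via Lemma~\ref{sig.lemma}/\ref{sig2.lemma}) to transfer the significant/non-significant partition from $f$ to $e(\cdot\,;\mathcal{K}_m)$. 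Your instinct that the contradiction glosses over the possibility that several embedding components move simultaneously and cancel inside $\h_s$ is a fair reading, but the paper takes the argument at face value, so your plan to build heavier machinery here departs from, rather than fills in, the paper's proof.
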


Theorem~\ref{main_feature.thm} (i) states that the server finds a set 
of embedding components that are significant in $\tilde{\bm\Theta}$, the closest model to the pre-trained model $\hat{\bm\Theta}$
that matches the risk of the generating model.
This result ensures that the list of embedding components given by
the server to the parties can serve as an accurate proxy for the loss function.

Theorem~\ref{main_feature.thm} (ii) states if we run each LESS-VFL stage
to convergence, then we approach a model that minimizes the risk and
removes non-significant features at a polynomial rate 
in terms of the number of training samples $N$. 
Since parties cannot calculate the risk $R_N(\cdot)$ locally, each uses
$H_N(\cdot)$, the distance between the produced embeddings and the significant components of the pre-trained embeddings, as a proxy. 
It is not immediately evident that feature selection can be performed at 
each party without access to the server model to calculate the risk.
Theorem~\ref{main_feature.thm} states that regardless of the depth or complexity of the server model,
given pre-trained embeddings from solving \eqref{pretrain.eq} 
and the set of significant embedding components $\mathcal{K}_m$ found by solving \eqref{server_lasso.eq},
each party can successfully remove its non-significant features by solving \eqref{client_lasso.eq}.
This emphasizes that all stages of LESS-VFL
are necessary.%

The bound in \eqref{main_feature.eq} is similar to that of centralized group 
lasso~\cite{DinhH20Neurips}, with the addition of sub-linear
error growth depending on the number of parties $M$.
It is common for $M$ to be small in many VFL applications~\cite{kairouz2019advances}, 
thus this term has a minor effect on the bound.

\subsection{Proof Sketch}
For the sake of brevity, we present this proof sketch for the case 
where $M=1$ (one party and server), using subscript $m$ to denote the 
party. We provide the complete proof of Theorem~\ref{main_feature.thm} for $M>1$ in the appendix.
The proof for $M>1$ is similar to that of $M=1$ since the server-side
group lasso treats embeddings as input and is agnostic to the number of parties,
and party-side group lasso runs in parallel using only its own significant
embedding components as a proxy for the loss function.
The key challenge in the extension to $M>1$ comes in the relationship
between significant embedding components and significant party features (see Lemma~\ref{sig2.lemma} in Appendix~\ref{extension.sec}).

We start by providing some definitions and additional notation.
We define $H(\cdot)$ as the expected squared difference between two 
embeddings over the full data distribution $\mathcal{P}_{\X,\y}$:
\begin{align*}
&H(\thetab_m ; \thetab_m' ; \mathcal{K}_m) \coloneqq 
\mathbb{E}_{(\x,y) \sim \mathcal{P}_{\X,\y}} \left[ 
e(\thetab_m ; \thetab_m'; \mathcal{K}_m ; \x_m)
\right]
\end{align*}
where $\thetab_m, \thetab_m'$ are party model parameters,
$\mathcal{K}_m$ is a set of embedding components, and $e(\cdot)$ is the 
square difference between embeddings components in $\mathcal{K}_m$.
Recall that the notion of significance as given in 
Definition~\ref{sig.def} can be applied to any input and model.
We summarize our steps to prove that $d(\bm\Theta, \mathcal{T}_{\phi}^*) \rightarrow 0$:
\begin{enumerate}[leftmargin=*, nosep, label=(\alph*)]
    \item Prove that minimizing $H(\cdot)$ also minimizes $R(\cdot)$ if $\mathcal{K}_m$ is the set of significant embedding components.
    \item Prove that $e(\cdot~;~\mathcal{K}_m)$ has the same significant and non-significant features as $f(\cdot)$ if $\mathcal{K}_m$ is the set of significant embedding components.
    \item Prove that \eqref{server_lasso.eq} finds optimal server parameters and finds the set of significant embedding components.
    \item Prove that given the set of significant embedding components, \eqref{client_lasso.eq} finds optimal party parameters and removes non-significant features.
\end{enumerate}

We start by proving (a) and (b).
In the following proposition, we discuss the relationship between the significance of features in the full network versus the significance of
features to embedding components in the party sub-network.
\begin{proposition} \label{sig.prop}
Consider a model $\bm\Theta = [\thetab_s^{\top}, \thetab_m^{\top}]^{\top}$.
Let $\s$ and $\z$ be the sets of significant and non-significant features for $f(\bm\Theta)$, respectively.
Let the set of significant embedding components for $f(\thetab_s ; \h(\thetab_m; \x))$ be $\s_s$.
Let $g^j(\x, r)$ replace input $\x^j$ with value $r$.
For each significant embedding component $k \in \s_s$, for all $j \in \z$, and any $r \in \mathbb{R}$, $\h(\thetab_m; \x)^k = \h(\thetab_m; g^j(\x, r))^k$.
\end{proposition}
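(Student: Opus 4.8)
The plan is to argue by contradiction. Suppose the conclusion fails: there are a significant embedding component $k \in \s_s$, a non-significant feature $j \in \z$, an input $\x_0$, and a value $r_0$ with $\h(\thetab_m; \x_0)^k \neq \h(\thetab_m; g^j(\x_0, r_0))^k$. From this I would produce some input $\x$ and value $r$ with $f(\bm\Theta; \x) \neq f(\bm\Theta; g^j(\x, r))$, i.e., $j$ would be significant for $f(\bm\Theta)$, contradicting $j \in \z$. Write $\Psi(\cdot) \coloneqq \h(\thetab_m; \cdot)$ for the party map and $\Phi_s(\cdot) \coloneqq \h_s(\thetab_s; \cdot)$ for the server map, so $f(\bm\Theta; \cdot) = \Phi_s \circ \Psi$. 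Throughout I use Assumption~\ref{analytic.assum}, which makes $\Psi$, $\Phi_s$, and $\Phi_s \circ \Psi$ real-analytic, and Assumption~\ref{data.assum}, which gives an open feature domain $\mathcal{X}$ carrying a strictly positive continuous density.

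First I would restate the hypotheses analytically: (a) $j \in \z$ is equivalent to $\partial_{\x^j}(\Phi_s \circ \Psi) \equiv 0$ on $\mathcal{X}$, and the statement to prove is the analogous $\partial_{\x^j}\Psi^k \equiv 0$; (b) $k \in \s_s$ means $(\partial_{\phi^k}\Phi_s) \circ \Psi$ is real-analytic and not identically zero on $\mathcal{X}$; (c) by analyticity of $\Psi$ and connectedness of $\mathcal{X}$, the assumed inequality is equivalent to $\partial_{\x^j}\Psi^k$ not being identically zero on $\mathcal{X}$. I would then differentiate (a) by the chain rule,
\begin{align*}
0 &\equiv \partial_{\x^j}(\Phi_s \circ \Psi)(\x) \\
  &= {\textstyle\sum_{l}}\, (\partial_{\phi^l}\Phi_s)(\Psi(\x))\, \partial_{\x^j}\Psi^l(\x),
\end{align*}
and seek a contradiction with (b) and (c). When $\x^j$ feeds only the $k$-th embedding component the sum collapses to the single product $(\partial_{\phi^k}\Phi_s)(\Psi(\x))\, \partial_{\x^j}\Psi^k(\x)$ of two real-analytic functions, neither identically zero by (b) and (c); since the zero set of a non-zero real-analytic function on a connected open set has Lebesgue measure zero and the density is strictly positive on the open $\mathcal{X}$, the product cannot vanish on all of $\mathcal{X}$, giving the contradiction. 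To reduce the general case to this one, I would use the layered party-network structure recalled in Section~\ref{analysis.sec}: track which embedding coordinates $\x^j$ reaches as it propagates through the input matrix $\Pm_m$ and the analytic hidden layers, and argue that a nontrivial analytic combination of the surviving chain-rule terms vanishing on $\mathcal{X}$ must vanish identically, again contradicted at a generic point.

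The step I expect to be the main obstacle is precisely this reduction: a single non-significant feature may influence several embedding components at once, so one must exclude that their contributions cancel inside $\Phi_s$. I anticipate handling it with analyticity and the open-domain/positive-density assumption alone -- the chain-rule identity forces a nontrivial analytic relation among the functions $(\partial_{\phi^l}\Phi_s) \circ \Psi$ and $\partial_{\x^j}\Psi^l$ to hold on an open subset of $\mathcal{X}$, hence on all of $\mathcal{X}$, and one then contradicts it at a point where $(\partial_{\phi^k}\Phi_s) \circ \Psi$ and $\partial_{\x^j}\Psi^k$ are simultaneously nonzero, which exists since each is analytic and not identically zero. If a gap remains, the further leverage available is that in every application of Proposition~\ref{sig.prop} the parameter $\bm\Theta$ is a risk minimizer ($\hat{\bm\Theta}$ from Stage~1, or $\tilde{\bm\Theta} \in \mathcal{T}^*$) whose party sub-network computes no redundant embedding coordinates; the extension to $M > 1$ is then routine, as the server-side argument is agnostic to the number of parties.
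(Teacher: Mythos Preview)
Your opening contradiction is exactly the paper's entire proof: assume $\h(\thetab_m;\x)^k \neq \h(\thetab_m;g^j(\x,r))^k$; the paper then asserts ``since component $k$ is significant, $f(\thetab_s;\h(\thetab_m;\x)) \neq f(\thetab_s;\h(\thetab_m;g^j(\x,r)))$,'' contradicting $j\in\z$. No analyticity, no chain rule---the paper treats that implication as immediate.

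You are right to worry about the step the paper skips: perturbing $\x^j$ may shift several embedding coordinates at once, and significance of the single slot $k$ (Definition~\ref{sig.def}) says nothing about what happens when $\phi^k$ moves jointly with other $\phi^l$. But your analyticity repair does not close this gap. Locating a point where $(\partial_{\phi^k}\Phi_s)\circ\Psi$ and $\partial_{\x^j}\Psi^k$ are both nonzero does not contradict $\sum_l(\partial_{\phi^l}\Phi_s)(\Psi(\x))\,\partial_{\x^j}\Psi^l(\x)\equiv 0$, because the remaining summands can cancel the $k$-th term identically. A concrete instance within the paper's network class: $\Psi(\x)=(\x^1,\x^1+\x^2)$ and $\Phi_s(\phi)=\phi^2-\phi^1$, so $f(\x)=\x^2$. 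Here $j=1\in\z$, both embedding coordinates are significant for $\Phi_s$ (so $k=1\in\s_s$), yet $\h(\x)^1=\x^1$ depends on $\x^1$. Analyticity cannot rescue the claim in this example, and your ``generic point'' argument fails for precisely the reason you anticipated. Your fallback---restricting to parameters whose party sub-network carries no redundant coordinates---points toward a hypothesis that would rule out such cancellation, but it is not part of the stated proposition, and the paper's proof does not supply it either.
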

Informally, Proposition~\ref{sig.prop} states that significant 
embedding component values are unchanged by non-significant 
features, and can \emph{only} be changed by significant features.
\begin{proof}
Suppose that
$\h(\thetab_m; \x)^k \neq \h(\thetab_m; g^j(\x, r))^k$
for some significant embedding component $k \in \s_s$, non-significant feature $j \in \z$, and $r \in \mathbb{R}$.
By our supposition and
since component $k$ is significant, 
$f(\thetab_s ; \h(\thetab_m ; \x )) \neq f(\thetab_s ; \h(\thetab_m ; g^j(\x ; r)))$ for some value $r \in \mathbb{R}$. 
This contradicts the fact that $j$ is a non-significant feature.
\end{proof}

Utilizing Proposition~\ref{sig.prop}, we can prove (a) and (b).
\begin{lemma} \label{sig.lemma}
Let $\tilde{\bm\Theta} = [\tilde{\thetab}_s^{\top}, \tilde{\thetab}_m^{\top}]^{\top} \in \mathcal{T}^*$.
Let $\mathcal{K}_m$ be the significant embedding components for $f(\tilde{\thetab}_s ; \h(\tilde{\thetab}_m))$.
Let $\thetab_m = \argmin_{\thetab'_m} H(\thetab'_m ; \tilde{\thetab}_m ; \mathcal{K}_m)$.
Let $\s$ and $\z$ be the significant and non-significant features for $f(\bm\Theta^{\true})$.
Let $\s_h$ and $\z_h$ be the significant and non-significant features for $e(\thetab_m ; \tilde{\thetab}_m ; \mathcal{K}_m)$ with parameters $\thetab_m$.
Then: $$[\tilde{\thetab}_s, \thetab_m] \in \mathcal{T}^*,~
\s_h = \s, \text{ and } \z_h = \z.$$
\end{lemma}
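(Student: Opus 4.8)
The plan is to exploit that the proxy objective is nonnegative and is already driven to $0$ by $\tilde{\thetab}_m$ itself. Since $H(\thetab'_m;\tilde{\thetab}_m;\mathcal{K}_m)=\mathbb{E}_{(\x,y)\sim\mathcal{P}_{\X,\y}}\big[\sum_{k\in\mathcal{K}_m}(\h(\thetab'_m;\x_m)^k-\h(\tilde{\thetab}_m;\x_m)^k)^2\big]\ge 0$ and vanishes at $\thetab'_m=\tilde{\thetab}_m$, the minimizer $\thetab_m$ also attains $H(\thetab_m;\tilde{\thetab}_m;\mathcal{K}_m)=0$, so $\h(\thetab_m;\x_m)^k=\h(\tilde{\thetab}_m;\x_m)^k$ for every $k\in\mathcal{K}_m$ and $p_{\X}$-almost every $\x_m$. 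Because $\h$ is continuous (affine maps composed with analytic $\zeta$'s, Assumption~\ref{analytic.assum}), the integrand is continuous and nonnegative, and $p_{\X}$ is strictly positive on the open domain $\mathcal{X}$ (Assumption~\ref{data.assum}), a standard argument upgrades this to equality for \emph{all} $\x_m\in\mathcal{X}$. Hence the $\mathcal{K}_m$-restricted embedding maps of $\thetab_m$ and of $\tilde{\thetab}_m$ are the same function of $\x_m$, and $e(\thetab_m;\tilde{\thetab}_m;\mathcal{K}_m;\cdot)$ depends on $\x_m$ only through this common map; this is the object I will identify $\s_h,\z_h$ with.

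For $[\tilde{\thetab}_s,\thetab_m]\in\mathcal{T}^*$ (which also discharges sketch-claim (a)): by the definition of $\mathcal{K}_m$ via Definition~\ref{sig.def}, every embedding coordinate outside $\mathcal{K}_m$ is non-significant for $\h_s(\tilde{\thetab}_s;\cdot)$, so — changing such coordinates one at a time — $\h_s(\tilde{\thetab}_s;\cdot)$ depends on its input only through the coordinates in $\mathcal{K}_m$. Since $\h(\thetab_m;\x)$ and $\h(\tilde{\thetab}_m;\x)$ agree on all of $\mathcal{K}_m$, we get $f([\tilde{\thetab}_s,\thetab_m];\x)=f(\tilde{\bm\Theta};\x)$ for all $\x$, hence $R([\tilde{\thetab}_s,\thetab_m])=R(\tilde{\bm\Theta})=R(\bm\Theta^{\true})$ because $\tilde{\bm\Theta}\in\mathcal{T}^*$.

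For $\s_h=\s$ and $\z_h=\z$ (sketch-claim (b)), I first reduce to $f(\tilde{\bm\Theta})$: writing $R(\bm\Theta)=\mathbb{E}[(f(\bm\Theta;\x)-f(\bm\Theta^{\true};\x))^2]+\sigma^2$ (the cross term vanishes since $\epsilon$ is zero-mean and independent of $\x$, Assumption~\ref{data.assum}), the equality $R(\tilde{\bm\Theta})=R(\bm\Theta^{\true})$ forces $f(\tilde{\bm\Theta};\x)=f(\bm\Theta^{\true};\x)$ $p_{\X}$-a.e., hence everywhere by continuity; so $\s,\z$ are the significant/non-significant features of $f(\tilde{\bm\Theta})$ and $\mathcal{K}_m$ is its set of significant embedding components. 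Then $\z\subseteq\z_h$: for $j\in\z$, Proposition~\ref{sig.prop} applied to $\tilde{\bm\Theta}$ (with its $\s_s=\mathcal{K}_m$) gives $\h(\tilde{\thetab}_m;\x)^k=\h(\tilde{\thetab}_m;g^j(\x,r))^k$ for all $k\in\mathcal{K}_m,\ r\in\mathbb{R}$, and by the first step the same holds for $\thetab_m$, so $e(\thetab_m;\tilde{\thetab}_m;\mathcal{K}_m;\cdot)$ is invariant under $g^j(\cdot,r)$. And $\s\subseteq\s_h$: for $j\in\s$ pick $\x,r$ with $f(\tilde{\bm\Theta};\x)\neq f(\tilde{\bm\Theta};g^j(\x,r))$; since $\h_s(\tilde{\thetab}_s;\cdot)$ sees its input only through $\mathcal{K}_m$, some $k\in\mathcal{K}_m$ has $\h(\tilde{\thetab}_m;\x)^k\neq\h(\tilde{\thetab}_m;g^j(\x,r))^k$, hence also for $\thetab_m$, so $j$ is significant for $e$. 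Since $\{\s,\z\}$ and $\{\s_h,\z_h\}$ each partition the feature set, the two inclusions give $\s_h=\s$ and $\z_h=\z$.

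The main obstacle I expect is the reverse inclusion $\s\subseteq\s_h$, and, hand in hand with it, fixing the right reading of ``significant features for $e$'': at the minimizer the value of $e$ is identically zero in $\x_m$, so the statement must be understood through the $\mathcal{K}_m$-restricted embedding map that $e$ tracks (which, by the first step, is $\tilde{\thetab}_m$'s map). The real content is that a feature that genuinely moves $f(\tilde{\bm\Theta})$ cannot be invisible at the level of the significant embedding components — this is precisely where Proposition~\ref{sig.prop} together with the factorization of $\h_s(\tilde{\thetab}_s;\cdot)$ through $\mathcal{K}_m$ enters — and, conversely, that passing from $f(\bm\Theta^{\true})$ to $e$ introduces no spurious sensitivity. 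The remaining care is in the two almost-everywhere-to-everywhere upgrades, which rely essentially on positivity of $p_{\X}$ on an open domain and on continuity/analyticity (Assumptions~\ref{data.assum}--\ref{analytic.assum}); those hypotheses are the ones one cannot drop.
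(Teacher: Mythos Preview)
Your proposal is correct and follows essentially the same route as the paper's proof: show the minimizer matches $\tilde{\thetab}_m$ on the $\mathcal{K}_m$-coordinates, deduce $[\tilde{\thetab}_s,\thetab_m]\in\mathcal{T}^*$, transfer significance from $f(\bm\Theta^{\true})$ to $f(\tilde{\bm\Theta})$, and then establish the two inclusions $\z\subseteq\z_h$ and $\s\subseteq\s_h$ via Proposition~\ref{sig.prop}. Your treatment is in fact more careful than the paper's in two places: you make the a.e.-to-everywhere upgrade explicit (the paper simply asserts equality ``for any input''), and you flag and resolve the interpretive issue that $e(\thetab_m;\tilde{\thetab}_m;\mathcal{K}_m;\cdot)\equiv 0$ at the minimizer---the paper sidesteps this by tacitly varying only the first $\x$-argument in $e$, which amounts to exactly your reading of significance through the $\mathcal{K}_m$-restricted embedding map.
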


The proof of Lemma~\ref{sig.lemma} is given in Appendix~\ref{lemmasig.sec}.
Lemma~\ref{sig.lemma} proves that $H(\cdot)$ can
serve as a proxy for $R(\cdot)$ if the pre-trained model parameters $\hat{\bm\Theta}$ are in the optimal set $\mathcal{T}^*$
and the set of selected embedding components $\mathcal{K}_m$ contains only 
the significant embedding components. 

Next, we consider objective (c), and prove that the server can find
the significant embedding components required for (a) and (b).
We first define $\tilde{\bm\Theta} = \argmin_{\bm\Theta \in 
\mathcal{T}^*} \| \bm\Theta - \hat{\bm\Theta} \|_2$ as the closest
model in $\mathcal{T}^*$ to the pre-trained parameters $\hat{\bm\Theta}$.
The server's goal is to find
the set of significant embedding components in $\tilde{\bm\Theta}$,
which the parties can then use to remove their non-significant features.
We define the set of server model parameters in $\mathcal{T}^*$ with 
non-significant embedding component weights set to zero:
$$\mathcal{S}_{\phi}^* = \{\thetab_s : \exists \thetab_m \text{ s.t. } 
[\thetab_s, \thetab_m] \in \mathcal{T}^* \text{ and } \V_s = \zero \}$$
where $\V_s$ are the weights on non-significant embedding components in 
$\tilde{\bm\Theta}$.
If $d(\bar{\thetab}_s, \mathcal{S}_{\phi}^*) \rightarrow 0$, the server finds
the set of significant embedding components, completing objective (c).
We bound this distance in the following theorem.
\begin{theorem} \label{server.thm}
Given a pre-trained model $\hat{\bm\Theta}$ defined by \eqref{pretrain.eq},
for any $\delta > 0$, there exists $N \geq N_0(\delta)$ such that:
    \begin{align*}
        d(\bar{\thetab}_s, \mathcal{S}_{\phi}^*)
        = O \left( \frac{\log N}{\lambda_s\sqrt{N}}
    + \left(\frac{\log N}{\sqrt{N}} + \lambda_s^{\nu / (\nu - 1)} \right)^{1/\nu} \right)
    \end{align*}
    with probability $1-\delta$.
\end{theorem}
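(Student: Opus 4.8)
The plan is to recognize \eqref{server_lasso.eq} as a centralized group-lasso neural-network estimation problem in disguise --- with the pre-trained embeddings $\bm\phi_m^{(i)} \coloneqq \h_m(\hat\thetab_m ; \x_m^{(i)})$ playing the role of ``input features'' and the server network $\h_s$ the role of the fitted model --- and then to transport the argument of \citet{DinhH20Neurips} to it. The one genuinely new ingredient is that the sub-problem's generating parameters reproduce the labels only approximately, because $\hat{\bm\Theta}$ is an empirical rather than a population risk minimizer. First I would record the regularity facts that legitimize the reduction: $\thetab_s \mapsto \h_s(\thetab_s ; \h_1(\hat\thetab_1 ; \cdot), \dots, \h_M(\hat\thetab_M ; \cdot))$ is analytic, since every $\zeta_m^j$ and $\h_m$ is analytic (Assumption~\ref{analytic.assum}), so both $R_N(\thetab_s, \h_1(\hat\thetab_1), \dots)$ and its expectation are analytic in $\thetab_s$; and, by Assumption~\ref{data.assum}, any $\bm\Theta \in \mathcal{T}^*$ attains the minimal expected squared error $\sigma^2$, so $\mathbb{E}_{\x}[(f(\bm\Theta ; \x) - f(\bm\Theta^{\true} ; \x))^2] = 0$, which together with positivity and continuity of $p_{\X}$ on the open domain $\mathcal{X}$ and analyticity forces $f(\bm\Theta ; \x) = f(\bm\Theta^{\true} ; \x)$ for every $\x \in \mathcal{X}$. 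In particular the server block $\tilde\thetab_s$ of $\tilde{\bm\Theta}$, composed with the embeddings $\h_m(\tilde\thetab_m)$, realizes $f(\bm\Theta^{\true})$ exactly, and $\tilde\thetab_s$ lies in the ``risk-minimizing server set'' $\mathcal{S}^* \coloneqq \{\thetab_s : \exists\,\thetab_m,\ [\thetab_s, \thetab_m] \in \mathcal{T}^*\}$.

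Next I would quantify the error incurred by using $\hat\thetab_m$-embeddings instead of $\tilde\thetab_m$-embeddings. A standard analytic-ERM bound --- uniform concentration of $R_N$ to $R$ over a compact parameter set (Gaussian tails of $\epsilon$, boundedness from Assumption~\ref{data.assum}) combined with a {\L}ojasiewicz inequality for the analytic $R$ on $\mathcal{T}^*$ --- gives $d(\hat{\bm\Theta}, \mathcal{T}^*) = O((\log N / N)^{c})$ with probability $1 - \delta$ for an exponent $c>0$ tied to the same {\L}ojasiewicz behavior that produces $\nu$. Lipschitz continuity of the analytic maps $\h_m$ on compacts then gives $|\h_s(\hat\thetab_s ; \bm\phi^{(i)}) - f(\bm\Theta^{\true} ; \x^{(i)})| = O(d(\hat{\bm\Theta}, \mathcal{T}^*))$ at every sample; i.e. $y^{(i)} = \h_s(\hat\thetab_s ; \bm\phi^{(i)}) + \epsilon^{(i)} + \xi^{(i)}$ with a deterministic model-mismatch $\xi^{(i)}$ of that polynomially small order. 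This is the reduction: the server faces group-lasso ERM with generating parameter $\hat\thetab_s$, Gaussian noise $\epsilon$, and small bias $\xi$.

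Then I would run the \citet{DinhH20Neurips} machinery on this reduced problem. From the optimality of $\bar\thetab_s$ in \eqref{server_lasso.eq} the basic inequality $R_N(\bar\thetab_s, \dots) - R_N(\hat\thetab_s, \dots) \le \lambda_s (G(\hat\thetab_s) - G(\bar\thetab_s))$ controls the empirical excess risk; a Dudley/covering bound on the noise cross-term $\tfrac1N \sum_i (\epsilon^{(i)} + \xi^{(i)})(\h_s(\bar\thetab_s ; \bm\phi^{(i)}) - \h_s(\hat\thetab_s ; \bm\phi^{(i)}))$, uniform over the compact parameter space, produces the $\tfrac{\log N}{\sqrt N}$ terms (the $\xi$-part being lower order by the previous step); passing from empirical to population risk through the same uniform bound and then inverting the {\L}ojasiewicz inequality converts the excess-risk bound into $d(\bar\thetab_s, \mathcal{S}^*) = O((\tfrac{\log N}{\sqrt N} + \lambda_s^{\nu/(\nu-1)})^{1/\nu})$. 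Finally, the stationarity (KKT) condition $\zero \in \nabla_s R_N(\bar\thetab_s, \dots) + \lambda_s\,\partial G(\bar\thetab_s)$, read on the embedding coordinates that are non-significant in $\tilde{\bm\Theta}$ --- whose weights do not affect the risk near $\mathcal{S}^*$, so the only force on them is the $\lambda_s$-penalty --- forces $\|\pi(\bar\thetab_s)^l\|_2 = O(\tfrac{\log N}{\lambda_s \sqrt N})$, upgrading ``close to $\mathcal{S}^*$'' to ``close to $\mathcal{S}_\phi^*$'' and contributing the extra $\tfrac{\log N}{\lambda_s \sqrt N}$ term. Summing the two pieces gives the claimed rate once $N \ge N_0(\delta)$.

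The step I expect to be the main obstacle is the second one --- cleanly absorbing the fact that $\hat{\bm\Theta}$ only approximately minimizes the population risk. Two things must be checked: that $d(\hat{\bm\Theta}, \mathcal{T}^*)$ is indeed of order no larger than the $\tfrac{\log N}{\sqrt N}$ and $(\cdot)^{1/\nu}$ terms in the target bound (handled by the analytic-ERM rate above, with $\nu$ chosen consistently across stages), and that the induced distribution of the embeddings $\bm\phi_m$ --- which lives on the possibly lower-dimensional, curved image of $\h_m$ --- still supports the regularity hypotheses \citet{DinhH20Neurips} impose. The latter is handled by observing that Definition~\ref{sig.def} asks only for \emph{exact} invariance and that analyticity is preserved under composition, so the significance structure and the {\L}ojasiewicz property of the risk transfer to the sub-problem without needing a full-dimensional input density; this is also exactly the point at which, for $M>1$, one additionally needs the link between significant embedding components and significant party features (Proposition~\ref{sig.prop}).
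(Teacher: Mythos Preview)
Your overall strategy is correct and uses the same core ingredients as the paper --- analyticity, a \L{}ojasiewicz inequality for the population risk, uniform concentration $|R_N-R|=O(\log N/\sqrt N)$, and a bound on $d(\hat{\bm\Theta},\mathcal{T}^*)$ from the pre-training step --- but the packaging differs in two places.

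First, you recast the server problem as a Dinh--Ho estimation with ``biased noise'' $y^{(i)}=\h_s(\hat\thetab_s;\bm\phi^{(i)})+\epsilon^{(i)}+\xi^{(i)}$, which forces you to worry about whether the induced distribution of the embeddings $\bm\phi_m$ satisfies the input-density hypotheses of \citet{DinhH20Neurips}. The paper avoids this detour entirely: it never treats the embeddings as a new input space. Instead it stays in the original parameter space and simply uses Lipschitz continuity of the analytic risk to write, for any $\thetab_s$, $|R(\thetab_s;\h(\tilde\thetab_m))-R(\thetab_s;\h(\hat\thetab_m))|\le L\,d(\hat{\bm\Theta},\mathcal{T}^*)$. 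The \L{}ojasiewicz inequality is applied to $\thetab_s\mapsto R(\thetab_s;\h(\tilde\thetab_m))-R(\tilde\thetab_s;\h(\tilde\thetab_m))$, whose zero set contains $\tilde\thetab_s$; the passage to the $\hat\thetab_m$-embeddings and to $R_N$ then costs exactly $2L\,d(\hat{\bm\Theta},\mathcal{T}^*)+O(\log N/\sqrt N)$. Combined with the separate pre-training lemma $d(\hat{\bm\Theta},\mathcal{T}^*)=O(\log N/\sqrt N)$, this gives $d(\bar\thetab_s,\{\tilde\thetab_s\})$ directly, so your concern about supporting the Dinh--Ho hypotheses on a curved embedding manifold never arises.

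Second, for the sparsity piece you invoke KKT and argue that non-significant coordinates ``do not affect the risk near $\mathcal{S}^*$.'' That is loose as stated: those weights \emph{can} affect the risk away from the zero-level set, so you would still need to bound the partial gradient by the noise and mismatch terms. The paper instead uses a comparison argument: by optimality of $\bar\thetab_s$ against the competitor $\phi(\tilde\thetab_s)$ (which has $\V_s=\zero$ and, by Lemma~\ref{Hstar.lemma}, stays in $\mathcal{S}^*$), one obtains $\lambda_s\sum_l\|\V_s^l\|_2 \le R_N(\phi(\tilde\thetab_s);\h(\hat\thetab_m))-R_N(\bar\thetab_s;\h(\hat\thetab_m))+\lambda_s(K(\tilde\thetab_s)-K(\bar\thetab_s))$, and the right-hand side is bounded by the same $\log N/\sqrt N$ and $d(\hat{\bm\Theta},\mathcal{T}^*)$ terms plus a Lipschitz difference $\lambda_s C\|\tilde\thetab_s-\bar\thetab_s\|_2$. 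Dividing by $\lambda_s$ gives the $\log N/(\lambda_s\sqrt N)$ term cleanly, without any stationarity analysis. A final triangle inequality $d(\bar\thetab_s,\mathcal{S}_\phi^*)\le\|\bar\thetab_s-\tilde\thetab_s\|_2+\|\V_{\bar\thetab_s}\|_2+C\|\bar\thetab_s-\tilde\thetab_s\|_2$ assembles the two pieces. Your route would close, but the paper's is shorter and sidesteps both obstacles you flagged.
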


The proof of Theorem~\ref{server.thm} can be found in Appendix~\ref{serverthm.sec}. The bound in Theorem~\ref{server.thm} indicates 
that if $\lambda_s \sim N^{-1/4}$, then non-significant weights will 
approach zero at a polynomial rate in terms of the number of training samples $N$;
if the regularization parameter is set appropriately, 
then the set of significant embedding components are found.

Finally, since the server finds the set of significant embedding components allowing $H(\cdot)$ to be a proxy of $R(\cdot)$, 
we can prove objective (d).
We define the set of party models in $\mathcal{T}^*$ 
with non-significant feature weights set to zero:
$$\mathcal{C}_{\phi}^* = \{\thetab_m : \exists \thetab_s \text{ s.t. } 
[\thetab_s, \thetab_m] \in \mathcal{T}^* \text{ and } \V_m = \zero \}$$
where $\V_m$ are the input weights on non-significant features in the 
generating model $\bm\Theta^{\true}$.
If $d(\bar{\thetab}_m, \mathcal{C}_{\phi}^*) \rightarrow 0$, then the party removes the non-significant features, completing objective (d).
We bound this distance in the following theorem.
\begin{theorem} \label{client.thm}
Let $[\tilde{\thetab}_s^{\top}, \tilde{\thetab}_m^{\top}]^{\top} = \argmin_{\bm\Theta \in \mathcal{T}^*} \| \bm\Theta - \hat{\bm\Theta} \|_2$
where $\hat{\bm\Theta}$ are pre-trained model parameters defined in \eqref{pretrain.eq}.
If $\mathcal{K}_m$ in \eqref{client_lasso.eq} is the set of
significant embedding components for $f(\tilde{\thetab}_s ; \h(\tilde{\thetab}_m))$,
then for any $\delta > 0$, there exists some number of samples $N \geq N_0(\delta)$ such that:
    \begin{align*}
        &d(\bar{\thetab}_m, \mathcal{C}_{\phi}^*)
        \leq O\left( \frac{\log N}{\lambda_m\sqrt{N}} 
        + \left(\frac{\log N}{\sqrt{N}} + \lambda_m^{\nu / (\nu-1)}\right)^{1/\nu} \right)
    \end{align*}
    with probability $1-\delta$.
\end{theorem}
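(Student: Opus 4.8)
The plan is to follow the template of the server‑side argument (Theorem~\ref{server.thm}) and of the centralized group‑lasso analysis of \citet{DinhH20Neurips}, but with the true empirical risk $R_N$ replaced everywhere by the client's proxy $H_N(\,\cdot\,;\hat{\thetab}_m;\mathcal{K}_m;\X_m)$ and with Lemma~\ref{sig.lemma} used to transport the significance structure of $f(\bm\Theta^{\true})$ onto this proxy. Write $H(\thetab_m)\coloneqq H(\thetab_m;\hat{\thetab}_m;\mathcal{K}_m)$ for the population version and $\Gamma_m\coloneqq\argmin_{\thetab_m}H(\thetab_m)$ with value $H^*$. Since $\hat{\bm\Theta}$ solves \eqref{pretrain.eq}, the analyticity‑plus‑concentration argument already used for the server gives $\|\hat{\thetab}_m-\tilde{\thetab}_m\|$ within the stated rate, so up to a lower‑order term (absorbed because $\log N/\sqrt N$ already sits inside the second term of the bound) one may reason with $\tilde{\thetab}_m$ as the reference embedding. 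By Lemma~\ref{sig.lemma}, any minimizer $\thetab_m^{\star}$ of $H(\,\cdot\,;\tilde{\thetab}_m;\mathcal{K}_m)$, paired with $\tilde{\thetab}_s$, lies in $\mathcal{T}^*$, and $e(\thetab_m^{\star};\tilde{\thetab}_m;\mathcal{K}_m)$ has exactly the significant and non‑significant feature sets $\s,\z$ of the generating model. Since $H$ depends on $\thetab_m$ only through the components in $\mathcal{K}_m$, and those components do not depend on the non‑significant features (Proposition~\ref{sig.prop}), zeroing the non‑significant input weights of $\thetab_m^{\star}$ leaves every component in $\mathcal{K}_m$ — hence $H$, and hence together with $\tilde{\thetab}_s$ the risk — unchanged; this yields a reference point $\thetab_m^{\dagger}\in\mathcal{C}_{\phi}^*\cap\Gamma_m$ on which $G$ charges only the finitely many significant‑feature weight groups.

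Next I would extract the geometry. Under Assumption~\ref{analytic.assum} the embedding map, and thus $e$ and $H$, are analytic in $\thetab_m$; restricting (as in \citet{DinhH20Neurips}) to a compact parameter region on which $\bar{\thetab}_m$ and the relevant comparisons live, $H$ obeys a {\L}ojasiewicz‑type inequality around $\Gamma_m$: for the exponent $\nu>1$ of the statement there is $c>0$ with $H(\thetab_m)-H^*\ge c\,d(\thetab_m,\Gamma_m)^{\nu}$ locally. On that same region, the i.i.d.\ sampling of Assumption~\ref{data.assum} together with boundedness of $e$ gives, via a uniform maximal inequality, $\sup_{\thetab_m}|H_N(\thetab_m)-H(\thetab_m)|=O(\log N/\sqrt N)$ with probability $1-\delta$ for all $N\ge N_0(\delta)$.

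With these in hand I would run the usual basic inequality. Optimality of $\bar{\thetab}_m$ for \eqref{client_lasso.eq} against $\thetab_m^{\dagger}$ gives $H_N(\bar{\thetab}_m)+\lambda_m G(\bar{\thetab}_m)\le H_N(\thetab_m^{\dagger})+\lambda_m G(\thetab_m^{\dagger})$; replacing $H_N$ by $H$ up to the $O(\log N/\sqrt N)$ fluctuation and splitting $G(\bar{\thetab}_m)$ into its significant‑feature part (comparable to $G(\thetab_m^{\dagger})$ up to $O(d(\bar{\thetab}_m,\Gamma_m))$ by Lipschitzness of the $L_{2,1}$ norm) and its non‑significant‑feature part yields, on one hand, $H(\bar{\thetab}_m)-H^*\le O(\log N/\sqrt N)+O(\lambda_m\,d(\bar{\thetab}_m,\Gamma_m))$. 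Feeding this into the {\L}ojasiewicz inequality $c\,d(\bar{\thetab}_m,\Gamma_m)^{\nu}\le H(\bar{\thetab}_m)-H^*$ and applying Young's inequality (conjugate exponents $\nu$ and $\nu/(\nu-1)$) to the cross term isolates $d(\bar{\thetab}_m,\Gamma_m)=O\!\left(\left(\log N/\sqrt N+\lambda_m^{\nu/(\nu-1)}\right)^{1/\nu}\right)$. On the other hand, a refinement of the same basic inequality pins the non‑significant‑feature weight block of $\bar{\thetab}_m$ at $O(\log N/(\lambda_m\sqrt N))$ — exactly as in the off‑support analysis of \citet{DinhH20Neurips} — since those weights inflate $G$ but contribute at most $O(\log N/\sqrt N)$ worth of empirical‑proxy improvement. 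Finally, by the Proposition~\ref{sig.prop} argument, zeroing the non‑significant weight block of the nearest point of $\Gamma_m$ keeps it in $\Gamma_m$ and lands it in $\mathcal{C}_{\phi}^*$, so $d(\bar{\thetab}_m,\mathcal{C}_{\phi}^*)\le d(\bar{\thetab}_m,\Gamma_m)+O(\log N/(\lambda_m\sqrt N))$, which is the claimed bound.

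The step I expect to be the main obstacle is reconciling the analytic‑geometry machinery with the statistical error on an a priori unbounded parameter space: rigorously confining $\bar{\thetab}_m$ to a compact region on which $e$ is bounded and the {\L}ojasiewicz inequality holds with a uniform constant, and in parallel controlling the discrepancy between the theoretically convenient reference $\tilde{\thetab}_m$ and the pre‑trained reference $\hat{\thetab}_m$ that \eqref{client_lasso.eq} actually uses — i.e.\ showing that, on the region that matters, the client's proxy $H_N$ is as faithful a stand‑in for the population risk as Lemma~\ref{sig.lemma} suggests. Everything downstream is a transcription of the server‑side proof and of \citet{DinhH20Neurips}.
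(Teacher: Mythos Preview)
Your proposal is correct and follows essentially the same approach as the paper: {\L}ojasiewicz for the analytic proxy $H(\,\cdot\,;\tilde{\thetab}_m;\mathcal{K}_m)$, uniform concentration of $H_N$ about $H$, the basic inequality from optimality of $\bar{\thetab}_m$ in \eqref{client_lasso.eq} compared against $\phi(\tilde{\thetab}_m)$, Young's inequality to decouple the $\lambda_m$ cross term, a separate off-support bound on the non-significant weight block, and a triangle-inequality combination to reach $\mathcal{C}_{\phi}^*$. The one point the paper makes explicit that you only gesture at is the passage from the actually-used reference $\hat{\thetab}_m$ to the ideal reference $\tilde{\thetab}_m$: the paper handles it by the Lipschitz bound $|H(\thetab_m;\hat{\thetab}_m)-H(\thetab_m;\tilde{\thetab}_m)|\le B\,d(\hat{\thetab}_m,\mathcal{C}^*)$ and then invokes the pre-training lemma $d(\hat{\bm\Theta},\mathcal{T}^*)=O(\log N/\sqrt N)$, which is exactly the ``lower-order term'' you anticipate absorbing.
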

The proof of Theorem~\ref{client.thm} can be found in Appendix~\ref{clientthm.sec}. Similar to the server's case, this bound goes to zero at a polynomial rate if $\lambda_m \sim N^{-1/4}$.
Note that for Theorem~\ref{client.thm} to hold, the server must provide 
the party with the set of significant embeddings.
Otherwise, the bound is not guaranteed.
In Section~\ref{exp.sec}, we explore the importance of the embedding
component selection stage in practice. 
In Appendix~\ref{extension.sec}, we extend
Theorems~\ref{server.thm} and~\ref{client.thm} to the case where $M>1$,
which can be combined to prove Theorem~\ref{main_feature.thm}.

\section{Experiments} \label{exp.sec}
We implement LESS-VFL by running a fixed number of iterations of Algorithm~\ref{vfl.alg} (standard VFL algorithm, see Appendix~\ref{algapp.sec}), then running 
Algorithm~\ref{fvfl.alg} to remove non-significant features, 
then continuing training with Algorithm~\ref{vfl.alg}.
\rev{We evaluate LESS-VFL on several datasets.
\begin{itemize}[leftmargin=*, nosep]
    \item \textbf{MIMIC-III} \cite{johnson2016mimic,Harutyunyan2019}: Hospital dataset consisting of time-series medical information on anonymized patients. Used to predict in-hospital mortality. Contains $14$,$681$ samples each with $712$ features.    
    \item \textbf{Activity} \cite{AnguitaGOPR13}: Time-series positional data on humans performing various activities. Used for multi-class classification of the current activity (walking, sitting, running, etc.).  Contains $7$,$352$ samples each with $560$ features.
    \item \textbf{Phishing} \cite{Dua:2019}: Dataset that provides relevant features for determining if a website is a phishing website (use of HTTP, TinyURL, forwarding, etc.). Contains $11$,$055$ samples each with $30$ features.
    \item \textbf{Gina} \cite{agnostic}: Hand-written two-digit images. Used for binary classification between even and odd numbers, meaning only the first digit is necessary for classification and the rest of the features are distractions. Contains $3$,$468$ samples each with $968$ features.
    \item \textbf{Sylva} \cite{agnostic}: Forest cover type information. Used for binary classification (Ponderosa pine vs. everything else). Similar to Gina, half the features are distractions; each sample has two records with relevant information for the target, while the other two are randomly chosen.  Contains $14$,$395$ samples each with $216$ features.
\end{itemize}}

For each dataset, we add $50\%$ more features that are
Gaussian noise. These spurious features act as our non-significant features, allowing us to measure how well LESS-VFL performs feature selection.
\rev{Note that not all the features in the original dataset are necessarily significant. The only features we know for sure are non-significant are the Gaussian noise features we add to each dataset. Thus, the final test accuracy is our indicator that we have correctly selected significant features in the dataset and trained a model that generalizes well.}

We compare LESS-VFL with the following VFL baselines.
\begin{itemize}[leftmargin=*, nosep]
    \item \textbf{VFL (Original)}: VFL as described in Algorithm~\ref{vfl.alg} \mbox{\textit{without}} spurious features in the datasets.
    \item \textbf{VFL (Spurious)}: Algorithm~\ref{vfl.alg} \textit{with} spurious features in the datasets.
    \item \textbf{Group Lasso}: Applies group lasso directly to the VFL model by approximately solving \eqref{lasso.eq} using P-SGD.
    \item \textbf{Local Lasso}: 
    This algorithm is the same as LESS-VFL with stage 2, embedding component selection, removed.
\end{itemize}
We restrict our evaluations to methods that do not require a fully trained 
VFL model as input, which excludes \citet{ZhangLHCZ22} and \citet{ChenDLWH22} from our comparison.
The feature selection portion of VFLFS~\cite{Feng22Group} employs 
group lasso, which we include in our evaluations.

\textbf{Training Details.}
\rev{For each dataset, we split both the original and Gaussian noise features evenly among
a set of parties} (three parties for Phishing, four parties otherwise). 
Each party's model is a 3-layer dense neural network,
and the server trains a linear model that takes the 
concatenation of party embeddings as input.
We run a grid search to determine regularization parameters for LESS-VFL, local lasso, and group lasso, and the number of pre-training epochs for LESS-VFL and local lasso.
We chose parameters that achieved the highest training accuracy and 
removed at least $80\%$ of spurious features.
We use the ADAM optimizer with a learning rate of $0.01$ 
when employing Algorithm~\ref{vfl.alg} in
VFL (Original and Spurious) and pre-training and post feature selection in local lasso and LESS-VFL.
We run $150$ epochs of P-SGD for embedding component selection in LESS-VFL and feature selection in LESS-VFL and local lasso,
which we found to be a sufficient amount of iterations for the 
training loss to plateau.

\begin{table}
    \caption{Communication cost to achieve $90\%$ of baseline test accuracy and remove at least $80\%$ of the spurious features. The value shown is the average of $5$ runs $\pm$ the standard deviation.}
\label{comm.table}
\vskip 0.1in
\small
\centering
\resizebox{0.49\textwidth}{!}{
\begin{tabular}{lccc}
    \toprule 
    \multirow{3}{*}{\textbf{Dataset}}& \multicolumn{3}{c}{\textbf{Communication Cost (MB)}}  \\ 
      \cmidrule(rl){2-4}  
      & \textbf{Group Lasso} & \textbf{Local Lasso} & \textbf{LESS-VFL (ours)} \\
\midrule

MIMIC-III & 57.35 $\pm$ 0.00 & 30.47 $\pm$ 1.79 & \textbf{7.17 $\pm$ 0.00} \\
Activity & 322.73 $\pm$ 61.32 & 26.56 $\pm$ 5.83 & \textbf{21.17 $\pm$ 0.00} \\

Phishing & 95.22 $\pm$ 1.89 & 8.10 $\pm$ 3.40 & \textbf{3.99 $\pm$ 0.75} \\
Gina & 13.55 $\pm$ 0.00 & 1.90 $\pm$ 0.27 & \textbf{1.48 $\pm$ 0.26} \\

Sylva & 22.49 $\pm$ 0.00 & \textbf{5.62 $\pm$ 0.00} & \textbf{5.62 $\pm$ 0.00} \\
    \bottomrule
\end{tabular}}
\end{table}

\textbf{Communication cost.}
In Table~\ref{comm.table}, we compare the communication cost of reaching
a target test accuracy while removing at least $80\%$ of the spurious features.
We choose a target accuracy of $90\%$ of the maximum accuracy reached by VFL (Original).
In all cases, LESS-VFL meets these 
conditions with the lowest communication cost, reducing the communication cost when compared to group lasso. 
In the case of the Phishing dataset, LESS-VFL has ${\sim}20 \times$ lower
communication cost than group lasso.
LESS-VFL greatly reduces the cost of feature selection over group lasso 
by only communicating during pre-training, and once at the start of feature selection.
LESS-VFL also always achieves the same or lower communication cost than local lasso.
Local lasso forgoes embedding component selection, and
in most datasets, this led to higher communication cost. We explore this more in our next set of experiments.

\begin{figure}[t]
    \begin{subfigure}{0.23\textwidth}
        \centering
        \includegraphics[width=\textwidth]{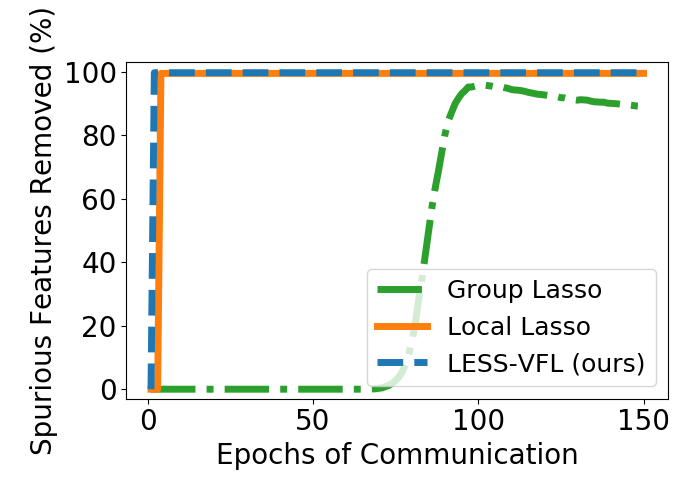}
        \vspace*{-7mm}
        \caption{Activity}
        \label{activitybar.fig}
    \end{subfigure}
    \hfill
    \begin{subfigure}{0.23\textwidth}
        \centering
        \includegraphics[width=\textwidth]{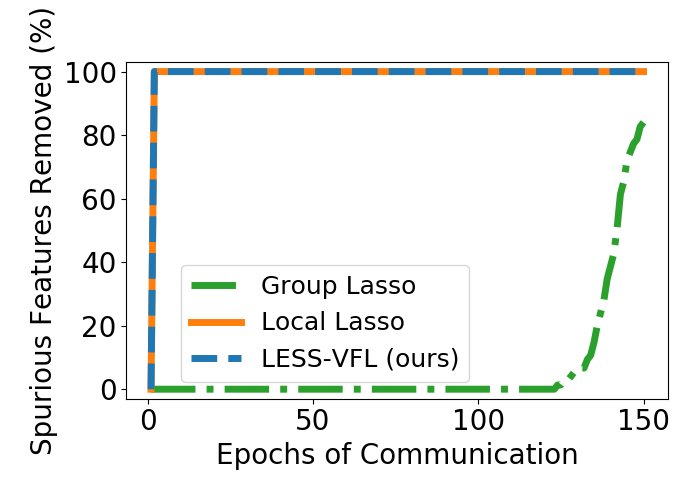}
        \vspace*{-7mm}
        \caption{Phishing}
        \label{phishingbar.fig}
    \end{subfigure}
    \hfill
    \caption{Communication rounds to remove spurious features. 
    The values shown is the average of $5$ runs.
    LESS-VFL and local lasso remove a similar percentage of spurious features after pre-training, though local lasso takes longer to reach high accuracy (see Table~\ref{comm_cost2.table}).
    Group Lasso gradually removes features while local lasso and LESS-VFL remove features with only a few rounds of communication after pre-training. 
    }
    \label{red.fig}
\end{figure}

In the remaining experiments, we seek to illustrate how LESS-VFL performs over the course of training.
We focus on two representative datasets (Activity and Phishing). 
We provide results for all datasets in Appendix~\ref{exp2.sec}.

\textbf{Feature removal.}
In Figure~\ref{red.fig}, we compare the percentage of spurious features removed using group lasso, local lasso, and LESS-VFL over the communication epochs.
Group lasso gradually removes features over the course of training, while 
local lasso and LESS-VFL remove features after a few rounds of communication for pre-training.
LESS-VFL benefits greatly from using local training without communication
to perform its feature selection.
Local lasso removes a similar percentage of features as LESS-VFL in about the same communication epochs.
However, we see in the next experiment that local lasso can require more 
communication to both reach high accuracy and remove spurious features.

\begin{figure}[t]
    \begin{subfigure}{0.49\textwidth}
        \centering
        \includegraphics[width=0.85\textwidth]{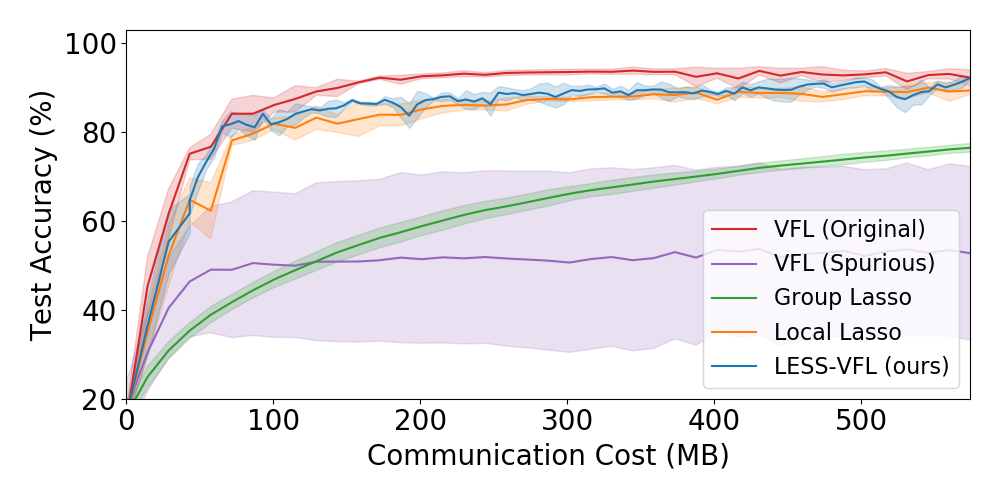}
        \vspace*{-3.5mm}
        \caption{Activity}
        \label{activityred5.fig}
    \end{subfigure}
    \hfill
    \begin{subfigure}{0.49\textwidth}
        \centering
        \includegraphics[width=0.85\textwidth]{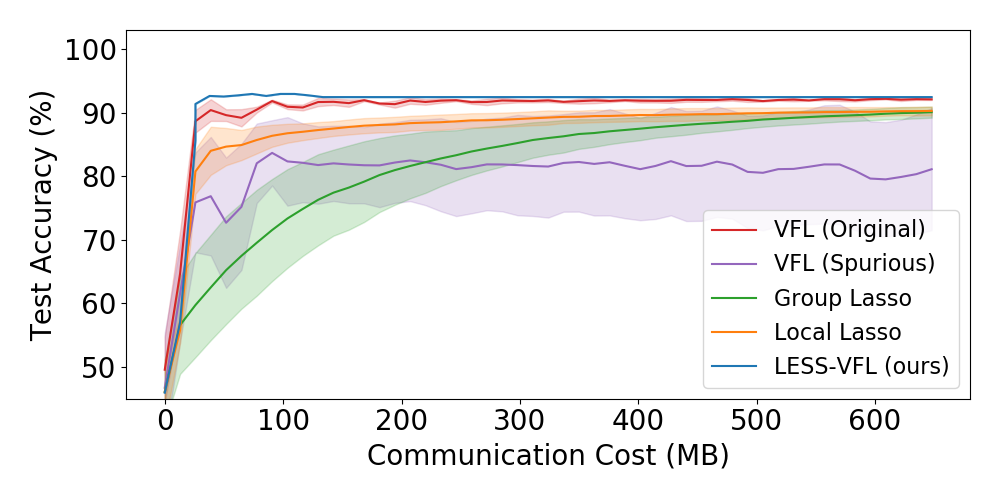}
        \vspace*{-3.5mm}
        \caption{Phishing}
        \label{phishingred5.fig}
    \end{subfigure}
    \hfill
    \vspace*{-5mm}
    \caption{Test accuracy plotted by communication cost. 
    VFL (Original) is trained \textit{without} spurious features, while all other methods are trained \textit{with} spurious features.
    The solid lines are the average of $5$ runs and the shaded region represents the standard deviation.}
    \label{acc.fig}
\end{figure}

\textbf{Accuracy.}
In Figure~\ref{acc.fig}, we plot the test accuracy against communication cost. 
The test accuracy of VFL (Spurious) in both datasets indicates
that VFL training without removing spurious features can have a drastic 
effect on the generalization.
In both cases, LESS-VFL achieves high accuracy faster than group lasso, 
and achieves similar accuracy to the baseline VFL algorithm without spurious features.
In fact, in the Phishing dataset, LESS-VFL performs better than the VFL 
(Original) baseline. This is due to LESS-VFL removing non-significant 
embedding components which reduce post feature selection communication cost.
Local lasso performs similarly to LESS-VFL in the Activity dataset, but 
local lasso requires a much higher communication cost to achieve high 
accuracy in the Phishing dataset.
In Figure~\ref{phishingred5.fig}, we can see local lasso has a lower model 
accuracy than LESS-VFL after feature selection (at ${\sim}25$ MB).
This reinforces that embedding component selection can improve model accuracy 
by both minimizing risk to refine server model parameters, and providing parties with important information for local feature selection.

\rev{\textbf{Uneven Features.}
For the previous experiments, we considered a case where all parties have the same percentage of Gaussian noise features.
We now consider a case where parties have an uneven distribution of Gaussian noise features: One party with $80\%$ additional Gaussian noise features, one with $25\%$, one with $10\%$, and one with no Gaussian noise features.
Table~\ref{comm_hetero.table} shows the communication cost of group lasso, local lasso, and LESS-VFL to reach $90\%$ of baseline VFL (Original) test accuracy while removing $80\%$ of the total spurious features.
We find that, in this heterogeneous setting, LESS-VFL still achieves high accuracy while removing spurious features, and does so with low communication cost. 

\begin{table}
    \caption{Experimental results with heterogeneous feature partitions. Communication cost to achieve $90\%$ of baseline test accuracy and remove at least $80\%$ of the spurious features. The value shown is the average of $5$ runs $\pm$ the standard deviation.}
\label{comm_hetero.table}
\vskip 0.1in
\small
\centering
\resizebox{0.49\textwidth}{!}{
\begin{tabular}{lccc}
    \toprule 
    \multirow{3}{*}{\textbf{Dataset}}& \multicolumn{3}{c}{\textbf{Communication Cost (MB)}}  \\ 
      \cmidrule(rl){2-4}  
      & \textbf{Group Lasso} & \textbf{Local Lasso} & \textbf{LESS-VFL (ours)} \\
\midrule
MIMIC-III & 57.35 $\pm$ 0.00 & \textbf{7.17 $\pm$ 0.00} & \textbf{7.17 $\pm$ 0.00} \\
Activity & 187.39 $\pm$ 52.61 & 24.77 $\pm$ 6.75 & \textbf{15.76 $\pm$ 3.09} \\

Phishing & 94.57 $\pm$ 1.94 & 5.51 $\pm$ 0.79 & \textbf{3.98 $\pm$ 0.74} \\
Gina & 13.55 $\pm$ 0.00 & 1.63 $\pm$ 0.33 & \textbf{1.35 $\pm$ 0.00} \\

Sylva & 21.93 $\pm$ 1.12 & \textbf{5.62 $\pm$ 0.00} & \textbf{5.62 $\pm$ 0.00} \\
    \bottomrule
\end{tabular}}
\end{table}}

\section{Conclusion} \label{conclusion.sec}
In this work, we proposed LESS-VFL, a communication-efficient method
for feature selection in vertical federated learning.
We analytically proved that LESS-VFL removes spurious features.
We experimentally showed that LESS-VFL can achieve comparable accuracy
and percentage of spurious features removed at reduced communication cost.
\rev{In the future, we seek to extend our analysis to non-analytic neural 
networks and adaptive group lasso.}

\section*{Acknowledgment}
This material is based upon work supported in part by the National Science Foundation under Grant CNS-1553340, and 
 by the Rensselaer-IBM AI Research Collaboration (\url{http://airc.rpi.edu}), part of the IBM AI Horizons Network.

\bibliography{references}
\bibliographystyle{icml2023}

\clearpage
\newpage
\appendix
\onecolumn

\section{Vertical Federated Learning Algorithm} \label{algapp.sec}

In Algorithm~\ref{vfl.alg}, we present pseudocode for standard
VFL training with neural networks~\citep{FDML, SplitNN}.

\begin{algorithm}[H]
    \begin{algorithmic}[1]
    \STATE {\textbf{Initialize:}} $\thetab_m^0$ for all parties $m$ and server model $\thetab_s^0$
    \FOR {$t \leftarrow 0, \ldots, T_1-1$}
        \STATE Randomly sample $\B \subset [N]$
        \FOR {$m \leftarrow 1, \ldots, M$ in parallel}
            \STATE Send $\h_m(\thetab_m^t ; \X^{(\B)}_m)$ to server %
        \ENDFOR
        \STATE $\Phi \leftarrow 
            \{\thetab_s^t, \h_1(\thetab_1^t ; \X^{(\B)}_m), 
            \ldots, \h_M(\thetab_M^t ; \X^{(\B)}_m)\}$
        \STATE $\thetab_s^{t+1} \leftarrow U(\thetab_s^t, \nabla_s R_{\B}(\Phi^t ; \y^{\B^{t}}))$  %
        \STATE Server sends $\nabla_{\h_m(\thetab_m^t)} R_{\B}(\Phi^t ; \y^{(\B)})$ 
        to each party
        \FOR {$m \leftarrow 1, \ldots, M$ in parallel}
            \STATE $\nabla_m R_{\B}(\Phi^t) = \nabla_{\theta_m} \h_m(\thetab_m^t)^\top \nabla_{\h_m(\thetab_m^t)} R_{\B}(\Phi^t)$
            \STATE $\thetab_m^{t+1} = U(\thetab_m^t, \nabla_m R_{\B}(\Phi^t))$
        \ENDFOR
    \ENDFOR
    \end{algorithmic}
    \caption{Vertical Federated Learning}
    \label{vfl.alg}
\end{algorithm}

The parties start by agreeing upon
a mini-batch samples $\B$, then sending their current
embeddings for the given mini-batch to the server.
We let $\X^{(\B)}$ and $\y^{(\B)}$ denote the training samples and labels
in the mini-batch, respectively.
The server updates its model using the
mini-batch partial derivative with respect to $\thetab_s$, denoted 
by $\nabla_s R_{\B}(\cdot)$, and some optimizer update rule 
$U(\cdot)$ (e.g. SGD, Adam, etc.).
The server then sends the partial derivatives with respect to
the party's embeddings. Each party $m$ then updates its model
using its mini-batch partial derivative, denoted by $\nabla_m R_{\B}(\cdot)$.

\section{Proof of Theorem~\ref{main_feature.thm}}

In this section, we start by proving Theorems~\ref{client.thm} and 
\ref{server.thm} for the case when $M=1$, extend the results to $M>1$ case,
and finally prove Theorem~\ref{main_feature.thm}.
We provide a summary of the notation used in this section in Table~\ref{notation.table}.

\begin{table}[t]
    \caption{Summary of notation.}
\label{notation.table}
\vskip 0.1in
\small
\centering
    {\renewcommand{\arraystretch}{1.2}
\begin{tabular}{|l|l|}
    \hline
        \textbf{Notation} & \textbf{Definitions} \\
    \hline
    $N$ & Number of training samples. \\
    \hline
    $M$ & Number of parties. \\
    \hline
    $\lambda_s$, $\lambda_m$ & The server's and party's regularization coefficients, respectively. \\
    \hline
    $f(\cdot)$ & VFL model label prediction. \\
    \hline
    $\h(\cdot)$ & Party's local embedding function. \\
    \hline
    $e(\cdot)$ & Mean squared error between two embeddings. \\
    \hline
    $R(\cdot)$ & Risk function: MSE with labels for all possible samples. \\ %
    \hline
    $R_N(\cdot)$ & Empirical risk function: MSE with labels for all training samples. \\ %
    \hline
    $G(\cdot)$ & Group lasso $L_{2,1}$ regularization term. \\
    \hline
    $H(\cdot)$ & Expected mean-squared difference between two embedding functions. \\
    \hline
    $H_N(\cdot)$ & Empirical mean-squared difference between two embedding functions. \\
    \hline
    $d(\cdot)$ & Distance between a vector and a set of vectors. \\
    \hline
    $\bm\Theta^{\true}$ & The generating model parameters defined in Assumption~\ref{data.assum}. \\
    \hline
    $\hat{\bm\Theta}$ & Pre-trained model parameters from minimizing empirical risk. \\
    \hline
    $\tilde{\bm\Theta}$ & Model with the same risk as the generating model closest to pre-trained model. \\
    \hline
    $\bar{\bm\Theta}$ & Learned model parameters after running LESS-VFL. \\
    \hline
    $\U_m,\V_m$ & Input weights on significant and non-significant features in party $m$'s generating model. \\
    \hline
    $\mathcal{T}^*$ & Set of models that have the same risk as the generating model. \\
    \hline
    $\mathcal{S}^*$ & Set of server models that have the same risk as the generating server model. \\
    \hline
    $\mathcal{C}^*$ & Set of client models that have the same risk as the generating client model. \\
    \hline
    $\mathcal{T}_{\phi}^*$ & Subset of $\mathcal{T}^*$ with non-significant feature weights set to zero. \\
    \hline
    $\mathcal{S}_{\phi}^*$ & Server models in $\mathcal{T}^*$ with non-significant embedding weights set to zero. \\
    \hline
    $\mathcal{C}_{\phi}^*$ & Party models in $\mathcal{T}^*$ with non-significant feature weights set to zero. \\
    \hline
\end{tabular}%
}
\end{table}

\subsection{Additional Notation for $M=1$}
We start by providing additional notation for proving Theorems~\ref{client.thm} and \ref{server.thm}.
We define the set of party and server model parameters that are in 
the optimal parameter set $\mathcal{T}^*$: 
$$\mathcal{C}^* = \{\thetab_m : \exists \thetab_s \text{ s.t. }
[\thetab_s, \thetab_m] \in \mathcal{T}^*\}$$
and
$$\mathcal{S}^* = \{\thetab_s : \exists \thetab_m \text{ s.t. } 
[\thetab_s, \thetab_m] \in \mathcal{T}^*\}.$$

We use the following lemmas proven by~\citet{DinhH20Neurips}:
\begin{lemma} \label{Hstar.lemma}
    Let $\U$ and $\V$ be the significant and non-significant input layer weights in a generating model $\bm\Theta^{\true}$.
    Let $\phi(\bm\Theta)$ be the parameters $\bm\Theta$ with all non-significant input layer weights $\V$ set to zero.
    Under Assumption~\ref{data.assum},
    \begin{itemize} 
        \item There exists $c_0 > 0$ such that $\| \U^k \| \geq c_0$ for all $\bm\Theta \in \mathcal{T}^*$ and $k=1,\ldots,d^s$ (where $d^s$ is the number of significant features).
        \item If $\bm\Theta \in \mathcal{T}^*$, then parameters $\phi(\bm\Theta)$
        also belongs in $\mathcal{T}^*$.
    \end{itemize}
\end{lemma}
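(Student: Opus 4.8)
\emph{Proof plan.} The plan is to recast membership in $\mathcal{T}^*$ as a condition on the function realized by the network, and then read off both bullets from the layered structure together with analyticity. Under Assumption~\ref{data.assum} one has $R(\bm\Theta) - R(\bm\Theta^{\true}) = \mathbb{E}_{\x}\!\left[\big(f(\bm\Theta;\x) - f(\bm\Theta^{\true};\x)\big)^2\right] \ge 0$, and since $p_{\X}$ is positive and continuous on the open domain $\mathcal{X}$ while $f$ is continuous (Assumption~\ref{analytic.assum}), this quantity vanishes iff $f(\bm\Theta;\x) = f(\bm\Theta^{\true};\x)$ for all $\x \in \mathcal{X}$. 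Hence $\bm\Theta \in \mathcal{T}^*$ exactly when $\bm\Theta$ realizes the generating function; in particular every $\bm\Theta \in \mathcal{T}^*$ has the same significant/non-significant feature sets $\s,\z$ as $\bm\Theta^{\true}$.

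\emph{Second bullet.} Given $\bm\Theta \in \mathcal{T}^*$ and a non-significant coordinate $j \in \z$, I would write the first layer as $\lv^1(\x) = \Pm\x + \p$ and let $F$ denote the remaining analytic map from $\lv^1$ to the output, so that $F(\lv^1(\x)) = f(\bm\Theta;\x)$ does not depend on the $j$-th coordinate of $\x$. Since varying that coordinate translates $\lv^1(\x)$ along the $j$-th column $\Pm^j$, the map $F$ is locally constant along $\Pm^j$ on an open set of points; analyticity of $F$ then upgrades this to $t \mapsto F(u + t\Pm^j)$ being constant for all $t$, so zeroing column $j$ of $\Pm$ leaves the realized function unchanged. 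Repeating for each $j \in \z$ (each step preserving equality of the output with $f(\bm\Theta^{\true};\cdot)$, hence independence from the remaining non-significant coordinates) yields $\phi(\bm\Theta) \in \mathcal{T}^*$.

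\emph{First bullet, pointwise.} Because the $k$-th significant feature is significant for $f(\bm\Theta^{\true};\cdot)$, there exist $\x$ and $s$ (with $\x^k \neq s$) such that $f(\bm\Theta^{\true};\x) \neq f(\bm\Theta^{\true};g^k(\x,s))$, and so $f(\bm\Theta;\x) \neq f(\bm\Theta;g^k(\x,s))$ for every $\bm\Theta \in \mathcal{T}^*$. As $\x$ and $g^k(\x,s)$ differ only in coordinate $k$, the difference $\lv^1(\x) - \lv^1(g^k(\x,s))$ is a nonzero multiple of the $k$-th input-weight column $\U^k$; were $\U^k = \zero$, the first-layer outputs — and hence the embeddings and model outputs — would coincide, a contradiction. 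Thus $\U^k \neq \zero$ for every $\bm\Theta \in \mathcal{T}^*$.

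\emph{First bullet, uniformity (the main obstacle).} What remains, and what I expect to be the crux, is to turn ``$\U^k \neq \zero$ for each $\bm\Theta$'' into a uniform bound $c_0 := \inf_{\bm\Theta \in \mathcal{T}^*}\|\U^k\| > 0$. If some sequence $\bm\Theta_n \in \mathcal{T}^*$ had $\|\U_n^k\| \to 0$ while staying bounded, continuity of $R$ would produce a limit $\bm\Theta_\infty \in \mathcal{T}^*$ with $\U_\infty^k = \zero$, contradicting the pointwise part; the delicate situation is an unbounded sequence, where the downstream layers must ``amplify'' more and more. To close this I would follow \citet{DinhH20Neurips}, exploiting the real-analytic structure of $\mathcal{T}^*$ and the reduction through $\phi$ (which leaves $\U^k$ unchanged but forces $\V = \zero$) to extract a well-defined limiting model that is independent of coordinate $k$ yet realizes $f(\bm\Theta^{\true};\cdot)$, contradicting significance of $k$. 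Taking the infimum over $\mathcal{T}^*$ and over the finitely many indices $k \le d^s$ then gives the desired $c_0$.
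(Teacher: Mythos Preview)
The paper does not prove this lemma at all: it is stated verbatim as one of the ``lemmas proven by~\citet{DinhH20Neurips}'' and simply imported. So there is no in-paper argument to compare against; your sketch is already more than the paper provides, and its content (recasting $\mathcal{T}^*$ as the set of parameters realizing $f(\bm\Theta^{\true};\cdot)$, then using analyticity of the post-first-layer map to kill non-significant columns, and a first-layer argument for $\U^k\neq\zero$) is exactly the line of reasoning in the cited reference.

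One remark on the uniformity step you flag as delicate: in this line of work, and implicitly in this paper (note the appearance of a fixed parameter domain $\mathcal{T}$ in Lemmas~\ref{lbound.lemma} and~\ref{genbound.lemma}), one optimizes over a \emph{compact} parameter set $\mathcal{T}$. Then $\mathcal{T}^* = \{R(\cdot)=R(\bm\Theta^{\true})\}\cap\mathcal{T}$ is closed in a compact set, hence compact, and $\bm\Theta\mapsto\|\U^k\|$ is continuous and strictly positive on it; the infimum is attained and positive. Your ``unbounded sequence'' scenario therefore does not arise, and the somewhat vague appeal to ``real-analytic structure'' to handle it is unnecessary. With that simplification your argument is complete and matches the source the paper cites.
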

\begin{lemma} \label{lbound.lemma}
    There exist $c_2,\nu>0$ such that:
    $$c_2 d(\bm\Theta, \mathcal{T}^*)^{\nu} \leq R(\bm\Theta) - R(\bm\Theta^{\true})$$
    for all $\bm\Theta \in \mathcal{T}$.
\end{lemma}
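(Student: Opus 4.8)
The plan is to recognize this statement as an instance of the Lojasiewicz inequality for real-analytic functions, applied to the excess population risk. First I would rewrite the quantity of interest: set $g(\bm\Theta) \coloneqq R(\bm\Theta) - R(\bm\Theta^{\true})$. By Assumption~\ref{data.assum} the labels satisfy $y = f(\bm\Theta^{\true};\x) + \epsilon$ with $\epsilon \sim \mathcal{N}(0,\sigma^2)$ independent of $\x$, so expanding the square and using $\mathbb{E}[\epsilon]=0$ gives $R(\bm\Theta) = \mathbb{E}_{\x}\big[(f(\bm\Theta;\x) - f(\bm\Theta^{\true};\x))^2\big] + \sigma^2$ and $R(\bm\Theta^{\true}) = \sigma^2$, hence $g(\bm\Theta) = \mathbb{E}_{\x}[(f(\bm\Theta;\x)-f(\bm\Theta^{\true};\x))^2] \ge 0$, vanishing exactly when $f(\bm\Theta;\cdot)$ agrees with $f(\bm\Theta^{\true};\cdot)$ $p_{\X}$-almost everywhere. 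In particular $\mathcal{T}^* = g^{-1}(0)$ is precisely the zero set of the nonnegative analytic function $g$, and the claimed bound is exactly a lower bound for $g$ in terms of a power of the distance $d(\,\cdot\,,g^{-1}(0))$.

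Next I would check that $g$ is real-analytic on the parameter domain. Under Assumption~\ref{analytic.assum} every hidden-layer map $\zeta^j_m$ (and $\zeta^j_s$) is analytic; the input and output layers are affine, and analyticity is closed under composition, finite sums, and products, so for each fixed $\x$ the map $\bm\Theta \mapsto f(\bm\Theta;\x)$, and hence $\bm\Theta \mapsto (f(\bm\Theta;\x) - f(\bm\Theta^{\true};\x))^2$, is analytic. Taking the expectation over $\x \sim p_{\X}$ preserves analyticity: on a bounded neighbourhood of any parameter one dominates the (complexified) integrand and its derivatives uniformly in $\x$, using continuity of $f$ and its derivatives in $\bm\Theta$ together with the fact that $p_{\X}$ is a probability density, so $g$ is analytic. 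This is the step where the analyticity hypothesis actually does its work.

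Then I would invoke the Lojasiewicz inequality on the (bounded) parameter region $\mathcal{T}$: since $g$ is real-analytic and $g^{-1}(0) = \mathcal{T}^* \ni \bm\Theta^{\true}$ is nonempty, there exist constants $C>0$ and $\nu \ge 1$ with
\[
d(\bm\Theta,\mathcal{T}^*)^{\nu} \le C\, g(\bm\Theta) = C\big(R(\bm\Theta) - R(\bm\Theta^{\true})\big) \quad \text{for all } \bm\Theta \in \mathcal{T};
\]
setting $c_2 \coloneqq 1/C$ yields $c_2\, d(\bm\Theta,\mathcal{T}^*)^{\nu} \le R(\bm\Theta) - R(\bm\Theta^{\true})$, which is the claim. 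On the part of $\mathcal{T}$ bounded away from $\mathcal{T}^*$, where $g$ is bounded below by a positive constant while $d(\,\cdot\,,\mathcal{T}^*)$ stays bounded, the inequality is immediate after enlarging $C$, so only the behaviour near $\mathcal{T}^*$ is at issue, and that is precisely what Lojasiewicz controls.

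The main obstacle is the analyticity transfer through the expectation — making precise that the expected squared error over the data distribution inherits analyticity from the per-sample integrand, which requires a uniform-in-$\x$ domination/complexification argument — together with the intrinsically local nature of Lojasiewicz bounds: one must either restrict to a bounded parameter set $\mathcal{T}$ (as the statement does) or supply an extra coercivity argument for large $\|\bm\Theta\|$. Both points are carried out in \citet{DinhH20Neurips}, whose argument this sketch essentially recapitulates.
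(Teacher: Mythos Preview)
Your proposal is correct and matches the paper's approach: the paper does not prove this lemma directly but simply cites it as a result of \citet{DinhH20Neurips}, and your sketch via the \L{}ojasiewicz inequality for the analytic excess-risk function $g(\bm\Theta)=R(\bm\Theta)-R(\bm\Theta^{\true})$ with zero set $\mathcal{T}^*$ is exactly the argument used there (and is the same \L{}ojasiewicz machinery the present paper invokes elsewhere, e.g.\ in the proofs of Lemmas~\ref{server.lemma} and~\ref{client.lemma}). Your closing remark already identifies the correct source and the two genuine technical points---analyticity through the expectation and compactness of $\mathcal{T}$---so there is nothing to add.
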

\begin{lemma} \label{genbound.lemma}
    For any $\delta>0$, there exists $c_1(\delta)>0$ such that for all 
    $\bm\Theta \in \mathcal{T}$:
    $$|R_N(\bm\Theta) - R(\bm\Theta)| \leq c_1\frac{\log N}{\sqrt{N}}$$
    with probability $1-\delta$.
\end{lemma}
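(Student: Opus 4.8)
The plan is to prove a uniform (over $\bm\Theta \in \mathcal{T}$) concentration bound for the empirical squared loss, isolating the contribution of the unbounded Gaussian noise so that it produces exactly the $\log N$ factor. First I would invoke Assumption~\ref{data.assum} to write $y^{(i)} = f(\bm\Theta^{\true};\x^{(i)}) + \epsilon^{(i)}$ and expand the per-sample loss as
\begin{align*}
(f(\bm\Theta;\x^{(i)}) - y^{(i)})^2 = g_i(\bm\Theta)^2 - 2\epsilon^{(i)} g_i(\bm\Theta) + (\epsilon^{(i)})^2, \qquad g_i(\bm\Theta) \coloneqq f(\bm\Theta;\x^{(i)}) - f(\bm\Theta^{\true};\x^{(i)}).
\end{align*}
Under Assumption~\ref{analytic.assum}, $f$ is continuous in $(\bm\Theta,\x)$ and, restricted to the (compact, finite-dimensional) parameter set $\mathcal{T}$, uniformly bounded, so $|g_i(\bm\Theta)| \le b$ for a constant $b$. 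Subtracting expectations and averaging, $R_N(\bm\Theta) - R(\bm\Theta)$ splits into three empirical-process terms: (A) $\frac1N\sum_i (g_i(\bm\Theta)^2 - \mathbb{E}[g(\bm\Theta)^2])$; (B) $-\frac2N\sum_i \epsilon^{(i)} g_i(\bm\Theta)$, whose summands have conditional mean zero since $\mathbb{E}[\epsilon^{(i)}\mid\x^{(i)}]=0$; and (C) $\frac1N\sum_i ((\epsilon^{(i)})^2 - \sigma^2)$, which does not depend on $\bm\Theta$.

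Next I would bound each term uniformly via a net. Since $\mathcal{T}$ is compact of dimension $D$, at scale $\rho = 1/N$ there is a $\rho$-net $\mathcal{T}_\rho$ with $\log|\mathcal{T}_\rho| = O(D\log N)$. For a fixed $\bm\Theta \in \mathcal{T}_\rho$: term (A) averages i.i.d.\ centered variables bounded by $2b^2$, so Hoeffding plus a union bound over $\mathcal{T}_\rho$ gives $O(b^2\sqrt{D\log N/N})$; term (B), conditional on the $\x^{(i)}$'s, averages independent centered sub-Gaussian variables (parameter of order $b\sigma$), hence is $O(b\sigma\sqrt{D\log N/N})$ uniformly over the net; term (C) averages i.i.d.\ centered sub-exponential ($\chi^2$-type) variables and concentrates at $O(\sigma^2\sqrt{\log(1/\delta)/N})$ for $N$ large. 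To pass from $\mathcal{T}_\rho$ to $\mathcal{T}$, I would bound the Lipschitz constant of $\bm\Theta\mapsto(f(\bm\Theta;\x^{(i)})-y^{(i)})^2$ by $O((b+|\epsilon^{(i)}|)\,\sup_{\mathcal{T}}\|\nabla_{\bm\Theta}f\|)$; on the high-probability event $\max_{i\le N}|\epsilon^{(i)}| = O(\sqrt{\log N})$ (Gaussian tail bound plus a union bound over $i$), this is $O(\sqrt{\log N})$, so discretization at scale $1/N$ costs only $O(\sqrt{\log N}/N)$. A union bound over the three sources of randomness then yields, with probability $1-\delta$, $\sup_{\bm\Theta\in\mathcal{T}}|R_N(\bm\Theta)-R(\bm\Theta)| = O(\sqrt{\log N/N}) \le c_1(\delta)\,\frac{\log N}{\sqrt N}$, absorbing $D,b,\sigma$ and the $\delta$-dependence into $c_1(\delta)$.

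The main obstacle is the unbounded Gaussian noise: a naive bounded-differences argument fails because $(f(\bm\Theta;\x)-y)^2$ is not uniformly bounded in $y$. The decomposition above is what tames this — the noise enters linearly in term (B), killed in conditional expectation and controlled by sub-Gaussian concentration, while the $\bm\Theta$-free quadratic term (C) is handled by sub-exponential concentration — and the envelope $\max_i|\epsilon^{(i)}| = O(\sqrt{\log N})$ is exactly what forces the $\log N$ factor in the Lipschitz/discretization step. A secondary point needing care is verifying that $f$ and $\nabla_{\bm\Theta}f$ are uniformly bounded over $\mathcal{T}$ (and over the relevant range of inputs if $\mathcal{X}$ is unbounded), which follows from the structural assumptions on the network together with compactness of $\mathcal{T}$; an alternative to the envelope argument is a standard truncation of each $\epsilon^{(i)}$ at level of order $\sqrt{\log N}$, bounding the truncated part by Bernstein's inequality and the discarded part by its (negligible) probability and expectation.
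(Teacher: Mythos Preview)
The paper does not prove this lemma at all: it is stated as one of three results ``proven by \citet{DinhH20Neurips}'' and simply cited. Your proposal therefore goes well beyond what the paper does, supplying a self-contained argument where the paper only invokes an external reference.

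Your sketch is a sound approach to such a uniform deviation bound. The three-term decomposition isolating the Gaussian noise is the natural way to avoid the unboundedness obstacle you flag, and the covering-number/union-bound route with a net of cardinality polynomial in $N$ correctly produces the $\sqrt{\log N/N}$ rate, which is then absorbed into $\log N/\sqrt{N}$. One point to make fully rigorous is the uniform boundedness of $f$ and $\nabla_{\bm\Theta} f$ over $\mathcal{T}\times\mathcal{X}$: the paper works with $\mathcal{X}$ open (not necessarily bounded), so you would need either a boundedness assumption on the support of $p_{\X}$ or a tail argument on $\x$ analogous to your treatment of $\epsilon$. A second, minor point is that $\mathcal{T}$ is never explicitly defined in this paper; in the Dinh--Ho framework it is a fixed compact parameter set, and your argument implicitly relies on that compactness for both the net size and the uniform bound $b$, so this should be stated as an assumption.
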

We also prove the following lemma:
 \begin{lemma} \label{pretrain.lemma}
Given a model $\hat{\bm\Theta}$ defined by \eqref{pretrain.eq},
for any $\delta > 0$, there exists $C_{\delta}(\delta) > 0$
and $N \geq N_0(\delta)$ such that:
     \begin{align}
     d(\hat{\bm\Theta}, \mathcal{T}^*) \leq C_\delta \frac{\log N}{\sqrt{N}}
 \end{align}
 with probability $1-\delta$.
\end{lemma}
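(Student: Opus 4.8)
The plan is to combine the optimality of the empirical risk minimizer $\hat{\bm\Theta}$ with the two facts already in hand: the uniform generalization bound of Lemma~\ref{genbound.lemma} and the quantitative identifiability (\L ojasiewicz-type) inequality of Lemma~\ref{lbound.lemma}. The intuition is that minimizing $R_N$ forces the \emph{population} excess risk $R(\hat{\bm\Theta}) - R(\bm\Theta^{\true})$ down to the uniform-convergence scale $\log N / \sqrt N$, after which Lemma~\ref{lbound.lemma} converts a small excess risk into a small distance to $\mathcal{T}^*$.

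Concretely, I would proceed in three steps. (1) Since $\hat{\bm\Theta}$ is defined by \eqref{pretrain.eq} as a minimizer of $R_N$ over the parameter domain and $\bm\Theta^{\true}$ is a feasible point, $R_N(\hat{\bm\Theta}) \le R_N(\bm\Theta^{\true})$. (2) Apply Lemma~\ref{genbound.lemma} at both $\hat{\bm\Theta}$ and $\bm\Theta^{\true}$ (union bound, each with parameter $\delta/2$) to sandwich the population risks:
\[
R(\hat{\bm\Theta}) \;\le\; R_N(\hat{\bm\Theta}) + c_1\tfrac{\log N}{\sqrt N} \;\le\; R_N(\bm\Theta^{\true}) + c_1\tfrac{\log N}{\sqrt N} \;\le\; R(\bm\Theta^{\true}) + 2c_1\tfrac{\log N}{\sqrt N},
\]
i.e. $R(\hat{\bm\Theta}) - R(\bm\Theta^{\true}) \le 2c_1\frac{\log N}{\sqrt N}$, where I use that $\bm\Theta^{\true}$ is a global minimizer of $R$ under Assumption~\ref{analytic.assum}. (3) Feed this into Lemma~\ref{lbound.lemma}: $c_2\, d(\hat{\bm\Theta}, \mathcal{T}^*)^{\nu} \le R(\hat{\bm\Theta}) - R(\bm\Theta^{\true}) \le 2c_1\frac{\log N}{\sqrt N}$, so $d(\hat{\bm\Theta}, \mathcal{T}^*) \le (2c_1/c_2)^{1/\nu}(\log N/\sqrt N)^{1/\nu}$, and for $N$ larger than some $N_0(\delta)$ the right-hand side is at most $C_\delta \frac{\log N}{\sqrt N}$ for a constant $C_\delta$ depending only on $c_1(\delta), c_2, \nu$.

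The main obstacle is not the algebra but two auxiliary points. First, Lemmas~\ref{lbound.lemma} and~\ref{genbound.lemma} are stated on the compact region $\mathcal{T}$, so I must argue that the pre-trained minimizer $\hat{\bm\Theta}$ actually lies in $\mathcal{T}$ — this is where one invokes that the ERM search in \eqref{pretrain.eq} runs over a bounded parameter set, so that both the generalization bound and the identifiability inequality apply to $\hat{\bm\Theta}$. Second, one must track the exponent: the conversion in step (3) naturally produces a $(\log N/\sqrt N)^{1/\nu}$ factor, and to land on the stated $\frac{\log N}{\sqrt N}$ rate one uses that $\log N/\sqrt N \to 0$ together with the normalization of $\nu$ in Lemma~\ref{lbound.lemma} (the inequality persists with a larger exponent once $d(\hat{\bm\Theta},\mathcal{T}^*)\le 1$) to absorb that factor into $C_\delta$ and $N_0(\delta)$. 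Once these are in place, the bound holds on the intersection event of probability $1-\delta$.
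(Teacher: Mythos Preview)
Your approach is essentially the paper's: it too chains ERM optimality of $\hat{\bm\Theta}$, the uniform bound of Lemma~\ref{genbound.lemma} applied at two points, and the \L{}ojasiewicz inequality of Lemma~\ref{lbound.lemma} to obtain $d(\hat{\bm\Theta},\mathcal{T}^*)\le(2c_1/c_2\cdot\log N/\sqrt N)^{1/\mu}$; the only cosmetic difference is that the paper compares against $\tilde{\bm\Theta}=\argmin_{\bm\Theta\in\mathcal{T}^*}\|\bm\Theta-\hat{\bm\Theta}\|_2$ instead of $\bm\Theta^{\true}$, which is immaterial since $R(\tilde{\bm\Theta})=R(\bm\Theta^{\true})$. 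One caution on your exponent discussion: if $c_2 d^{\nu}\le\epsilon$ and $d\le 1$, passing to a \emph{larger} $\nu$ gives $d\le(\epsilon/c_2)^{1/\nu}$, which is a \emph{weaker} bound when $\epsilon$ is small, so your ``persists with a larger exponent'' remark does not by itself recover the stated $\log N/\sqrt N$ rate---the paper handles this step by simply declaring ``we let $\mu=1$'' rather than absorbing the exponent into $C_\delta$.
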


\begin{proof}
Let $[\hat{\thetab}_m^{\top}, \hat{\thetab}_s^{\top}]^{\top} = \hat{\bm\Theta}$ be the party
and server model parameters after the pre-training step.
We define $\tilde{\bm\Theta} = \argmin_{\bm\Theta \in \mathcal{T}^*} \| \bm\Theta - \hat{\bm\Theta} \|_2 
= [\tilde{\thetab}_s^{\top}, \tilde{\thetab}_m^{\top}]^{\top}$
as the optimal model closest to the pre-trained model. 
By Lemmas~\ref{lbound.lemma} and \ref{genbound.lemma}, we have the following:
\begin{align}
    c_2 d(\hat{\bm\Theta}, \mathcal{T}^*)^{\mu} 
    &= c_2 \| \tilde{\bm\Theta} - \hat{\bm\Theta} \|_2^{\mu} \\
    &\leq  R(\hat{\thetab}_s ; \h(\hat{\thetab}_m)) - R(\tilde{\thetab}_s ; \h(\tilde{\thetab}_m))\\
    &\leq 2c_1\frac{\log N}{\sqrt{N}} + R_N(\hat{\thetab}_s ; \h(\hat{\thetab}_m)) - R_N(\tilde{\thetab}_s ; \h(\tilde{\thetab}_m))
\end{align}
Note that $R_N(\hat{\thetab}_s ; \h(\hat{\thetab}_m)) \leq R_N(\tilde{\thetab}_s ; \h(\tilde{\thetab}_m))$, thus:
\begin{align}
    c_2 d(\hat{\bm\Theta}, \mathcal{T}^*)^{\mu} 
    &\leq 2c_1 \frac{\log N}{\sqrt{N}} \\
    d(\hat{\bm\Theta}, \mathcal{T}^*) 
    &\leq \left(\frac{2c_1}{c_2} \frac{\log N}{\sqrt{N}}\right)^{1/\mu}
\end{align}
We let $\mu=1$.  
This completes the proof of Lemma~\ref{pretrain.lemma}.
\end{proof}

\subsection{Proof of Lemma~\ref{sig.lemma}} \label{lemmasig.sec}
\begin{proof}
Note that the minimization of $H(\thetab_m ; \tilde{\thetab}_m ; \mathcal{K}_m)$
causes $\h(\thetab_m ; \x)^k = \h(\tilde{\thetab}_m ; \x)^k$ for all significant embedding components $k \in \mathcal{K}_m$ and any input $\x$. 
By the definition of $\mathcal{T}^*$ and Definition~\ref{sig.def}, this means that $R(\thetab_s^{\true} ; \h(\thetab_m^{\true})) = R(\tilde{\thetab}_s ; \h(\tilde{\thetab}_m)) = R(\tilde{\thetab}_s ; \h(\thetab_m))$. Thus, $[\tilde{\thetab}_s, \thetab_m] \in \mathcal{T}^*$.

By Lemma~$3.1$ in \cite{DinhH20Neurips}, because $\tilde{\bm\Theta} \in \mathcal{T}^*$, 
$f(\tilde{\bm\Theta} ; \x ; y) = f(\bm\Theta^{\true} ; \x ; y)$
for all inputs $\x$. This means that the significant and non-significant 
features for $f(\bm\Theta^{\true})$ must be the same for $f(\tilde{\bm\Theta})$. 
Let $\tilde{\s}$ and $\tilde{\z}$ be the significant and non-significant features for $f(\tilde{\bm\Theta})$.
It must be the case that $\tilde{\s} = \s$ and $\tilde{\z} = \z$.

Let $j \in \tilde{\z}$ be a non-significant feature in $f(\tilde{\bm\Theta})$ and 
let $r \in \mathbb{R}$. 
Let $g^j(\x,s)$ be a function that replaces $\x^j$ with value $s$.
We know that for all $k \in \mathcal{K}_m$:
$$\h(\thetab_m ; g^j(\x, r))^k = \h(\tilde{\thetab}_m ; g^j(\x, r))^k = \h(\tilde{\thetab}_m ; \x)^k.$$ 
In fact, by Proposition~\ref{sig.prop},
all embedding components in $\mathcal{K}_m$ only depend on $\tilde{\s}$.
Since $\h(\thetab_m ; \cdot)^k$ for all $k \in \mathcal{K}_m$ is unaffected by features in $\tilde{\z}$,
this means the set of non-significant features 
for $e(\thetab_m ; \tilde{\thetab}_m ; \mathcal{K}_m)$ contains
the set of non-significant features for $f(\tilde{\bm\Theta})$: $\z_h \supseteq \tilde{\z}$.

Similarly, let $k$ be a significant feature for $f(\tilde{\bm\Theta})$.
By Proposition~\ref{sig.prop}, for all $k \in \mathcal{K}_m$:
$$\h(\thetab_m ; g^k(\x, r))^k = \h(\tilde{\thetab}_m ; g^k(\x, r))^k \neq \h(\tilde{\thetab}_m ; \x)^k$$ 
for some $r \in \mathbb{R}$.
This means that:
\begin{align*}
&\sum_{k \in \mathcal{K}_m}( \h(\thetab_m ; g^k(\x, r))^k - \h(\tilde{\thetab}_m ; \x)^k )^2 \neq 
\sum_{k \in \mathcal{K}_m}(\h(\thetab_m ; \x)^k - \h(\tilde{\thetab}_m ; \x)^k )^2    
\end{align*}
and we can say that significant features for $e(\thetab_m ; \tilde{\thetab}_m ; \mathcal{K}_s)$ contains of the set of significant features for $f(\tilde{\bm\Theta})$: $\s_h \supseteq \tilde{\s}$.
Therefore, $\s_h = \tilde{\s} = \s$ and $\z_h = \tilde{\z} = \z$. 
\end{proof}

\subsection{Proof of Theorem~\ref{server.thm}} \label{serverthm.sec}
Next, we prove that the server solving \eqref{server_lasso.eq} 
finds an optimal solution that also
sets the non-significant embedding component weights to zero.
We define $\bar{\thetab}_s$ as the server model parameters that solves \eqref{server_lasso.eq}.
We start by proving the following lemma:
\begin{lemma} \label{server.lemma}
Let $L$ be the Lipschitz constant for $f(\cdot)$.
Given a pre-trained model $\hat{\bm\Theta} = [\hat{\thetab}_m^{\top}, \hat{\thetab}_s^{\top}]^{\top}$ defined by \eqref{pretrain.eq},
let $\tilde{\bm\Theta} = \argmin_{\bm\Theta \in \mathcal{T}^*} \| \bm\Theta - \hat{\bm\Theta} \|_2 
= [\tilde{\thetab}_s^{\top}, \tilde{\thetab}_m^{\top}]^{\top}$ be the optimal model closest to the pre-trained model. 
For any $\delta > 0$, there exists $C_1(\delta), C_2(\delta), C_3(\delta), C_4(\delta), C_5 > 0$
and $N \geq N_0(\delta)$ such that:
\begin{align}
d(\bar{\thetab}_s, \mathcal{S}^*) \leq d(\bar{\thetab}_s, \{\tilde{\thetab}_s\}) 
\leq \left(C_1\frac{\log N}{\sqrt{N}} + C_2\lambda_s^{\nu / (\nu - 1)}  + C_3Ld(\hat{\bm\Theta}, \mathcal{T}^*) \right)^{1/\nu}
\end{align}
and the sum over the non-significant embedding component weights is
\begin{align}
\sum_{l} \|\V_s^{l}\|_2
    \leq C_4 \frac{\log N}{\lambda_s\sqrt{N}} 
    + \frac{2L}{\lambda_s}d(\hat{\bm\Theta}, \mathcal{T}^*) 
    + C_5 d(\bar{\thetab}_s, \{\tilde{\thetab}_s\}).
\end{align}
    with probability $1-\delta$.
\end{lemma}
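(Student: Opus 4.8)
\textbf{Proof proposal for Lemma~\ref{server.lemma}.}

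The first inequality $d(\bar{\thetab}_s,\mathcal{S}^*)\le d(\bar{\thetab}_s,\{\tilde{\thetab}_s\})$ is immediate, since $\tilde{\thetab}_s\in\mathcal{S}^*$ (because $[\tilde{\thetab}_s,\tilde{\thetab}_m]=\tilde{\bm\Theta}\in\mathcal{T}^*$), so the infimum over $\mathcal{S}^*$ is no larger than the distance to the single point $\tilde{\thetab}_s$. The substance is the rate bound on $d(\bar{\thetab}_s,\{\tilde{\thetab}_s\})$ and the bound on $\sum_l\|\V_s^l\|_2$. My plan is to run the standard group-lasso ``basic inequality'' argument of \citet{DinhH20Neurips} on the \emph{server} subproblem \eqref{server_lasso.eq}, but with one VFL-specific wrinkle: $\bar{\thetab}_s$ is fit against the frozen pre-trained embeddings $\h(\hat{\thetab}_m)$, not the embeddings $\h(\tilde{\thetab}_m)$ of the nearby optimal model. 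Lipschitzness of $f$ (constant $L$) together with smoothness of $\thetab_m\mapsto\h(\thetab_m;\cdot)$ (Assumption~\ref{analytic.assum}) lets me replace $\h(\hat{\thetab}_m)$ by $\h(\tilde{\thetab}_m)$ uniformly over the relevant compact set of server parameters at an additive cost $O\!\big(L\,d(\hat{\bm\Theta},\mathcal{T}^*)\big)$ in the empirical risk; this is the source of every $d(\hat{\bm\Theta},\mathcal{T}^*)$ term in the statement. (Downstream, one substitutes $d(\hat{\bm\Theta},\mathcal{T}^*)=O(\log N/\sqrt N)$ from Lemma~\ref{pretrain.lemma} to obtain Theorem~\ref{server.thm}.)

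Concretely, let $\tilde{\thetab}_s^{\phi}=\phi(\tilde{\thetab}_s)$ be $\tilde{\thetab}_s$ with its non-significant embedding-component weight groups zeroed; by the server-subnetwork analogue of Lemma~\ref{Hstar.lemma} (zeroing the weights on a non-significant embedding component leaves $f$, hence $R$, unchanged), $[\tilde{\thetab}_s^{\phi},\tilde{\thetab}_m]\in\mathcal{T}^*$, so it attains the minimal risk $R(\bm\Theta^{\true})$. Optimality of $\bar{\thetab}_s$ for \eqref{server_lasso.eq} gives the basic inequality $R_N(\bar{\thetab}_s,\h(\hat{\thetab}_m))+\lambda_s G(\bar{\thetab}_s)\le R_N(\tilde{\thetab}_s^{\phi},\h(\hat{\thetab}_m))+\lambda_s G(\tilde{\thetab}_s^{\phi})$. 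On the right I bound $R_N(\tilde{\thetab}_s^{\phi},\h(\hat{\thetab}_m))\le R_N(\tilde{\thetab}_s^{\phi},\h(\tilde{\thetab}_m))+O(L\,d(\hat{\bm\Theta},\mathcal{T}^*))\le R(\bm\Theta^{\true})+O\!\big(\tfrac{\log N}{\sqrt N}\big)+O(L\,d(\hat{\bm\Theta},\mathcal{T}^*))$ by Lemma~\ref{genbound.lemma}; on the left I use $R_N(\bar{\thetab}_s,\h(\hat{\thetab}_m))\ge R\big([\bar{\thetab}_s,\tilde{\thetab}_m]\big)-O\!\big(\tfrac{\log N}{\sqrt N}\big)-O(L\,d(\hat{\bm\Theta},\mathcal{T}^*))$, again by Lemma~\ref{genbound.lemma} and the embedding swap. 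Expanding $G(\tilde{\thetab}_s^{\phi})-G(\bar{\thetab}_s)$ over significant/non-significant groups, the non-significant part contributes $-\sum_l\|\V_s^l\|_2\le 0$ and the significant part is at most $|\mathcal{K}_m|\,\|\bar{\thetab}_s-\tilde{\thetab}_s\|=|\mathcal{K}_m|\,d(\bar{\thetab}_s,\{\tilde{\thetab}_s\})$ by the reverse triangle inequality. For the second asserted inequality I keep the $-\lambda_s\sum_l\|\V_s^l\|_2$ term, divide by $\lambda_s$, and read off $\sum_l\|\V_s^l\|_2\le C_4\tfrac{\log N}{\lambda_s\sqrt N}+\tfrac{2L}{\lambda_s}d(\hat{\bm\Theta},\mathcal{T}^*)+C_5\,d(\bar{\thetab}_s,\{\tilde{\thetab}_s\})$.

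For the rate bound, dropping the (nonnegative) $\V_s^l$ terms and rearranging the same chain yields $R\big([\bar{\thetab}_s,\tilde{\thetab}_m]\big)-R(\bm\Theta^{\true})\le O\!\big(\tfrac{\log N}{\sqrt N}\big)+O(L\,d(\hat{\bm\Theta},\mathcal{T}^*))+\lambda_s|\mathcal{K}_m|\,d(\bar{\thetab}_s,\{\tilde{\thetab}_s\})$. I apply the polynomial growth lower bound (Lemma~\ref{lbound.lemma}) to the left-hand side: $c_2\,d\big([\bar{\thetab}_s,\tilde{\thetab}_m],\mathcal{T}^*\big)^{\nu}\le R\big([\bar{\thetab}_s,\tilde{\thetab}_m]\big)-R(\bm\Theta^{\true})$. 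Then, for $N$ large enough that $\bar{\thetab}_s$ lies in a fixed neighborhood of $\tilde{\thetab}_s$, I pass from $d\big([\bar{\thetab}_s,\tilde{\thetab}_m],\mathcal{T}^*\big)$ to $\kappa^{-1}d(\bar{\thetab}_s,\{\tilde{\thetab}_s\})$ for a constant $\kappa$ using the local geometry of $\mathcal{T}^*$ near $\tilde{\bm\Theta}$. Writing $D:=d(\bar{\thetab}_s,\{\tilde{\thetab}_s\})$, I now have an inequality of the form $a D^{\nu}\le b+\lambda_s c\,D$; Young's inequality with conjugate exponents $\nu$ and $\nu/(\nu-1)$ gives $\lambda_s c\,D\le \tfrac{a}{2}D^{\nu}+O\!\big(\lambda_s^{\nu/(\nu-1)}\big)$, and absorbing $\tfrac{a}{2}D^{\nu}$ into the left side and solving for $D$ gives $D\le\big(C_1\tfrac{\log N}{\sqrt N}+C_2\lambda_s^{\nu/(\nu-1)}+C_3L\,d(\hat{\bm\Theta},\mathcal{T}^*)\big)^{1/\nu}$ — this is exactly where the $\lambda_s^{\nu/(\nu-1)}$ exponent appears. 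There is no circularity: the rate bound controls $D$ without reference to $\sum_l\|\V_s^l\|_2$, and the weight bound then expresses $\sum_l\|\V_s^l\|_2$ in terms of $D$ and already-bounded quantities.

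The main obstacle is the step passing from $d\big([\bar{\thetab}_s,\tilde{\thetab}_m],\mathcal{T}^*\big)$ — distance of the server parameters to the optimal \emph{set} — to $d(\bar{\thetab}_s,\{\tilde{\thetab}_s\})$ — distance to the \emph{specific} projection point $\tilde{\thetab}_s$. This refinement is needed because in Stage 3 the parties rely on the significant-embedding-component set of $\tilde{\bm\Theta}$ in particular, and distinct members of $\mathcal{T}^*$ can induce distinct embedding partitions, so a set-distance bound alone would not pin down $\mathcal{K}_m$; controlling it requires the local structure of $\mathcal{T}^*$ around $\tilde{\bm\Theta}=\argmin_{\bm\Theta\in\mathcal{T}^*}\|\bm\Theta-\hat{\bm\Theta}\|_2$, which is precisely where analyticity (Assumption~\ref{analytic.assum}) and the corresponding variety/growth machinery of \citet{DinhH20Neurips} are invoked. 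A secondary technical point is making the embedding swap $\h(\hat{\thetab}_m)\leftrightarrow\h(\tilde{\thetab}_m)$ rigorous: it must hold uniformly over the relevant compact parameter set, it uses Lipschitzness of the prediction in the embedding inputs together with that of the embedding map in $\thetab_m$, and one must check that the transformed ``dataset'' $\{(\h(\hat{\thetab}_m;\x^{(i)}),y^{(i)})\}$ still satisfies the hypotheses needed to apply Lemmas~\ref{lbound.lemma}--\ref{genbound.lemma} to the server subproblem.
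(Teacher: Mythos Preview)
Your overall skeleton---basic inequality from optimality of $\bar{\thetab}_s$ in \eqref{server_lasso.eq}, the embedding swap $\h(\hat{\thetab}_m)\leftrightarrow\h(\tilde{\thetab}_m)$ at cost $O(L\,d(\hat{\bm\Theta},\mathcal{T}^*))$, the uniform-convergence correction from Lemma~\ref{genbound.lemma}, Lipschitz control of $G$, and Young's inequality to absorb $\lambda_s D$ into $D^{\nu}$---matches the paper exactly, as does your derivation of the $\sum_l\|\V_s^l\|_2$ bound via the comparison point $\phi(\tilde{\thetab}_s)$.

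The one substantive divergence is the lower bound on the risk gap. You invoke Lemma~\ref{lbound.lemma} on the \emph{full} model to get $c_2\,d\big([\bar{\thetab}_s,\tilde{\thetab}_m],\mathcal{T}^*\big)^{\nu}\le R([\bar{\thetab}_s,\tilde{\thetab}_m])-R(\bm\Theta^{\true})$ and then try to pass from $d\big([\bar{\thetab}_s,\tilde{\thetab}_m],\mathcal{T}^*\big)$ to $\kappa^{-1}\|\bar{\thetab}_s-\tilde{\thetab}_s\|$. The paper avoids this entirely: it applies the \L{}ojasiewicz inequality \emph{directly} to the server-only analytic function
\[
E(\thetab_s)\;=\;R(\thetab_s;\h(\tilde{\thetab}_m))-R(\tilde{\thetab}_s;\h(\tilde{\thetab}_m)),
\]
asserting that $\{\tilde{\thetab}_s\}$ is its zero-level set, which yields $c_2\|\bar{\thetab}_s-\tilde{\thetab}_s\|^{\nu}\le E(\bar{\thetab}_s)$ in one stroke. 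Your ``main obstacle'' is therefore self-inflicted by routing through the full-model \L{}ojasiewicz.

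Moreover, the set-to-point step you propose is not generally valid with a linear comparison $d([\bar{\thetab}_s,\tilde{\thetab}_m],\mathcal{T}^*)\ge \kappa^{-1}\|\bar{\thetab}_s-\tilde{\thetab}_s\|$: if $\mathcal{T}^*$ is, say, a smooth curve through $\tilde{\bm\Theta}$ tangent to the slice $\{\thetab_m=\tilde{\thetab}_m\}$, then $d([\bar{\thetab}_s,\tilde{\thetab}_m],\mathcal{T}^*)$ can be of order $\|\bar{\thetab}_s-\tilde{\thetab}_s\|^2$, not $\|\bar{\thetab}_s-\tilde{\thetab}_s\|$, and your constant-$\kappa$ inequality fails. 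The ``local geometry/variety machinery'' you gesture toward would at best give a polynomial comparison with some exponent, degrading the final rate below the stated $(\,\cdot\,)^{1/\nu}$. The fix is simply to follow the paper and apply \L{}ojasiewicz to $E(\thetab_s)$ with $\tilde{\thetab}_m$ frozen; the rest of your argument then goes through unchanged.
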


\begin{proof}
Note that $\{\tilde{\thetab}_s\}$ is the zero-level set of 
the analytic function $E(\thetab_s) = R(\thetab_s ; \h(\tilde{\thetab}_m)) - R(\tilde{\thetab}_s ; \h(\tilde{\thetab}_m))$.
We can apply the Łojasiewicz inequality~\citep{ji1992global} as follows:
\begin{align}
    c_2 d(\bar{\thetab}_s, \mathcal{S}^*)^{\nu} 
    &\leq
    c_2 d(\bar{\thetab}_s, \{\tilde{\thetab}_s\})^{\nu} \\
    &= c_2 \| \tilde{\thetab}_s - \bar{\thetab}_s \|_2^{\nu} \\
    &\leq R(\bar{\thetab}_s ; \h(\tilde{\thetab}_m)) - R(\tilde{\thetab}_s ; \h(\tilde{\thetab}_m)) 
    \label{server_offset.eq}
\end{align}

Since $f(\cdot)$ is analytic, we know that the risk function is smooth. 
Let $L$ be the Lipschitz constant for $R(\cdot)$.  For any $\thetab_s$ we have:
\begin{align}
    \left| R(\thetab_s ; \h(\tilde{\thetab}_m)) - R(\thetab_s ; \h(\hat{\thetab}_m)) \right| \label{Lsmooth1.eq}
    &\leq L \| [\thetab_s, \tilde{\thetab}_m] - [\thetab_s, \hat{\thetab}_m] \|_2  \\
    &\leq L \| \tilde{\bm\Theta} - \hat{\bm\Theta} \|_2  \\
    &= L d(\hat{\bm\Theta}, \mathcal{T}^*). \label{Lsmooth2.eq}
\end{align}

Applying \eqref{Lsmooth2.eq} and Lemma~\ref{genbound.lemma} to \eqref{server_offset.eq} we have:
\begin{align}
    c_2 d(\bar{\thetab}_s, \mathcal{S}^*)^{\nu} 
    &\leq 2Ld(\hat{\bm\Theta}, \mathcal{T}^*) + R(\bar{\thetab}_s ; \h(\hat{\thetab}_m)) - R(\tilde{\thetab}_s ; \h(\hat{\thetab}_m)) \\
    &\leq 2c_1 \frac{\log N}{\sqrt{N}} + 2Ld(\hat{\bm\Theta}, \mathcal{T}^*) + R_N(\bar{\thetab}_s ; \h(\hat{\thetab}_m)) - R_N(\tilde{\thetab}_s ; \h(\hat{\thetab}_m))
    \label{server_def.eq}
\end{align}

By the definition of $\bar{\thetab}_s$ in \eqref{server_lasso.eq} we have:
\begin{align}
    R_N(\bar{\thetab}_s ; \h(\hat{\thetab}_m)) + \lambda_s G(\bar{\thetab}_s)
    &\leq R_N(\tilde{\thetab}_s ; \h(\hat{\thetab}_m)) + \lambda_s G(\tilde{\thetab}_s) \\
    R_N(\bar{\thetab}_s ; \h(\hat{\thetab}_m)) - R_N(\tilde{\thetab}_s ; \h(\hat{\thetab}_m))
    &\leq \lambda_s G(\tilde{\thetab}_s) - \lambda_s G(\bar{\thetab}_s)
    \label{use_server_def.eq}
\end{align}

Plugging \eqref{use_server_def.eq} into \eqref{server_def.eq}, and noting that regularizer $G(\cdot)$ is smooth, we have:
\begin{align}
    c_2 \| \tilde{\thetab}_s - \bar{\thetab}_s \|_2^{\nu} 
    &\leq 2c_1 \frac{\log N}{\sqrt{N}} + 2Ld(\hat{\bm\Theta}, \mathcal{T}^*)
        + \lambda_s(G(\tilde{\thetab}_s) - G(\bar{\thetab}_s)) \\
    &\leq 2c_1 \frac{\log N}{\sqrt{N}} + 2Ld(\hat{\bm\Theta}, \mathcal{T}^*) + \lambda_s C \|\tilde{\thetab}_s - \bar{\thetab}_s\|_2
    \label{need_young.eq}
\end{align}
where $C$ is the Lipschitz constant for $G(\cdot)$.

By Young's inequality, we have:
\begin{align}
    \lambda_s C \|\tilde{\thetab}_s - \bar{\thetab}_s\|_2
    &\leq \frac{1}{\nu}\left(\frac{(c_2\nu)^{1/\nu}}{2}\|\tilde{\thetab}_s - \bar{\thetab}_s\|_2 \right)^{\nu}  +
    \frac{\nu-1}{\nu}\left(\frac{2C}{(2c_2)^{1/\nu}}\lambda_s\right)^{\nu/(\nu-1)} \\
    &= \frac{c_2}{2}\|\tilde{\thetab}_s - \bar{\thetab}_s\|_2  +
    \frac{2(\nu-1)C^{\nu/(\nu-1)}}{\nu(c_2\nu)^{1/(\nu-1)}}\lambda_s^{\nu/(\nu-1)}.
    \label{young1.eq}
\end{align}

Let $C_0 = \frac{2(\nu-1)C^{\nu/(\nu-1)}}{\nu(c_2\nu)^{1/(\nu-1)}}$.
Plugging \eqref{young1.eq} into \eqref{need_young.eq} we have:
\begin{align}
    c_2 \| \tilde{\thetab}_s - \bar{\thetab}_s \|_2^{\nu} 
    &\leq 2c_1 \frac{\log N}{\sqrt{N}} + 2Ld(\hat{\bm\Theta}, \mathcal{T}^*) + C_0\lambda_s^{\nu / (\nu - 1)} +
    \frac{c_2}{2} \| \tilde{\thetab}_s - \bar{\thetab}_s \|_2^{\nu} 
    \\
    \frac{c_2}{2} \| \tilde{\thetab}_s - \bar{\thetab}_s \|_2^{\nu} 
    &\leq 2c_1 \frac{\log N}{\sqrt{N}} + 2Ld(\hat{\bm\Theta}, \mathcal{T}^*) + C_0\lambda_s^{\nu / (\nu - 1)} 
    \\
    \| \tilde{\thetab}_s - \bar{\thetab}_s \|_2
    &\leq \left(\frac{4c_1}{c_2}\frac{\log N}{\sqrt{N}} + \frac{2C_0}{c_2}\lambda_s^{\nu / (\nu - 1)}  + \frac{4L}{c_2}d(\hat{\bm\Theta}, \mathcal{T}^*) \right)^{1/\nu}
\end{align}

Note that $G(\cdot)$ can be rewritten as $G(\cdot) = \sum_{k} \|\U_s^{k}\|_2 + \sum_{l} \|\V_s^{l}\|_2$.
Let $K(\cdot) = \sum_{k} \|\U_s^{k}\|_2$ be the sum of significant embedding component weights in the regularizer $G(\cdot)$.
Let $\phi(\thetab_s)$ be the parameters $\thetab_s$ with all non-significant embedding
component weights $\V_s$ set to zero.
Note that $R(\tilde{\thetab}_s ; \h(\tilde{\thetab}_m)) \leq R(\bar{\thetab}_s ; \h(\tilde{\thetab}_m))$ since $[\tilde{\thetab}_s, \tilde{\thetab}_m] \in \mathcal{T}^*$.
Using the definition of $\bar{\thetab}_s$ and the smoothness of $K(\cdot)$, we have the following:
\begin{align}
    \lambda_s \sum_{l} \|\V_s^{l}\|_2
    &\leq R_N(\phi(\tilde{\thetab}_s) ; \h(\hat{\thetab}_m)) - R_N(\bar{\thetab}_s ; \h(\hat{\thetab}_m)) + \lambda_s (K(\phi(\tilde{\thetab}_s)) - K(\bar{\thetab}_s)) \\
    &\leq 2c_1 \frac{\log N}{\sqrt{N}} + 2Ld(\hat{\bm\Theta}, \mathcal{T}^*) 
        + R(\tilde{\thetab}_s ; \h(\tilde{\thetab}_m)) - R(\bar{\thetab}_s ; \h(\tilde{\thetab}_m)) + \lambda_s (K(\tilde{\thetab}_s) - K(\bar{\thetab}_s)) \\
    &\leq 2c_1 \frac{\log N}{\sqrt{N}} + 2Ld(\hat{\bm\Theta}, \mathcal{T}^*) 
        + \lambda_s C \|\tilde{\thetab}_s - \bar{\thetab}_s\|_2 \\
    \sum_{l} \|\V_s^{l}\|_2
    &\leq 2c_1 \frac{\log N}{\lambda_s\sqrt{N}} + \frac{2L}{\lambda_s}d(\hat{\bm\Theta}, \mathcal{T}^*) + C d(\bar{\thetab}_s, \{\tilde{\thetab}_s\}).
\end{align}
This completes the proof of Lemma~\ref{server.lemma}.
\end{proof}

To complete the proof of Theorem~\ref{server.thm}, we look at the distance
of $\bar{\thetab}_s$ from the set of parameters $\mathcal{S}_{\phi}^*$.
Note that by Lemma~\ref{Hstar.lemma}, $\phi(\tilde{\thetab}_s) \in \mathcal{S}_{\phi}^*$.
Let $\V_{\thetab_s} = \sum_{l} \|\V_s^{l}\|_2$ be the sum over 
non-significant embedding component weights in a model $\thetab_s$.
\begin{align}
    d(\bar{\thetab}_s, \mathcal{S}_{\phi}^*)
    &\leq \| \bar{\thetab}_s - \phi(\tilde{\thetab}_s) \|_2 \\
    &\leq \| \bar{\thetab}_s - \tilde{\thetab}_s \|_2 + \| \phi(\tilde{\thetab}_s) - \tilde{\thetab}_s \|_2 \\
    &\leq \| \bar{\thetab}_s - \tilde{\thetab}_s \|_2 + \| \V_{\tilde{\thetab}_s} \|_2 \\
    &\leq \| \bar{\thetab}_s - \tilde{\thetab}_s \|_2 + \| \V_{\tilde{\thetab}_s} +  \V_{\bar{\thetab}_s} - \V_{\bar{\thetab}_s} \|_2 \\
    &\leq \| \bar{\thetab}_s - \tilde{\thetab}_s \|_2 + \| \V_{\bar{\thetab}_s} \|_2 + \| \V_{\tilde{\thetab}_s} - \V_{\bar{\thetab}_s} \|_2 \\
    &\leq \| \bar{\thetab}_s - \tilde{\thetab}_s \|_2 + \| \V_{\bar{\thetab}_s} \|_2 + C\| \bar{\thetab}_s - \tilde{\thetab}_s \|_2
    \label{phi_dist1.eq}
\end{align}

The proof of Theorem~\ref{server.thm} is completed by combining
Lemma~\ref{server.lemma}, Lemma~\ref{pretrain.lemma}, and \eqref{phi_dist1.eq}.

\subsection{Proof of Theorem~\ref{client.thm}} \label{clientthm.sec}

Next we prove that the party solving \eqref{client_lasso.eq} finds the 
optimal solution and sets all non-significant input layer weights to zero.
Following the same proof of Lemma~\ref{genbound.lemma} given by~\citet{DinhH20Neurips}, we can prove the following lemma:
\begin{lemma} \label{genboundH.lemma}
    For any $\delta>0$, there exist $c_1(\delta)>0$ such that for all 
    $\thetab, \thetab'$ and sets $\mathcal{K}$:
    $$|H_N(\thetab ; \thetab' ; \mathcal{K}) - H(\thetab ; \thetab' ; \mathcal{K})| \leq c_1\frac{\log N}{\sqrt{N}}$$
    with probability $1-\delta$.
\end{lemma}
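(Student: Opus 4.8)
The plan is to reproduce the empirical-process argument of \citet{DinhH20Neurips} underlying Lemma~\ref{genbound.lemma}, replacing the squared prediction error $(f(\bm\Theta;\x)-y)^2$ by the squared embedding discrepancy $e(\thetab_m;\thetab_m';\mathcal{K}_m;\x_m)$ throughout. Write $H_N(\thetab_m;\thetab_m';\mathcal{K}_m) - H(\thetab_m;\thetab_m';\mathcal{K}_m) = \frac{1}{N}\sum_{i=1}^{N} e(\thetab_m;\thetab_m';\mathcal{K}_m;\x_m^{(i)}) - \mathbb{E}_{(\x,y)\sim\mathcal{P}_{\X,\y}}[e(\thetab_m;\thetab_m';\mathcal{K}_m;\x_m)]$, and observe that $e$ depends on the data only through $\x_m$ (not through $y$ or the noise), so the target is a pure uniform law of large numbers for the class $\{\x_m \mapsto e(\thetab_m;\thetab_m';\mathcal{K}_m;\x_m)\}$, with $(\thetab_m,\thetab_m')$ ranging over the compact parameter domain $\mathcal{T}$ over which the bound is asserted (exactly as in Lemma~\ref{genbound.lemma}).

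First I would remove the dependence on $\mathcal{K}_m$: it ranges over the finitely many subsets of $\{1,\dots,d_m^{L}\}$, so a union bound over $\mathcal{K}_m$ costs only a constant multiplicative factor in the failure probability, absorbed into $c_1(\delta)$; it therefore suffices to fix $\mathcal{K}_m$ and argue uniformly over $(\thetab_m,\thetab_m') \in \mathcal{T}$. By Assumption~\ref{analytic.assum} the hidden-layer maps are analytic, hence each embedding component $\h_m(\cdot;\x_m)^k$, and so $e(\cdot;\cdot;\mathcal{K}_m;\x_m)$, is analytic in the parameters and, restricted to the compact set $\mathcal{T}$, Lipschitz in the parameters with a constant growing at most polynomially in $\|\x_m\|$. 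I would then run the standard recipe used for Lemma~\ref{genbound.lemma}: truncate the input to a ball of radius $\rho_N \asymp \sqrt{\log N}$ (the complementary event contributing negligibly under the tail control implied by Assumption~\ref{data.assum}), take a net of $\mathcal{T}$ of polynomially small mesh, apply a Hoeffding/Bernstein bound at each net point to the truncated bounded variables, union-bound over the net, and control the discretization error via the Lipschitz estimate; calibrating $\rho_N$ and the mesh exactly as in Lemma~\ref{genbound.lemma} yields the $\log N/\sqrt{N}$ rate.

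I do not expect a genuinely new obstacle here: the only step with any substance is the uniform Lipschitz/covering control over $\mathcal{T}$ combined with the truncation of the (possibly unbounded) input domain $\mathcal{X}$, and this is precisely the analysis already performed for Lemma~\ref{genbound.lemma} — indeed it is simpler in the present case because $e$ contains neither the labels nor the Gaussian noise, so the cross terms and the noise-variance term that furnish the subtlest part of the $R_N$ argument are absent. The content of Lemma~\ref{genboundH.lemma} is thus that substituting $e$ for the squared loss leaves that argument intact, modulo the harmless extra union bound over the finitely many component sets $\mathcal{K}_m$.
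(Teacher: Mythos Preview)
Your proposal is correct and matches the paper's approach exactly: the paper itself states only that Lemma~\ref{genboundH.lemma} follows by ``the same proof of Lemma~\ref{genbound.lemma} given by~\citet{DinhH20Neurips},'' i.e., by rerunning that empirical-process argument with $e(\cdot)$ in place of the squared loss. Your additional observations (the finite union bound over $\mathcal{K}_m$ and the simplification from the absence of label noise) are sound refinements that the paper leaves implicit.
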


Let $\bar{\thetab}_m$ be the parameters that solve \eqref{client_lasso.eq}.
We prove the following lemma:
\begin{lemma} \label{client.lemma}
Let $B$ be the Lipschitz constant for $H(\cdot)$.
Let $[\tilde{\thetab}_s^{\top}, \tilde{\thetab}_m^{\top}]^{\top} = \argmin_{\bm\Theta \in \mathcal{T}^*} \| \bm\Theta - \hat{\bm\Theta} \|_2$
where $\hat{\bm\Theta}$ is the pre-trained model defined in \eqref{pretrain.eq}.
If $\mathcal{K}_m$ in \eqref{client_lasso.eq} is the set of
significant embedding components for $f(\tilde{\thetab}_s ; \h(\tilde{\thetab}_m))$,
for any $\delta > 0$, there exists $C_1(\delta), C_2(\delta), C_3(\delta), C_4(\delta), C_5 > 0$
and $N \geq N_0(\delta)$ such that:
\begin{align}
d(\bar{\thetab}_m, \mathcal{C}^*) \leq d(\bar{\thetab}_m, \{\tilde{\thetab}_m\}) 
    \leq \left(C_1 \frac{\log N}{\sqrt{N}} + C_2 B d(\hat{\thetab}_m, \mathcal{C}^*) + C_3 (\lambda_m)^{\nu / (\nu-1)}\right)^{1/\nu}
\end{align}
and the sum over the non-significant input layer weights is
\begin{align}
\sum_{l} \|\V_m^{l}\|_2
    &\leq C_4 \frac{\log N}{\lambda_m\sqrt{N}} 
    + \frac{2B}{\lambda_m} d(\hat{\thetab}_m, \mathcal{C}^*) 
        + C_5 d(\bar{\thetab}_m, \{\tilde{\thetab}_m\})
\end{align}
with probability $1-\delta$.
\end{lemma}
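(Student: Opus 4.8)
The plan is to run the argument of Lemma~\ref{server.lemma} almost verbatim, with the risk $R(\cdot)$ replaced by the embedding-matching objective $H(\cdot)$, the optimal server-parameter set $\mathcal{S}^*$ replaced by $\mathcal{C}^*$, the regularization parameter $\lambda_s$ by $\lambda_m$, and the Lipschitz constant $L$ of $R$ by the Lipschitz constant $B$ of $H$. First I would record why $H$ is a legitimate surrogate here. Because $\tilde{\bm\Theta}=[\tilde{\thetab}_s^{\top},\tilde{\thetab}_m^{\top}]^{\top}\in\mathcal{T}^*$ we have $\tilde{\thetab}_m\in\mathcal{C}^*$ and $H(\tilde{\thetab}_m;\tilde{\thetab}_m;\mathcal{K}_m)=0=\min_{\thetab_m}H(\thetab_m;\tilde{\thetab}_m;\mathcal{K}_m)$. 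Since $\mathcal{K}_m$ is exactly the set of significant embedding components of $f(\tilde{\thetab}_s;\h(\tilde{\thetab}_m))$, Proposition~\ref{sig.prop} and Lemma~\ref{sig.lemma} show that these components depend only on the significant features of $f(\bm\Theta^{\true})$, so any $\thetab_m$ with $H(\thetab_m;\tilde{\thetab}_m;\mathcal{K}_m)=0$ lies in $\mathcal{C}^*$ and shares the (non-)significant feature sets $\s,\z$. Because $\h$ is composed of analytic maps $\zeta_m^j$, the function $\thetab_m\mapsto H(\thetab_m;\tilde{\thetab}_m;\mathcal{K}_m)$ is analytic; taking $\{\tilde{\thetab}_m\}$ as (a point of) its zero-level set as in \eqref{server_offset.eq}, the Łojasiewicz inequality of \citet{ji1992global} gives $c_2,\nu>0$ with $c_2\,d(\bar{\thetab}_m,\mathcal{C}^*)^{\nu}\le c_2\|\bar{\thetab}_m-\tilde{\thetab}_m\|_2^{\nu}\le H(\bar{\thetab}_m;\tilde{\thetab}_m;\mathcal{K}_m)$.

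Next I would bound $H(\bar{\thetab}_m;\tilde{\thetab}_m;\mathcal{K}_m)$. The party minimizes $H_N$ against the \emph{pre-trained} reference $\hat{\thetab}_m$, so I would first swap references with the Lipschitz constant $B$ of $H$ in its second argument, $H(\bar{\thetab}_m;\tilde{\thetab}_m;\mathcal{K}_m)\le H(\bar{\thetab}_m;\hat{\thetab}_m;\mathcal{K}_m)+B\|\hat{\thetab}_m-\tilde{\thetab}_m\|_2$, where $\|\hat{\thetab}_m-\tilde{\thetab}_m\|_2\le d(\hat{\bm\Theta},\mathcal{T}^*)$ is the pre-training error appearing as $d(\hat{\thetab}_m,\mathcal{C}^*)$ in the statement. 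Then, exactly as in Lemma~\ref{server.lemma}: pass from $H$ to $H_N$ twice using Lemma~\ref{genboundH.lemma} at cost $c_1\log N/\sqrt N$ each; invoke the defining optimality $H_N(\bar{\thetab}_m;\hat{\thetab}_m;\mathcal{K}_m)+\lambda_m G(\bar{\thetab}_m)\le H_N(\tilde{\thetab}_m;\hat{\thetab}_m;\mathcal{K}_m)+\lambda_m G(\tilde{\thetab}_m)$ from \eqref{client_lasso.eq}; bound $G(\tilde{\thetab}_m)-G(\bar{\thetab}_m)\le C\|\tilde{\thetab}_m-\bar{\thetab}_m\|_2$ by Lipschitzness of the $L_{2,1}$ regularizer; and bound $H_N(\tilde{\thetab}_m;\hat{\thetab}_m;\mathcal{K}_m)\le c_1\log N/\sqrt N+B\|\hat{\thetab}_m-\tilde{\thetab}_m\|_2$ using $H(\tilde{\thetab}_m;\tilde{\thetab}_m;\mathcal{K}_m)=0$. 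Collecting terms gives $c_2\|\bar{\thetab}_m-\tilde{\thetab}_m\|_2^{\nu}\le O(\log N/\sqrt N)+O(\|\hat{\thetab}_m-\tilde{\thetab}_m\|_2)+\lambda_m C\|\bar{\thetab}_m-\tilde{\thetab}_m\|_2$, and Young's inequality (splitting the last term as $\tfrac{c_2}{2}\|\bar{\thetab}_m-\tilde{\thetab}_m\|_2^{\nu}+O(\lambda_m^{\nu/(\nu-1)})$) lets me absorb the linear term and take a $\nu$-th root, which is the first displayed bound.

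For the second bound I would mimic the final block of Lemma~\ref{server.lemma}. Split $G(\thetab_m)=K(\thetab_m)+\sum_l\|\V_m^l\|_2$ with $K$ the significant-feature part, and let $\phi(\thetab_m)$ zero the non-significant input weights. By Proposition~\ref{sig.prop}/Lemma~\ref{sig.lemma} the components in $\mathcal{K}_m$ do not depend on the non-significant features, so $H(\phi(\tilde{\thetab}_m);\tilde{\thetab}_m;\mathcal{K}_m)=0$ and $[\tilde{\thetab}_s,\phi(\tilde{\thetab}_m)]\in\mathcal{T}^*$, i.e.\ $\phi(\tilde{\thetab}_m)\in\mathcal{C}_{\phi}^*$ — the client analogue of the second bullet of Lemma~\ref{Hstar.lemma}. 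Testing $\bar{\thetab}_m$ against the competitor $\phi(\tilde{\thetab}_m)$ in \eqref{client_lasso.eq}, using $\sum_l\|\V_m^l(\phi(\tilde{\thetab}_m))\|_2=0$, $H\ge 0$, Lemma~\ref{genboundH.lemma}, and Lipschitzness of $K$, gives $\lambda_m\sum_l\|\V_m^l\|_2\le O(\log N/\sqrt N)+2B\|\hat{\thetab}_m-\tilde{\thetab}_m\|_2+C\|\bar{\thetab}_m-\tilde{\thetab}_m\|_2$; dividing by $\lambda_m$ and substituting the first bound gives the claimed inequality.

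The main obstacle — and the only real departure from \citet{DinhH20Neurips} and Lemma~\ref{server.lemma} — is justifying that $H(\cdot;\tilde{\thetab}_m;\mathcal{K}_m)$ can play the role the risk plays on the server side: that its zero-level set is the set of $\thetab_m$ reproducing $\tilde{\bm\Theta}$'s significant embedding components, that this set sits inside $\mathcal{C}^*$ so the \emph{same} Łojasiewicz exponent $\nu$ controls $d(\cdot,\mathcal{C}^*)$, and that zeroing the non-significant input weights stays in that set. All three are consequences of Proposition~\ref{sig.prop} and Lemma~\ref{sig.lemma} — significant embedding components are moved only by significant features and vice versa — which is exactly the VFL-specific ingredient; once these are in hand, every remaining step is a term-by-term copy of the server-side proof with $R\leftrightarrow H$, $\mathcal{S}^*\leftrightarrow\mathcal{C}^*$, $\lambda_s\leftrightarrow\lambda_m$, $L\leftrightarrow B$.
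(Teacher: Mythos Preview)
Your proposal is correct and follows essentially the same route as the paper's own proof: apply the \L{}ojasiewicz inequality to the analytic function $\thetab_m\mapsto H(\thetab_m;\tilde{\thetab}_m;\mathcal{K}_m)$ at its zero $\tilde{\thetab}_m$, swap the reference from $\tilde{\thetab}_m$ to $\hat{\thetab}_m$ via the Lipschitz constant $B$, pass between $H$ and $H_N$ with Lemma~\ref{genboundH.lemma}, use the optimality of $\bar{\thetab}_m$ in \eqref{client_lasso.eq}, and finish with Young's inequality; the second inequality is obtained in the same way by comparing $\bar{\thetab}_m$ with $\phi(\tilde{\thetab}_m)$ and using $H(\phi(\tilde{\thetab}_m);\tilde{\thetab}_m;\mathcal{K}_m)=0$. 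Your additional remarks on why $H$ is a legitimate surrogate (via Proposition~\ref{sig.prop} and Lemma~\ref{sig.lemma}) are a helpful elaboration of what the paper leaves implicit, but otherwise the argument is a term-by-term transcription of Lemma~\ref{server.lemma} with $R\leftrightarrow H$, $L\leftrightarrow B$, $\lambda_s\leftrightarrow\lambda_m$, exactly as you describe.
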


\begin{proof}
Note that $\{\tilde{\thetab}_m\}$ is the zero-level set of 
$H(\bar{\thetab}_m ; \tilde{\thetab}_m ; \mathcal{K}_m)$.
Since $H(\cdot)$ is analytic, we can apply the Łojasiewicz inequality as follows:
\begin{align}
    c_2 d(\bar{\thetab}_m, \mathcal{C}^*)^{\nu} 
    &\leq c_2 d(\bar{\thetab}_m, \{\tilde{\thetab}_m\})^{\nu} \\
    &\leq H(\bar{\thetab}_m ; \tilde{\thetab}_m) \\
    &= H(\bar{\thetab}_m ; \tilde{\thetab}_m) - H(\tilde{\thetab}_m ; \tilde{\thetab}_m)
    \label{client_offset.eq}
\end{align}
where \eqref{client_offset.eq} follows from that fact that $H(\thetab_m ; \thetab_m)=0$ for any $\thetab_m$.

Since $H(\cdot)$ is analytic, we know $H(\cdot)$ is smooth. 
Let $B$ be the Lipschitz constant for $H(\cdot)$. For any $\thetab_m$ we have:
\begin{align}
    | H(\thetab_m ; \hat{\thetab}_m) - H(\thetab_m ; \tilde{\thetab}_m) |
    \leq B \| \hat{\thetab}_m - \tilde{\thetab}_m \|_2\\
    \leq B d(\hat{\thetab}_m, \mathcal{C}^*)
    \label{smooth2.eq}
\end{align}

Applying \eqref{smooth2.eq} and Lemma~\ref{genboundH.lemma} to \eqref{client_offset.eq}:
\begin{align}
    c_2 d(\bar{\thetab}_m, \mathcal{C}^*)^{\nu} 
    &\leq 2B d(\hat{\thetab}_m, \mathcal{C}^*) + H(\bar{\thetab}_m ; \hat{\thetab}_m) - H(\tilde{\thetab}_m ; \hat{\thetab}_m)\\
    &\leq 2c_1 \frac{\log N}{\sqrt{N}} + 2B d(\hat{\thetab}_m, \mathcal{C}^*) 
        + H_N(\bar{\thetab}_m ; \hat{\thetab}_m) - H_N(\tilde{\thetab}_m ; \hat{\thetab}_m) 
        \label{need_def.eq}
\end{align}

By the definition of $\bar{\thetab}_m$ in \eqref{client_lasso.eq}:
\begin{align}
    H_N(\bar{\thetab}_m ; \hat{\thetab}_m) + \lambda_m G(\bar{\thetab}_m) 
    &\leq H_N(\tilde{\thetab}_m ; \hat{\thetab}_m) + \lambda_m G(\tilde{\thetab}_m) \\
    H_N(\bar{\thetab}_m ; \hat{\thetab}_m) - H_N(\tilde{\thetab}_m ; \hat{\thetab}_m)
    &\leq \lambda_m (G(\tilde{\thetab}_m) - G(\bar{\thetab}_m))
    \label{use_def.eq}
\end{align}

Plugging \eqref{use_def.eq} into \eqref{need_def.eq}:
\begin{align}
    c_2 d(\bar{\thetab}_m, \mathcal{C}^*)^{\nu} 
    &\leq 2c_1 \frac{\log N}{\sqrt{N}} + 2B d(\hat{\thetab}_m, \mathcal{C}^*) + \lambda_m (G(\tilde{\thetab}_m) - G(\bar{\thetab}_m)) \\ 
    &\leq 2c_1 \frac{\log N}{\sqrt{N}} + 2B d(\hat{\thetab}_m, \mathcal{C}^*) + \lambda_m C\|\tilde{\thetab}_m - \bar{\thetab}_m\|_2
    \label{need_young2.eq}
\end{align}
where $C$ is the Lipschitz constant for $G(\cdot)$.

By Young's inequality:
\begin{align}
    \lambda_m C\|\tilde{\thetab}_m - \bar{\thetab}_m\|_2
    &\leq \frac{1}{\nu}\left(\frac{(c_2\nu)^{1/\nu}}{2}
    \|\tilde{\thetab}_m - \bar{\thetab}_m\|_2 \right)^{\nu}  +
    \frac{\nu-1}{\nu}\left(\frac{2C}{(2c_2)^{1/\nu}}\lambda_s\right)^{\nu/(\nu-1)} \\
    &= \frac{c_2}{2}\|\tilde{\thetab}_m - \bar{\thetab}_m\|_2  +
    \frac{2(\nu-1)C^{\nu/(\nu-1)}}{\nu(c_2\nu)^{1/(\nu-1)}}\lambda_m^{\nu/(\nu-1)}.
    \label{young2.eq}
\end{align}

Let $C_0 = \frac{2(\nu-1)C^{\nu/(\nu-1)}}{\nu(c_2\nu)^{1/(\nu-1)}}$.
Applying \eqref{young2.eq} to \eqref{need_young2.eq} we have:
\begin{align}
    c_2 \| \tilde{\thetab}_m - \bar{\thetab}_m \|_2^{\nu} 
    &\leq 2c_1 \frac{\log N}{\sqrt{N}} + 2B d(\hat{\thetab}_m, \mathcal{C}^*) + C_0 (\lambda_m)^{\nu / (\nu-1)} + \frac{c_2}{2}\|\tilde{\thetab}_m - \bar{\thetab}_m\|_2^{\nu} \\
    \frac{c_2}{2} \| \tilde{\thetab}_m - \bar{\thetab}_m \|_2^{\nu} 
    &\leq 2c_1 \frac{\log N}{\sqrt{N}} + 2B d(\hat{\thetab}_m, \mathcal{C}^*) + C_0 (\lambda_m)^{\nu / (\nu-1)} \\
    \frac{c_2}{2} \| \tilde{\thetab}_m - \bar{\thetab}_m \|_2^{\nu} 
    &\leq 2c_1 \frac{\log N}{\sqrt{N}} + 2B d(\hat{\thetab}_m, \mathcal{C}^*) + C_0 (\lambda_m)^{\nu / (\nu-1)} \\
     d(\bar{\thetab}_m, \mathcal{C}^*)
     \leq  d(\bar{\thetab}_m, \{\tilde{\thetab}_m\})
    &\leq \left(\frac{4c_1}{c_2} \frac{\log N}{\sqrt{N}} + \frac{4B}{c_2} d(\hat{\thetab}_m, \mathcal{C}^*) + \frac{2C_0}{c_2} (\lambda_m)^{\nu / (\nu-1)}\right)^{1/\nu}
\end{align}

Note that $G(\cdot)$ can be rewritten as $G(\cdot) = \sum_{k} \|\U_m^{k}\|_2 + \sum_{l} \|\V_m^{l}\|_2$.
Let $K(\cdot) = \sum_{k} \|\U_m^{k}\|_2$ be the sum of significant input layer weights in the regularizer $G(\cdot)$.
Let $\phi(\thetab_m)$ be the parameters $\thetab_m$ with all non-significant input layer weights $\V_m$ set to zero.
Note that under our assumption that $\mathcal{K}_m$ only contains significant embedding components and Proposition~\ref{sig.prop},
$H(\phi(\tilde{\thetab}_m) ; \tilde{\thetab}_m ; \mathcal{K}_m) = H(\tilde{\thetab}_m ; \tilde{\thetab}_m ; \mathcal{K}_m)=0$,
because non-significant features have no effect on significant embedding components.
By the definition of $\bar{\thetab}_m$:

\begin{align}
    \lambda_m \sum_{l} \|\V_m^{l}\|_2
    &\leq  H_N(\phi(\tilde{\thetab}_m); \hat{\thetab}_m ; \mathcal{K}_m) - H_N(\bar{\thetab}_m ; \hat{\thetab}_m ; \mathcal{K}_m)
        + \lambda_m (K(\phi(\tilde{\thetab}_m)) - K(\bar{\thetab}_m)) \\
    &\leq 2c_1 \frac{\log N}{\sqrt{N}} + 2B d(\hat{\thetab}_m, \mathcal{C}^*) 
        + H(\phi(\tilde{\thetab}_m); \tilde{\thetab}_m ; \mathcal{K}_m) - H(\bar{\thetab}_m ; \tilde{\thetab}_m ; \mathcal{K}_m)
        + \lambda_m (K(\tilde{\thetab}_m) - K(\bar{\thetab}_m)) \\
    &\leq 2c_1 \frac{\log N}{\sqrt{N}} + 2B d(\hat{\thetab}_m, \mathcal{C}^*)
        + \lambda_m (K(\tilde{\thetab}_m) - K(\bar{\thetab}_m)) \\
    &\leq 2c_1 \frac{\log N}{\sqrt{N}} + 2B d(\hat{\thetab}_m, \mathcal{C}^*) 
        + \lambda_m C \|\tilde{\thetab}_m - \bar{\thetab}_m\|_2 \\
    \sum_{l} \|\V_m^{l}\|_2
    &\leq 2c_1 \frac{\log N}{\lambda_m\sqrt{N}} 
    + \frac{2B}{\lambda_m} d(\hat{\thetab}_m, \mathcal{C}^*) 
        + C d(\bar{\thetab}_m, \{\tilde{\thetab}_m\})
\end{align}

This completes the proof of Lemma~\ref{server.lemma}.
\end{proof}

To complete the proof of Theorem~\ref{client.thm}, we look at the distance
of $\bar{\thetab}_m$ from the set of parameters $\mathcal{C}_{\phi}^*$.
Note that by Lemma~\ref{Hstar.lemma}, $\phi(\tilde{\thetab}_m) \in \mathcal{C}_{\phi}^*$.
Let $\V_{\thetab_m} = \sum_{l} \|\V_m^{l}\|_2$ be the sum over 
non-significant feature weights in a model $\thetab_m$.
\begin{align}
    d(\bar{\thetab}_m, \mathcal{C}_{\phi}^*)
    &\leq \| \bar{\thetab}_m - \phi(\thetab'_s) \|_2 \\
    &\leq \| \bar{\thetab}_m - \tilde{\thetab}_m \|_2 + \| \phi(\tilde{\thetab}_m) - \tilde{\thetab}_m \|_2 \\
    &\leq \| \bar{\thetab}_m - \tilde{\thetab}_m \|_2 + \| \V_{\tilde{\thetab}_m} \|_2 \\
    &\leq \| \bar{\thetab}_m - \tilde{\thetab}_m \|_2 + \| \V_{\tilde{\thetab}_m} +  \V_{\bar{\thetab}_m} - \V_{\bar{\thetab}_m} \|_2 \\
    &\leq \| \bar{\thetab}_m - \tilde{\thetab}_m \|_2 + \| \V_{\bar{\thetab}_m} \|_2 + \| \V_{\tilde{\thetab}_m} - \V_{\bar{\thetab}_m} \|_2 \\
    &\leq \| \bar{\thetab}_m - \tilde{\thetab}_m \|_2 + \| \V_{\bar{\thetab}_m} \|_2 + C\| \bar{\thetab}_m - \tilde{\thetab}_m \|_2.
    \label{phi_dist2.eq}
\end{align}

Note that:
\begin{align}
    d(\hat{\thetab}_m, \mathcal{C}^*)
    = \| \tilde{\thetab}_m - \hat{\thetab}_m \|_2
    \leq
    \| \tilde{\bm\Theta} - \hat{\bm\Theta} \|_2
    =
    d(\hat{\bm\Theta}, \mathcal{T}^*).
    \label{HtoC.eq}
\end{align}

The proof of Theorem~\ref{client.thm} is completed by combining
Lemma~\ref{client.lemma}, \eqref{phi_dist2.eq}, \eqref{HtoC.eq}, and Lemma~\ref{pretrain.lemma}.

\subsection{Extension to $M>1$ Parties} \label{extension.sec}

\begin{proposition} \label{sig2.prop}
Consider a model $\bm\Theta = [\thetab_s^{\top}, \thetab_1^{\top}, \ldots, \thetab_M^{\top}]^{\top}$.
Let $\s$ and $\z$ be the sets of significant and non-significant features for $f(\bm\Theta)$, respectively.
Let the set of significant embedding components for $f(\thetab_s ; \h_1(\thetab_1^{\true}) ; \ldots ; \h_M(\thetab_M^{\true}))$ be $\s_s$.
Let $g^j(\x_m, r)$ replace input $\x_m^j$ with value $r$.
For each significant embedding component $k \in \s_s$, for all $j \in \z$ and $m \in [M]$, and any $r \in \mathbb{R}$, $\h_m(\thetab_m; \x_m)^k = \h_m(\thetab_m; g^j(\x_m, r))^k$.
\end{proposition}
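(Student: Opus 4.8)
The plan is to argue by contradiction, following the template of the proof of Proposition~\ref{sig.prop}, while exploiting the fact that the parties' feature sets are disjoint, so that perturbing a feature held by one party affects only that party's embedding. Throughout, $\h_s(\thetab_s;\cdot)$ denotes the server model and $\s_s$ its set of significant embedding components in the sense of Definition~\ref{sig.def}.

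First I would record a \emph{localization} observation. Fix a party $m$ and a non-significant feature $j\in\z$ belonging to party $m$. Since $\X_1,\ldots,\X_M$ index disjoint feature sets, replacing $\x_m^j$ by an arbitrary value $r$ via $g^j(\x_m,r)$ leaves $\x_{m'}$ — and hence $\h_{m'}(\thetab_{m'};\x_{m'})$ — unchanged for every $m'\neq m$. Therefore the only channel through which this perturbation can influence $f(\bm\Theta;\x)$ is party $m$'s embedding block $\h_m(\thetab_m;\x_m)$. For a feature $j$ not held by party $m'$, the asserted equality $\h_{m'}(\thetab_{m'};\x_{m'})^k=\h_{m'}(\thetab_{m'};g^j(\x_{m'},r))^k$ is vacuous, so it suffices to treat the party that owns $j$.

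Then I would suppose the conclusion fails: there exist a significant embedding component $k\in\s_s$, a non-significant feature $j\in\z$ of party $m$, an input $\x$, and a value $r$ with $\h_m(\thetab_m;\x_m)^k\neq\h_m(\thetab_m;g^j(\x_m,r))^k$. Because $k$ is significant for $\h_s(\thetab_s;\cdot)$, i.e., the server model is not constant in its $k$-th input coordinate, this change in the $k$-th embedding coordinate carries through to a change in the network output, so that $f(\bm\Theta;\x)\neq f(\bm\Theta;g^j(\x,r))$ for suitable $\x,r$, with only party $m$'s block varying by the localization step. This contradicts $j$ being non-significant for $f(\bm\Theta)$, which by Definition~\ref{sig.def} forces $f(\bm\Theta;\x)=f(\bm\Theta;g^j(\x,r))$ for all $\x$ and all $r$. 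Hence no such $k,j,\x,r$ exist, and the claimed equality holds for every party and every $j\in\z$.

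The step I expect to be the main obstacle — and the one the proof sketch singles out as the crux of the $M>1$ extension — is the \emph{propagation} claim in the previous paragraph: passing from ``$\h_s(\thetab_s;\cdot)$ depends on coordinate $k$ somewhere'' (the bare definition of a significant component) to ``this particular perturbation of coordinate $k$, induced by changing $j$ with the other parties' blocks frozen at their current values, actually moves $f$''. For $M>1$ one must rule out both that freezing the other blocks at $\h_{m'}(\thetab_{m'};\x_{m'})$ lands exactly on the locus where $\h_s$ is locally constant along coordinate $k$, and that the induced variation in coordinate $k$ is exactly cancelled by the concomitant variation in the other coordinates of $\h_m$. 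I would close this gap using analyticity of the server model and the hidden-layer maps (Assumption~\ref{analytic.assum}) together with positivity and continuity of the input density (Assumption~\ref{data.assum}): the set of $(\x,r)$ at which the perturbation fails to propagate lies in the zero set of a not-identically-zero analytic function and is therefore negligible, so a witnessing pair $(\x,r)$ always exists. This is exactly where the coupling between significant embedding components and significant party features must be exploited, and once it is in place the remaining $M>1$ arguments (Theorems~\ref{server.thm} and~\ref{client.thm}) reduce to their $M=1$ counterparts.
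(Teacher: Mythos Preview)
Your contradiction argument is exactly the paper's proof: suppose the $k$-th embedding component changes when a non-significant feature $j$ of party $m$ is perturbed, invoke significance of $k$ to conclude that $f$ changes, and obtain a contradiction with $j\in\z$. The paper's version is much terser than yours: the localization step is left implicit (only party $m$'s argument is written as $g^j(\x_m,r)$), and the propagation step you single out as the main obstacle is simply asserted---``by our supposition and since component $k$ is significant, $f(\ldots)\neq f(\ldots)$''---with no appeal to analyticity or to Assumptions~\ref{data.assum}--\ref{analytic.assum}. Your worry about that step (that significance of $k$ only guarantees sensitivity \emph{somewhere}, and that other coordinates of $\h_m$ could in principle cancel the change) is a fair reading of Definition~\ref{sig.def}, and the analyticity argument you sketch is a reasonable way to make it airtight; but the paper does not go there, so at the level of rigor the paper adopts your first two paragraphs already reproduce its entire proof.
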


\begin{proof}
Suppose that for a party $m$,
$\h_m(\thetab_m; \x_m)^k \neq \h_m(\thetab_m; g^j(\x_m, r))^k$
for some significant embedding component $k \in \s_s$, non-significant feature $j \in \z$, and $r \in \mathbb{R}$.
By our supposition and
since component $k$ is significant, 
$$f(\thetab_s ; \h_1(\thetab_1 ; \x_1) ; \ldots ; \h_m(\thetab_m ; \x_m ) ; \ldots ; \h_M(\thetab_M ; \x_M)) \neq f(\thetab_s ; \h_1(\thetab_1 ; \x_1) ; \ldots ; \h_m(\thetab_m ; g^j(\x_m ; r)) ; \ldots ; \h_M(\thetab_M ; \x_M))$$ 
for some value $r \in \mathbb{R}$. 
This contradicts the fact that $j$ is a non-significant feature.
\end{proof}

\begin{lemma} \label{sig2.lemma}
Let $\tilde{\bm\Theta} = [\tilde{\thetab}_s^{\top}, \tilde{\thetab}_1^{\top}, \ldots, \tilde{\thetab}_M^{\top}]^{\top} \in \mathcal{T}^*$.
Let $\s$ and $\z$ be the significant and non-significant features for $f(\bm\Theta^{\true})$.
Let $\mathcal{K}_m$ be the subset of significant embedding components for 
$f(\tilde{\thetab}_s ; \h_1(\tilde{\thetab}_1) ; \ldots ; \h_M(\tilde{\thetab}_M))$ in the embedding vector $\h_m(\tilde{\thetab}_m)$.
Let $\thetab_m = \argmin_{\thetab_m'} H(\thetab_m' ; \tilde{\thetab}_m ; \mathcal{K}_m)$
for all parties $m$.
Let $\s_{h_m}$ and $\z_{h_m}$ be the significant and non-significant features at each 
party $m$ for $e(\thetab_m ; \tilde{\thetab}_m ; \mathcal{K}_m)$ with parameters $\thetab_m$.
Then: $$[\tilde{\thetab}_s, \thetab_1, \ldots, \thetab_M] \in \mathcal{T}^*,~
\bigcup\limits_{m=1}^{M} \s_{h_m} = \s, \text{ and } \bigcup\limits_{m=1}^{M} \z_{h_m} = \z.$$
\end{lemma}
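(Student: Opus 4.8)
The plan is to mirror the proof of Lemma~\ref{sig.lemma}, using the multi-party analogue Proposition~\ref{sig2.prop} in place of Proposition~\ref{sig.prop} and running the argument party-by-party, with the extra bookkeeping that each input feature belongs to exactly one party. First I would show that the reassembled model $[\tilde{\thetab}_s, \thetab_1, \ldots, \thetab_M]$ still lies in $\mathcal{T}^*$. Since each $\thetab_m$ minimizes $H(\cdot\,;\tilde{\thetab}_m;\mathcal{K}_m)$ and $\tilde{\thetab}_m$ already attains value $0$, we have $H(\thetab_m;\tilde{\thetab}_m;\mathcal{K}_m)=0$, which under Assumption~\ref{data.assum} (positive, continuous input density) and Assumption~\ref{analytic.assum} forces $\h_m(\thetab_m;\x_m)^k=\h_m(\tilde{\thetab}_m;\x_m)^k$ for every $k\in\mathcal{K}_m$ and every $\x_m$. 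The components of $\h_m$ outside $\mathcal{K}_m$ are, by definition, non-significant for the fusion model $\h_s(\tilde{\thetab}_s;\cdot)$, so changing them does not change $f$; hence replacing $\tilde{\thetab}_m$ by $\thetab_m$ one party at a time leaves $f(\tilde{\thetab}_s;\h_1;\ldots;\h_M)$ unchanged on all inputs, giving $R([\tilde{\thetab}_s,\thetab_1,\ldots,\thetab_M]) = R(\tilde{\bm\Theta}) = R(\bm\Theta^{\true})$.

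Next I would identify the global feature structure. Because $\tilde{\bm\Theta}\in\mathcal{T}^*$, Lemma~3.1 of \citet{DinhH20Neurips} gives $f(\tilde{\bm\Theta};\x)=f(\bm\Theta^{\true};\x)$ for all $\x$, so the significant and non-significant feature sets of $f(\tilde{\bm\Theta})$ equal $\s$ and $\z$. I decompose these by owning party, $\s=\bigcup_m\s^{(m)}$ and $\z=\bigcup_m\z^{(m)}$ (disjoint unions, since each feature lives at one party, and $\s^{(m)}\cup\z^{(m)}$ is exactly party $m$'s feature set). The \emph{easy} inclusion is then immediate from Proposition~\ref{sig2.prop} applied to $\tilde{\bm\Theta}$: every $k\in\mathcal{K}_m$ is unchanged by perturbing a feature $\x_m^j$ with $j\in\z^{(m)}$, hence so is $\h_m(\thetab_m;\cdot)^k$ by the identity established above, hence so is $e(\thetab_m;\tilde{\thetab}_m;\mathcal{K}_m)$; therefore $\z^{(m)}\subseteq\z_{h_m}$ for every $m$, i.e. $\z\subseteq\bigcup_m\z_{h_m}$.

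The crux is the reverse inclusion $\s\subseteq\bigcup_m\s_{h_m}$. Fix $m$ and $j\in\s^{(m)}$. Since $j$ is local to party $m$ it can influence $f$ only through the embedding $\h_m$, and the components of $\h_m$ outside $\mathcal{K}_m$ cannot transmit any change to $f$ (they are non-significant for $\h_s(\tilde{\thetab}_s;\cdot)$); so $j$ must alter some $k\in\mathcal{K}_m$ of $\h_m(\tilde{\thetab}_m;\cdot)$, hence of $\h_m(\thetab_m;\cdot)$, hence alters $e(\thetab_m;\tilde{\thetab}_m;\mathcal{K}_m)$, so $j\in\s_{h_m}$. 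Thus $\s^{(m)}\subseteq\s_{h_m}$ for every $m$, i.e. $\s\subseteq\bigcup_m\s_{h_m}$. Finally, because for each $m$ the pair $\s_{h_m},\z_{h_m}$ partitions party $m$'s features while $\bigcup_m\s^{(m)}$ and $\bigcup_m\z^{(m)}$ partition all features, the four containments just proved cannot be strict, yielding $\bigcup_m\s_{h_m}=\s$ and $\bigcup_m\z_{h_m}=\z$; together with the $\mathcal{T}^*$-membership established above this proves Lemma~\ref{sig2.lemma}.

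I expect the crux step to be the main obstacle. With several parties feeding a single fusion model, one has to rule out that a feature's influence on $f$ could \emph{hide} inside another party's embedding or inside a non-significant component of its own embedding, so that locality of the feature genuinely forces its influence through $\mathcal{K}_m$ specifically; this is exactly what Proposition~\ref{sig2.prop} together with the non-significance of the complement components provides, and it is the key difficulty in the extension to $M>1$ noted in the proof sketch. Everything else — the generalization-to-$\mathcal{T}^*$ step and the matching of the global feature partition with the per-party decomposition of features and embedding components — is routine bookkeeping once that transfer principle is in place.
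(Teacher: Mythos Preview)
Your proposal is correct and follows essentially the same route as the paper's proof: establish $\mathcal{T}^*$-membership from $H=0$ forcing the significant embedding components to match, transfer the significant/non-significant feature sets from $\bm\Theta^{\true}$ to $\tilde{\bm\Theta}$ via Lemma~3.1 of \citet{DinhH20Neurips}, then prove the two containments party-by-party using Proposition~\ref{sig2.prop} and finish with the partition argument. Your treatment of the ``crux'' direction $\s^{(m)}\subseteq\s_{h_m}$ is in fact slightly more explicit than the paper's --- you spell out the locality step (a feature at party $m$ can only reach $f$ through $\h_m$, and only through the $\mathcal{K}_m$-components since the others are non-significant for $\h_s(\tilde{\thetab}_s;\cdot)$), whereas the paper attributes this directly to Proposition~\ref{sig2.prop}.
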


\begin{proof}
Note that the minimization of $H(\thetab_m ; \tilde{\thetab}_m ; \mathcal{K}_m)$
causes $\h_m(\thetab_m ; \x_m)_i = \h_m(\tilde{\thetab}_m ; \x_m)_i$ for all significant embedding components $i \in \mathcal{K}_m$ and any input $\x_m$. 
By the definition of $\mathcal{T}^*$ and Definition~\ref{sig.def}, this means that $$R(\thetab_s^{\true} ; \h_1(\thetab_1^{\true}) ; \ldots ; \h_M(\thetab_M^{\true})) = R(\tilde{\thetab}_s ; \h_1(\tilde{\thetab}_1) ; \ldots ; \h_M(\tilde{\thetab}_M)) = R(\tilde{\thetab}_s ; \h_1(\thetab_1) ; \ldots ; \h_M(\thetab_M)).$$ Thus, $[\tilde{\thetab}_s, \thetab_1, \ldots, \thetab_M] \in \mathcal{T}^*$.

By Lemma~$3.1$ in \cite{DinhH20Neurips}, because $\tilde{\bm\Theta} \in \mathcal{T}^*$, 
$f(\tilde{\bm\Theta} ; \x ; y) = f(\bm\Theta^{\true} ; \x ; y)$
for all inputs $\x$. This means that the significant and non-significant 
features for $f(\bm\Theta^{\true})$ must be the same for $f(\tilde{\bm\Theta})$. 
Let $\tilde{\s}$ and $\tilde{\z}$ be the significant and non-significant features for $f(\tilde{\bm\Theta})$.
It must be the case that $\tilde{\s} = \s$ and $\tilde{\z} = \z$.

Let $j \in \tilde{\z}_m$ be a non-significant feature for some party $m$ 
in $f(\tilde{\bm\Theta})$ and let $r \in \mathbb{R}$. 
Let $g^j(\cdot)$ be defined the same as in Definition~\ref{sig.def}.
We know that for all $k \in \mathcal{K}_m$:
$$\h_m(\thetab_m ; g^j(\x_m, r))^k = \h_m(\tilde{\thetab}_m ; g^j(\x_m, r))^k = \h_m(\tilde{\thetab}_m ; \x_m)^k$$ 
because by Proposition~\ref{sig.prop},
all embedding components in $\mathcal{K}_m$ only depend on $\tilde{\s}_m$.
Since $\h_m(\thetab_m ; \cdot)^k$ is unaffected by features in $\tilde{\z}_m$,
this means the set of non-significant features 
for $e(\thetab_m ; \tilde{\thetab}_m ; \mathcal{K}_m)$ contains
the set of non-significant features for $f(\tilde{\bm\Theta})$ at party $m$: $\z_{h_m} \supseteq \tilde{\z}_m$.

Similarly, let $k \in \tilde{\s}_m$ be a significant feature for some party $m$ in $f(\tilde{\bm\Theta})$.
By Proposition~\ref{sig2.prop}, for some $k \in \mathcal{K}_m$:
$$\h_m(\thetab_m ; g^k(\x_m, r))^k = \h_m(\tilde{\thetab}_m ; g^k(\x_m, r))^k \neq \h_m(\tilde{\thetab}_m ; \x)^k$$ 
for some $r \in \mathbb{R}$.
This means that:
\begin{align*}
\sum_{k \in \mathcal{K}_m}(\h_m(\thetab_m ; g^k(\x_m, r))^k - \h_m(\tilde{\thetab}_m ; \x_m)^k )^2 \neq 
\sum_{k \in \mathcal{K}_m} (\h_m(\thetab_m ; \x_m)^k - \h_m(\tilde{\thetab}_m ; \x_m)^k )^2    
\end{align*}
and we can say that significant features for $e(\thetab_m ; \tilde{\thetab}_m ; \mathcal{K}_m)$ contains the set of significant features for $f(\tilde{\bm\Theta})$ at party $m$: $\s_{h_m} \supseteq \tilde{\s}_m$.

Since for each party $m$, $\s_{h_m} \cap \z_{h_m} = \emptyset$, 
$\s_{h_m} = \tilde{\s}_m$ and $\z_{h_m} = \tilde{\z}_m$. 
Since for parties $m \neq j$, $\tilde{\s}_m \cap \s_j' = \emptyset$ and $\tilde{\z}_m \cap \z_j' = \emptyset$,
$\bigcup\limits_{m=1}^{M} \s_{h_m} = \tilde{\s} = \s$ and $\bigcup\limits_{m=1}^{M} \z_{h_m} = \tilde{\z} = \z$. 
This completes the proof of Lemma~\ref{sig2.lemma}.
\end{proof}

We redefine $\mathcal{S}_{\phi}^*$ for $M>1$:
$$\mathcal{S}_{\phi}^* = \{\thetab_s : \exists \thetab_m ~\forall m=1,\ldots,M \text{ s.t. } 
[\thetab_s, \thetab_1,\ldots,\thetab_M] \in \mathcal{T}^* \text{ and } \V_s = \zero \}.$$
We bound the distance $d(\thetab_s, \mathcal{S}_{\phi}^*)$ in the following theorem.
\begin{theorem} \label{serverM.thm}
Let $L$ be the Lipschitz constant for $f(\cdot)$.
Given a pre-trained model $\hat{\bm\Theta}$ defined by \eqref{pretrain.eq},
for any $\delta > 0$, there exists $C_N, C_{\delta}(\delta) > 0$
and $N \geq N_0(\delta)$ such that:
    \begin{align}
        d(\thetab_s, \mathcal{S}_{\phi}^*)
        \leq L C_{N}\frac{\log N}{\lambda_s\sqrt{N}}
    + L C_{\delta} \left(\frac{\log N}{\sqrt{N}} + \lambda_s^{\nu / (\nu - 1)} \right)^{1/\nu}
    \end{align}
    with probability $1-\delta$.
    If $\lambda_s \sim N^{-1/4}$, then with probability $1-\delta$ there exists $C(\delta)>0$ such that:
    \begin{align}
    d(\thetab_s, \mathcal{S}_{\phi}^*) \leq L C \left( \frac{\log N}{N} \right)^{\frac{1}{4(\nu-1)}}
    \label{serverthmM.eq}
    \end{align}
\end{theorem}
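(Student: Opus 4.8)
The plan is to reduce Theorem~\ref{serverM.thm} to the single-party analysis already established in Lemma~\ref{server.lemma} and the proof of Theorem~\ref{server.thm}. The essential observation is that Stage~2 optimizes \eqref{server_lasso.eq} over $\thetab_s$ alone, with the frozen embeddings $(\h_1(\hat{\thetab}_1),\ldots,\h_M(\hat{\thetab}_M))$ playing the role of a single concatenated input vector and $G(\thetab_s)$ grouping the input-layer weights one embedding component at a time. No step of the $M=1$ derivation used that this input originated from one party, so the whole chain transfers once the concatenated embedding is recognized as a bona fide input to the server model. I would therefore re-run that argument, tracking only where $M$ enters.

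Concretely: (i) set $\tilde{\bm\Theta}=\argmin_{\bm\Theta\in\mathcal{T}^*}\|\bm\Theta-\hat{\bm\Theta}\|_2$ and write it as $[\tilde{\thetab}_s^\top,\tilde{\thetab}_1^\top,\ldots,\tilde{\thetab}_M^\top]^\top$; (ii) apply the \L ojasiewicz inequality to the analytic map $\thetab_s\mapsto R(\thetab_s;\h_1(\tilde{\thetab}_1),\ldots,\h_M(\tilde{\thetab}_M))-R(\tilde{\thetab}_s;\h_1(\tilde{\thetab}_1),\ldots,\h_M(\tilde{\thetab}_M))$, whose zero set lies in $\mathcal{S}^*$ and contains $\tilde{\thetab}_s$, to obtain $c_2 d(\bar{\thetab}_s,\mathcal{S}^*)^\nu\le R(\bar{\thetab}_s;\tilde{\thetab}_{1:M})-R(\tilde{\thetab}_s;\tilde{\thetab}_{1:M})$ where I abbreviate the frozen embeddings of $\tilde{\bm\Theta}$; (iii) use $L$-Lipschitzness of $R$ to replace each $\h_m(\tilde{\thetab}_m)$ by $\h_m(\hat{\thetab}_m)$, with error controlled by $\|(\h_1(\tilde{\thetab}_1),\ldots,\h_M(\tilde{\thetab}_M))-(\h_1(\hat{\thetab}_1),\ldots,\h_M(\hat{\thetab}_M))\|_2\le L\|\tilde{\bm\Theta}-\hat{\bm\Theta}\|_2=L\,d(\hat{\bm\Theta},\mathcal{T}^*)$ --- this is the only place the concatenation replaces the single embedding of the $M=1$ proof; (iv) pass from $R$ to $R_N$ via Lemma~\ref{genbound.lemma}; (v) invoke optimality of $\bar{\thetab}_s$ for \eqref{server_lasso.eq} together with Lipschitzness of $G$ to produce a term $\lambda_s C\|\tilde{\thetab}_s-\bar{\thetab}_s\|_2$; and (vi) absorb that term with Young's inequality. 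This yields the $M>1$ analog of Lemma~\ref{server.lemma}: a bound on $\|\tilde{\thetab}_s-\bar{\thetab}_s\|_2$ and on the sum $\sum_l\|\V_s^l\|_2$ of non-significant embedding weights in terms of $\tfrac{\log N}{\sqrt N}$, $\lambda_s^{\nu/(\nu-1)}$, and $d(\hat{\bm\Theta},\mathcal{T}^*)$. Combining with the triangle-inequality bookkeeping of \eqref{phi_dist1.eq} (noting $\phi(\tilde{\thetab}_s)\in\mathcal{S}_\phi^*$ by Lemma~\ref{Hstar.lemma}) and with Lemma~\ref{pretrain.lemma}, which gives $d(\hat{\bm\Theta},\mathcal{T}^*)=O(\log N/\sqrt N)$, delivers the first displayed bound, keeping $L$ explicit and collecting the $\delta$-dependence into $C_\delta$.

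For the second display I would substitute $\lambda_s\sim N^{-1/4}$ and do the asymptotics: $\lambda_s^{\nu/(\nu-1)}=N^{-\nu/(4(\nu-1))}$, so $(\tfrac{\log N}{\sqrt N}+\lambda_s^{\nu/(\nu-1)})^{1/\nu}=O((\log N/N)^{1/(4(\nu-1))})$ up to constants, while $\tfrac{\log N}{\lambda_s\sqrt N}=\log N\cdot N^{-1/4}$ is of the same or smaller order, since the \L ojasiewicz exponent necessarily satisfies $\nu\ge 2$ (a consequence of $R$ being smooth with $\mathcal{T}^*$ its global minimizer set), hence $1/(4(\nu-1))\le 1/4$. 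Folding everything into a single constant $C(\delta)$ gives \eqref{serverthmM.eq}.

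I expect the only genuinely non-routine point to be convincingly arguing that none of Lemmas~\ref{Hstar.lemma}--\ref{server.lemma} nor the \L ojasiewicz step secretly exploited a single input party; once that is settled, $M$ enters only through the dimension of the concatenated embedding and through the Lipschitz constant relating perturbations of $(\h_1,\ldots,\h_M)$ to perturbations of $\hat{\bm\Theta}$, leaving the rate unchanged. In particular, the $\sqrt M$ factor in Theorem~\ref{main_feature.thm} is not produced here but later, when this server bound is combined with the party-side bound (Theorem~\ref{client.thm}) through the significance-transfer argument of Lemma~\ref{sig2.lemma}; for the server step alone, $M$ affects only hidden constants.
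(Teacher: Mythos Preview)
Your proposal is correct and takes essentially the same approach as the paper: the paper's proof of Theorem~\ref{serverM.thm} is literally the single sentence ``The proof is the same as the proof of Theorem~\ref{server.thm} when replacing $R(\thetab_s;\h(\thetab_m))$ with $R(\thetab_s;\h_1(\thetab_1);\ldots;\h_M(\thetab_M))$,'' which is exactly the reduction you spell out. Your walk-through of steps (i)--(vi), the triangle-inequality bookkeeping via \eqref{phi_dist1.eq}, and the appeal to Lemma~\ref{pretrain.lemma} faithfully reproduce the $M=1$ argument of Appendix~\ref{serverthm.sec} with the concatenated embedding in place of the single one; your closing remark that the $\sqrt{M}$ factor arises only later, in the combination step, is also correct.
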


\begin{proof}
The proof of Theorem~\ref{serverM.thm} is the same as the proof of 
Theorem~\ref{server.thm} in Appendix~\ref{serverthm.sec}
when replacing $R(\thetab_s ; \h(\thetab_m))$
with $R(\thetab_s ; \h_1(\thetab_1) ; \ldots \h_M(\thetab_M))$.
\end{proof}

We define the set of party $m$ parameters in $\mathcal{T}^*$ that have the weights
on local non-significant features set to zero as:
$$\mathcal{C}_{m}^* = \{\thetab_m : \exists \thetab_s \text{ and } \thetab_j ~\forall j \neq m \text{ s.t. } 
[\thetab_s, \thetab_1, \ldots, \thetab_m, \ldots, \thetab_M] \in \mathcal{T}^* \text{ and } \V_m = \zero \}.$$
We bound the distance $d(\thetab_m, \mathcal{C}_{m}^*) \rightarrow 0$ in the following theorem.
\begin{theorem} \label{clientM.thm}
Let $B$ be the Lipschitz constant for $H(\cdot)$.
Let $[\tilde{\thetab}_s^{\top}, \tilde{\thetab}_1^{\top}, \ldots, \tilde{\thetab}_M^{\top}]^{\top} = \argmin_{\bm\Theta \in \mathcal{T}^*} \| \bm\Theta - \hat{\bm\Theta} \|_2$
where $\hat{\bm\Theta}$ is the pre-trained model defined in \eqref{pretrain.eq}.
If $\mathcal{K}_m$ in \eqref{client_lasso.eq} is the subset of
significant embedding components for $f(\tilde{\thetab}_s ; \h_1(\tilde{\thetab}_1) ; \ldots ; \h_M(\tilde{\thetab}_M))$ in $\h_m(\tilde{\thetab}_m)$, then for each party $m$, for any $\delta > 0$, there exists $C_N, C_{\delta}(\delta) > 0$
and $N \geq N_0(\delta)$ such that:
    \begin{align}
        d(\thetab_m, \mathcal{C}_{m}^*)
        \leq B C_N  \frac{\log N}{\lambda_m\sqrt{N}} 
        + C_{\delta}\left(B \frac{\log N}{\sqrt{N}} + (\lambda_m)^{\nu / (\nu-1)}\right)^{1/\nu}
    \end{align}
    with probability $1-\delta$.
    If $\lambda_m \sim N^{-1/4}$, then with probability $1-\delta$ there exists $C_m(\delta)>0$ such that:
    \begin{align}
    d(\thetab_m, \mathcal{C}_{m}^*) \leq B C_m \left( \frac{\log N}{N} \right)^{\frac{1}{4(\nu-1)}}
        \label{clientthmM.eq}
    \end{align}
\end{theorem}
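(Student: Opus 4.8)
The plan is to reduce Theorem~\ref{clientM.thm} to the single-party analysis of Theorem~\ref{client.thm}, exploiting the fact that Stage~3 of LESS-VFL decouples across parties: party $m$'s estimator $\bar{\thetab}_m$ from \eqref{client_lasso.eq} depends only on $\X_m$, on its pre-trained parameters $\hat{\thetab}_m$, and on the index set $\mathcal{K}_m$ supplied by the server. Hence, for each fixed $m$, the entire chain of estimates behind Lemma~\ref{client.lemma} carries over essentially verbatim: the {\L}ojasiewicz inequality applied to the analytic map $\thetab_m \mapsto H(\thetab_m ; \tilde{\thetab}_m ; \mathcal{K}_m)$ (whose zero level set is $\{\tilde{\thetab}_m\}$), the $B$-Lipschitz smoothness of $H$ to transfer from $\tilde{\thetab}_m$ to $\hat{\thetab}_m$, the uniform deviation bound of Lemma~\ref{genboundH.lemma}, the optimality of $\bar{\thetab}_m$ in \eqref{client_lasso.eq}, and Young's inequality. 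This reproduces, for each party, the two displayed estimates of Theorem~\ref{clientM.thm}: one controlling $d(\bar{\thetab}_m, \{\tilde{\thetab}_m\})$ and one controlling $\sum_l \|\V_m^l\|_2$.

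The only place where the multi-party structure genuinely enters is the identity $H(\phi(\tilde{\thetab}_m) ; \tilde{\thetab}_m ; \mathcal{K}_m) = 0$ used in the second half of Lemma~\ref{client.lemma}, where $\phi$ zeros out party $m$'s non-significant input weights; this identity is exactly what places $\phi(\tilde{\thetab}_m)$ in $\mathcal{C}_m^*$. In the $M=1$ case it followed from Proposition~\ref{sig.prop}; for $M>1$ I would invoke Proposition~\ref{sig2.prop}, which shows that any embedding component of $f(\tilde{\thetab}_s ; \h_1(\tilde{\thetab}_1) ; \ldots ; \h_M(\tilde{\thetab}_M))$ that is significant for the full model is unaffected by the non-significant features $\z_m$ of party $m$. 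Together with Lemma~\ref{sig2.lemma} --- which records that the $\mathcal{K}_m$'s carry the significant embedding components of each party's sub-network and that the $\s_{h_m}$'s reassemble to the true significant set $\s$ --- this licenses running the single-party argument independently at every party.

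With the two per-party estimates in hand, I would conclude exactly as in the $M=1$ case: combine them through the triangle-inequality chain of \eqref{phi_dist2.eq} (now with $\mathcal{C}_m^*$ in place of $\mathcal{C}_\phi^*$, using $\phi(\tilde{\thetab}_m) \in \mathcal{C}_m^*$ from Lemma~\ref{Hstar.lemma}), replace $d(\hat{\thetab}_m, \mathcal{C}^*)$ by $d(\hat{\bm\Theta}, \mathcal{T}^*)$ via the inclusion in \eqref{HtoC.eq}, and bound $d(\hat{\bm\Theta}, \mathcal{T}^*) = O(\log N / \sqrt{N})$ with Lemma~\ref{pretrain.lemma}. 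This yields the first displayed bound of Theorem~\ref{clientM.thm} with explicit constants $C_N$ and $C_\delta(\delta)$. For the rate \eqref{clientthmM.eq}, one substitutes $\lambda_m \sim N^{-1/4}$: the term $B C_N \log N / (\lambda_m \sqrt{N})$ becomes $O(\log N \cdot N^{-1/4})$ and the bracketed term becomes $O\!\big((\log N/\sqrt{N})^{1/\nu} + N^{-1/(4(\nu-1))}\big)$; in the relevant regime ($\nu > 1$, typically $\nu \ge 2$ for an analytic risk) the slowest piece is of order $(\log N/N)^{1/(4(\nu-1))}$, which absorbs the faster-decaying terms and gives the stated polynomial rate.

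The step I expect to be the main obstacle is not the inequality chain --- which is a transcription of the single-party proof --- but making the per-party reduction airtight: one must verify that the unregularized minimizer appearing in Lemma~\ref{sig2.lemma} is consistent with the regularized estimator $\bar{\thetab}_m$ in the asymptotic regime, and, more subtly, that the cross-party significance bookkeeping is sound --- that because the parties hold disjoint feature sets, no feature significant at one party is misclassified via interactions routed through another party's embedding, so that the decomposition $\bigcup_m \s_{h_m} = \s$, $\bigcup_m \z_{h_m} = \z$ of Lemma~\ref{sig2.lemma} genuinely holds. Once this disjointness and the ``only significant features drive $\mathcal{K}_m$'' property are pinned down, the remainder is $M$ parallel copies of the proof of Theorem~\ref{client.thm}.
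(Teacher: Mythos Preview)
Your proposal is correct and matches the paper's approach exactly. The paper's proof of Theorem~\ref{clientM.thm} is a single sentence --- ``follows from applying the proof of Theorem~\ref{client.thm} to each party $m$, replacing $\mathcal{C}_{\phi}^*$ with $\mathcal{C}_m^*$'' --- and your write-up is a faithful (and considerably more detailed) unpacking of precisely that reduction, correctly flagging Proposition~\ref{sig2.prop} and Lemma~\ref{sig2.lemma} as the multi-party replacements for Proposition~\ref{sig.prop} and Lemma~\ref{sig.lemma}.
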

Theorem~\ref{clientM.thm} follows from applying the proof of Theorem~\ref{client.thm} to each party $m$, replacing $\mathcal{C}_{\phi}^*$ with $\mathcal{C}_{m}^*$.

\subsection{Proof of Theorem~\ref{main_feature.thm}}
Let constant $C$ be defined as in Theorem~\ref{serverM.thm} and let
constant $C_m$ be defined the same as in Theorem~\ref{clientM.thm} for all parties $m$.
Let $B_m$ be the Lipschitz constant of $H(\cdot)$ at party $m$.
Then by Theorems~\ref{serverM.thm} and \ref{clientM.thm}, with probability $1-\delta$:
\begin{align}
    d(\bar{\bm\Theta}, \mathcal{T}_{\phi}^*) 
    &= \sqrt{d(\bar{\thetab}_s, \mathcal{S}_{\phi}^*)^2 + d(\bar{\thetab}_1, \mathcal{C}_{1}^{\true})^2 + \ldots + d(\bar{\thetab}_M, \mathcal{C}_{m}^*)^2} \\
    &\leq \sqrt{L^2 C^2 \left( \frac{\log N}{N} \right)^{\frac{1}{2(\nu-1)}} 
    + \left( \frac{\log N}{N} \right)^{\frac{1}{2(\nu-1)}} \sum_{m=1}^M B_m^2 C_m^2} \\
    &\leq \sqrt{\left(L^2 C^2 + \sum_{m=1}^M B_m^2 C_m^2\right)}  \left( \frac{\log N}{N} \right)^{\frac{1}{4(\nu-1)}}  \\
    &= O\left(\sqrt{M}  \left( \frac{\log N}{N} \right)^{\frac{1}{4(\nu-1)}} \right).
\end{align}

\section{Additional Experimental Results} \label{exp2.sec}

We now provide additional experimental results. 
We use the same experimental setup as described in Section~\ref{exp.sec}, and provide results for the datasets that were not included previously (MIMIC-III, Gina, Sylva).
We also include a complete results from the grid search,
showing the percentage of spurious feature removed and final training accuracy of group lasso, local lasso, and LESS-VFL with different regularization parameters.

In Figure~\ref{redapp.fig}, we plot the percentage of spurious features removed over $150$ communication epochs of training in the MIMIC-III, Gina, and Sylva datasets. For MIMIC-III and Sylva, we can see that all method perform similarly in terms of removing spurious features quickly, though group lasso lags behind the other methods by a few communication rounds.
In the case of Gina, group lasso takes about $20$ additional communication epochs to start removing spurious features, and oscillates before settling at a percentage lower than the other methods.
Reinforcing the takeaways from the main paper, by allowing feature selection to take place with minimal upfront communication, spurious features can be removed in fewer communication rounds compared to group lasso.

\begin{figure}[t]
    \begin{subfigure}{0.32\textwidth}
        \centering
        \includegraphics[width=\textwidth]{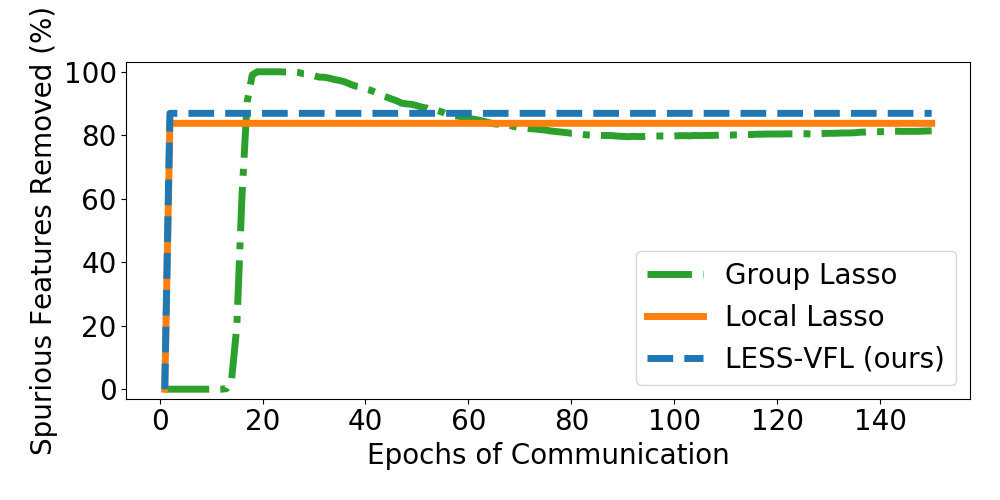}
        \caption{MIMIC-III}
        \label{mimicbar.fig}
    \end{subfigure}
    \hfill
    \begin{subfigure}{0.32\textwidth}
        \centering
        \includegraphics[width=\textwidth]{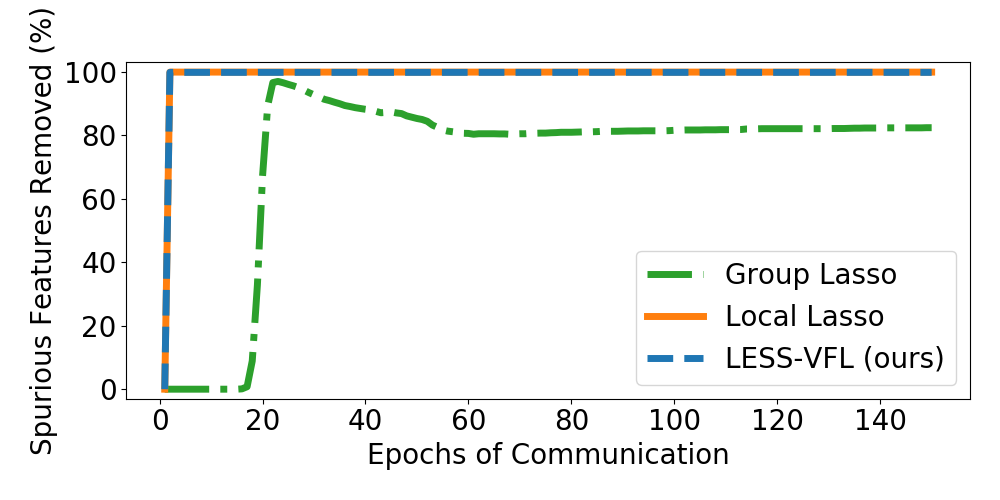}
        \caption{Gina}
        \label{ginabar.fig}
    \end{subfigure}
    \hfill
        \begin{subfigure}{0.32\textwidth}
        \centering
        \includegraphics[width=\textwidth]{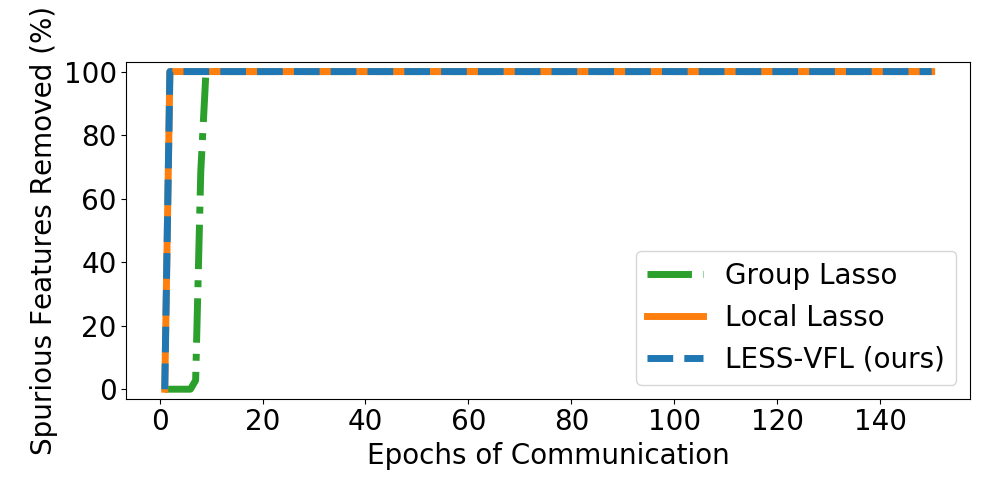}
        \caption{Sylva}
        \label{sylvabar.fig}
    \end{subfigure}
    \hfill
    \caption{Percentage of spurious features removed over $150$ communication epochs. The values shown is the average of $5$ runs. Group Lasso gradually removes features while local lasso and LESS-VFL remove features with one round of communication after pre-training.
    }
    \label{redapp.fig}
\end{figure}

In Figure~\ref{accapp.fig}, we plot the test accuracy against communication cost for all baselines. 
For both MIMIC-III and Sylva, the inclusion of spurious features does not have a large detrimental effect on the VFL test accuracy. In this case, it is important that applying the feature selection methods do not lead to model performance becoming worse than if we had not removed any spurious features.
In the case of MIMIC-III, all methods achieve similar test accuracy.
However, for the Sylva dataset, group lasso is unable to achieve the same accuracy as the other methods in the first $50$ communication epochs.
For the Gina dataset, all feature selection methods achieve test accuracy similar to the VFL baseline without spurious features, although group lasso takes more communication rounds to converge.

\begin{figure}[t]
    \begin{subfigure}{0.32\textwidth}
        \centering
        \includegraphics[width=\textwidth]{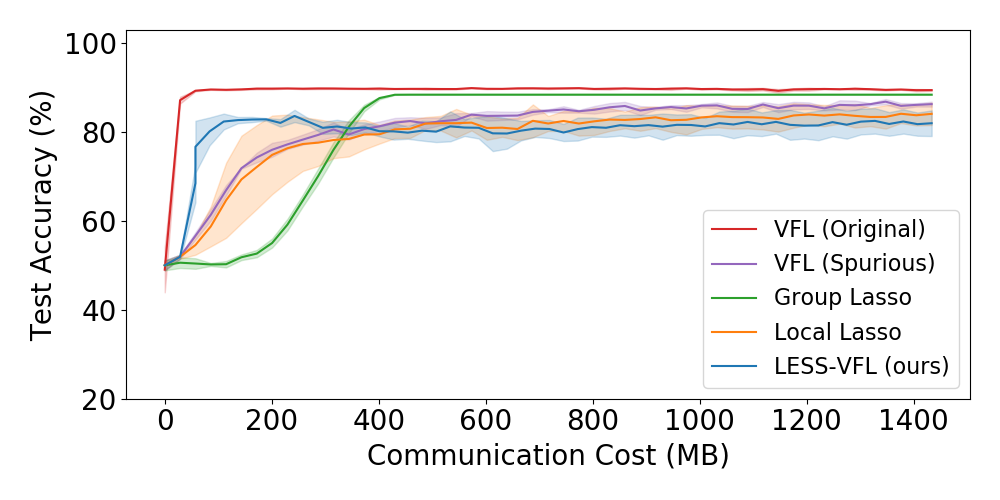}
        \caption{MIMIC-III}
        \label{mimicred5.fig}
    \end{subfigure}
    \hfill
        \begin{subfigure}{0.32\textwidth}
        \centering
        \includegraphics[width=\textwidth]{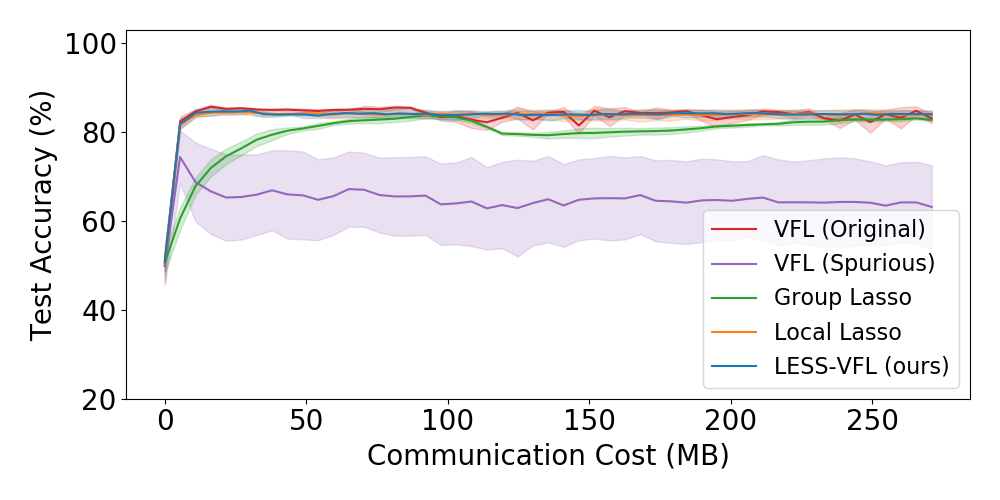}
        \caption{Gina}
        \label{ginared5.fig}
    \end{subfigure}
    \hfill
        \begin{subfigure}{0.32\textwidth}
        \centering
        \includegraphics[width=\textwidth]{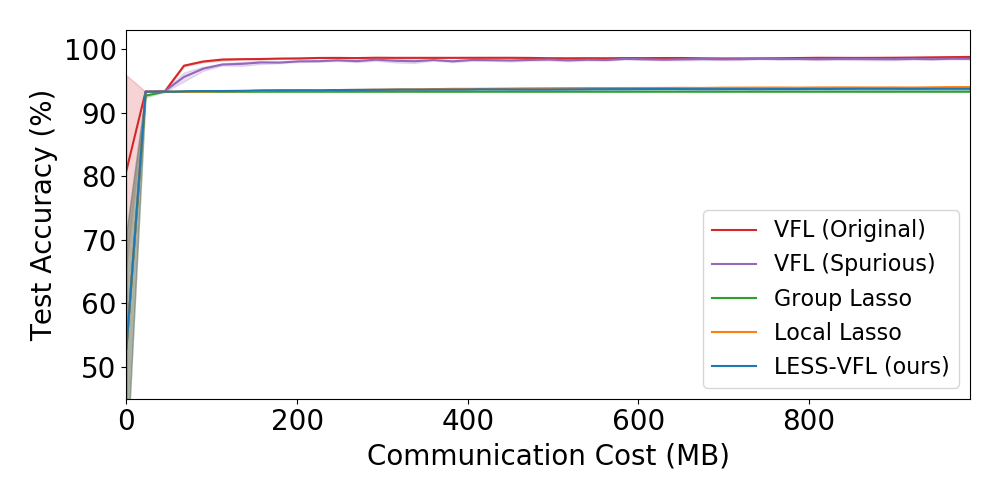}
        \caption{Sylva}
        \label{sylvared5.fig}
    \end{subfigure}
    \hfill
    \caption{Test accuracy over the first $50$ communication epochs. The solid lines are the average of $5$ runs and the shaded region represents the standard deviation.}
    \label{accapp.fig}
\end{figure}

\rev{
In Table~\ref{comm_cost2.table}, we provide the communication cost to reach $90\%$  of the baseline VFL (original) test accuracy and remove $80\%$ of spurious features for different amount of pre-training epochs. 
We show the communication cost taken during pre-training and post feature selection (Post-FS) as well as the total communication cost.
The values shown are the average of five runs, $\pm$ the standard deviation.
We can see that in many cases, LESS-VFL has zero cost for post feature selection. 
This indicates that LESS-VFL removed spurious features and achieves high accuracy 
during feature selection itself.
We can see that LESS-VFL always achieves the same or lower communication cost
than local lasso. Additionally, we see that in the Phishing dataset, local
lasso requires more pre-training epochs in order to achieve its lowest communication
cost to reach the thresholds.
In the Activity dataset, LESS-VFL always costs less communication than local lasso between $1$ and $5$ pre-training epochs. Local lasso's lowest communication cost is $26.56$ MB, while LESS-VFL's highest communication cost is $21.88$ MB.

\begin{table}
    \caption{Communication cost to reach $90\%$ of baseline VFL (original) test accuracy and remove $80\%$ of spurious features for different amount of pre-training epochs. All values are the average of $5$ runs. Bold values are the lowest communication cost achieved by that method on the dataset. `Pretrain' is the communication cost during pre-training, `Post-FS' is the communication cost during training after feature selection is complete, and `Total' is the sum of the previous.}
\label{comm_cost2.table}
\vskip 0.1in
\small
\centering
\begin{tabular}{lccccccc}
    \toprule 
    \multirow{2}{*}{\textbf{Dataset}}& \multirow{2}{*}{\begin{tabular}{@{}c@{}} \textbf{Pre-training} \\ \textbf{Epochs} \end{tabular}} & \multicolumn{6}{c}{\textbf{Communication Cost (MB)}} \\
    \cmidrule(rl){3-8}
    && \multicolumn{3}{c}{Local Lasso} & \multicolumn{3}{c}{LESS-VFL (ours)} \\
    \cmidrule(rl){3-5} \cmidrule(rl){6-8}
    && \subhead{Pretrain} & \subhead{Post-FS} & \subhead{Total} & \subhead{Pretrain} & \subhead{Post-FS} & \subhead{Total} \\
\midrule
\multirow{5}{*}{Activity} & $1$ & 3.59 & 30.15 & 33.74 & 3.59 & 15.97 & 19.56 \\
 & $2$ & 5.38 & 26.56 & 31.95 & 5.38 & 16.49 & 21.88 \\
 & $3$ & 7.18 & 19.39 & \textbf{26.56} & 7.18 & 10.90 & 18.08 \\
 & $4$ & 8.97 & 21.54 & 30.51 & 8.97 & 7.70 & \textbf{16.68} \\
 & $5$ & 10.77 & 18.85 & 29.62 & 10.77 & 10.51 & 21.28 \\
\midrule
\multirow{5}{*}{Phishing} & $1$ & 3.24 & 4.86 & 8.10 & 3.24 & 0.75 & \textbf{3.99} \\
 & $2$ & 4.86 & 1.62 & \textbf{6.48} & 4.86 & 0.38 & 5.23 \\
 & $3$ & 6.48 & 1.62 & 8.10 & 6.48 & 0.00 & 6.48 \\
 & $4$ & 8.10 & 1.62 & 9.72 & 8.10 & 0.00 & 8.10 \\
 & $5$ & 9.72 & 1.62 & 11.34 & 9.72 & 0.00 & 9.72 \\
\midrule
\multirow{5}{*}{MIMIC-III} & $1$ & 7.17 & 0.00 & \textbf{7.17} & 7.17 & 0.00 & \textbf{7.17} \\
 & $2$ & 10.75 & 0.00 & 10.75 & 10.75 & 0.00 & 10.75 \\
 & $3$ & 14.34 & 0.00 & 14.34 & 14.34 & 0.00 & 14.34 \\
 & $4$ & 17.92 & 0.00 & 17.92 & 17.92 & 0.00 & 17.92 \\
 & $5$ & 21.51 & 0.00 & 21.51 & 21.51 & 0.00 & 21.51 \\
\midrule
\multirow{5}{*}{Gina} & $1$ & 1.35 & 0.54 & \textbf{1.90} & 1.35 & 0.13 & \textbf{1.48} \\
 & $2$ & 2.03 & 0.00 & 2.03 & 2.03 & 0.00 & 2.03 \\
 & $3$ & 2.71 & 0.00 & 2.71 & 2.71 & 0.00 & 2.71 \\
 & $4$ & 3.39 & 0.00 & 3.39 & 3.39 & 0.00 & 3.39 \\
 & $5$ & 4.06 & 0.00 & 4.06 & 4.06 & 0.00 & 4.06 \\
\midrule
\multirow{5}{*}{Sylva} & $1$ & 5.62 & 0.00 & \textbf{5.62} & 5.62 & 0.00 & \textbf{5.62} \\
 & $2$ & 8.43 & 0.00 & 8.43 & 8.43 & 0.00 & 8.43 \\
 & $3$ & 11.25 & 0.00 & 11.25 & 11.25 & 0.00 & 11.25 \\
 & $4$ & 14.06 & 0.00 & 14.06 & 14.06 & 0.00 & 14.06 \\
 & $5$ & 16.87 & 0.00 & 16.87 & 16.87 & 0.00 & 16.87 \\
    \bottomrule
\end{tabular}
\end{table}}

In Tables~\ref{acc_train3.table}, \ref{acc_train1.table}, and \ref{acc_train2.table}, we provide the results of our grid search, used to determine the best regularization parameters for each method.
We provide the final training accuracy and percentage of spurious features removed for group lasso, local lasso, 
and LESS-VFL using different regularization values: $(\lambda_m, \lambda_s)$.
Note that the server regularization parameter $\lambda_s$ only applies to LESS-VFL.
The values shown are the average of five runs, $\pm$ the standard deviation.

\begin{table}
    \caption{Training accuracy and percentage of spurious features removed for the Activity and Phishing datasets.}
\label{acc_train3.table}
\vskip 0.1in
\small
\centering
\resizebox{0.98\textwidth}{!}{
\begin{tabular}{llcccccc}
    \toprule 
    \multirow{3}{*}{\textbf{Dataset}}& \multirow{3}{*}{\begin{tabular}{@{}c@{}} \textbf{Regularizer} \\ \textbf{Coefficients} \\ ($\lambda_m$, $\lambda_s$) \end{tabular}} & \multicolumn{2}{c}{\textbf{Group Lasso}} & \multicolumn{2}{c}{\textbf{Local Lasso}} & \multicolumn{2}{c}{\textbf{LESS-VFL (ours)}} \\
      \cmidrule(rl){3-4}  \cmidrule(rl){5-6} \cmidrule(rl){7-8}
    && \subhead{Final} & \subhead{Spurious Features}& \subhead{Final} & \subhead{Spurious Features}& \subhead{Final} & \subhead{Spurious Features} \\
    && \subhead{Accuracy} & \subhead{Removed}& \subhead{Accuracy} & \subhead{Removed}& \subhead{Accuracy} & \subhead{Removed}\\
\midrule
\multirow{7}{*}{Activity} & $(2.0,0.5)$ & 18.22 $\pm$ 0.00 & 100.00 $\pm$ 0.00 & 74.18 $\pm$ 0.77 & 100.00 $\pm$ 0.00 & 47.22 $\pm$ 2.49 & 100.00 $\pm$ 0.00 \\
                      & $(2.0,0.25)$ & 18.22 $\pm$ 0.00 & 100.00 $\pm$ 0.00 & 74.18 $\pm$ 0.77 & 100.00 $\pm$ 0.00 & 69.38 $\pm$ 1.75 & 100.00 $\pm$ 0.00 \\
                      & $(2.0,0.1)$ & 18.22 $\pm$ 0.00 & 100.00 $\pm$ 0.00 & 74.18 $\pm$ 0.77 & 100.00 $\pm$ 0.00 & 73.53 $\pm$ 0.71 & 100.00 $\pm$ 0.00 \\
                      & $(2.0,0.05)$ & 18.22 $\pm$ 0.00 & 100.00 $\pm$ 0.00 & 74.18 $\pm$ 0.77 & 100.00 $\pm$ 0.00 & 74.38 $\pm$ 0.61 & 100.00 $\pm$ 0.00 \\
                      & $(1.0,0.5)$ & 18.22 $\pm$ 0.00 & 100.00 $\pm$ 0.00 & 74.18 $\pm$ 0.77 & 100.00 $\pm$ 0.00 & 50.00 $\pm$ 0.29 & 100.00 $\pm$ 0.00 \\
                      & $(1.0,0.25)$ & 18.22 $\pm$ 0.00 & 100.00 $\pm$ 0.00 & 74.18 $\pm$ 0.77 & 100.00 $\pm$ 0.00 & 69.38 $\pm$ 1.75 & 100.00 $\pm$ 0.00 \\
                      & $(1.0,0.1)$ & 18.22 $\pm$ 0.00 & 100.00 $\pm$ 0.00 & 74.18 $\pm$ 0.77 & 100.00 $\pm$ 0.00 & 73.53 $\pm$ 0.71 & 100.00 $\pm$ 0.00 \\
                      & $(1.0,0.05)$ & 18.22 $\pm$ 0.00 & 100.00 $\pm$ 0.00 & 74.18 $\pm$ 0.77 & 100.00 $\pm$ 0.00 & 74.38 $\pm$ 0.61 & 100.00 $\pm$ 0.00 \\
                      & $(0.5,0.5)$ & 25.37 $\pm$ 8.79 & 100.00 $\pm$ 0.00 & 74.18 $\pm$ 0.77 & 100.00 $\pm$ 0.00 & 45.95 $\pm$ 8.65 & 99.46 $\pm$ 0.54 \\
                      & $(0.5,0.25)$ & 25.37 $\pm$ 8.79 & 100.00 $\pm$ 0.00 & 74.18 $\pm$ 0.77 & 100.00 $\pm$ 0.00 & 72.96 $\pm$ 2.34 & 100.00 $\pm$ 0.00 \\
                      & $(0.5,0.1)$ & 25.37 $\pm$ 8.79 & 100.00 $\pm$ 0.00 & 74.18 $\pm$ 0.77 & 100.00 $\pm$ 0.00 & 73.95 $\pm$ 1.02 & 100.00 $\pm$ 0.00 \\
                      & $(0.5,0.05)$ & 25.37 $\pm$ 8.79 & 100.00 $\pm$ 0.00 & 74.18 $\pm$ 0.77 & 100.00 $\pm$ 0.00 & 74.38 $\pm$ 0.61 & 100.00 $\pm$ 0.00 \\
                      & $(0.25,0.5)$ & 57.10 $\pm$ 1.74 & 100.00 $\pm$ 0.00 & 73.72 $\pm$ 4.78 & 100.00 $\pm$ 0.00 & 45.54 $\pm$ 7.09 & 91.90 $\pm$ 3.76 \\
                      & $(0.25,0.25)$ & 57.10 $\pm$ 1.74 & 100.00 $\pm$ 0.00 & 73.72 $\pm$ 4.78 & 100.00 $\pm$ 0.00 & 78.66 $\pm$ 3.40 & 100.00 $\pm$ 0.00 \\
                      & $(0.25,0.1)$ & 57.10 $\pm$ 1.74 & 100.00 $\pm$ 0.00 & 73.72 $\pm$ 4.78 & 100.00 $\pm$ 0.00 & 74.02 $\pm$ 7.81 & 100.00 $\pm$ 0.00 \\
                      & $(0.25,0.05)$ & 57.10 $\pm$ 1.74 & 100.00 $\pm$ 0.00 & 73.72 $\pm$ 4.78 & 100.00 $\pm$ 0.00 & 73.73 $\pm$ 5.01 & 100.00 $\pm$ 0.00 \\
                      & $(0.1,0.5)$ & 75.47 $\pm$ 1.93 & 88.93 $\pm$ 3.06 & 86.75 $\pm$ 2.04 & 100.00 $\pm$ 0.00 & 49.68 $\pm$ 4.03 & 59.05 $\pm$ 7.24 \\
                      & $(0.1,0.25)$ & 75.47 $\pm$ 1.93 & 88.93 $\pm$ 3.06 & 86.75 $\pm$ 2.04 & 100.00 $\pm$ 0.00 & 86.70 $\pm$ 3.13 & 87.93 $\pm$ 8.91 \\
                      & $(0.1,0.1)$ & 75.47 $\pm$ 1.93 & 88.93 $\pm$ 3.06 & 86.75 $\pm$ 2.04 & 100.00 $\pm$ 0.00 & 88.31 $\pm$ 0.74 & 99.64 $\pm$ 0.71 \\
                      & $(0.1,0.05)$ & 75.47 $\pm$ 1.93 & 88.93 $\pm$ 3.06 & 86.75 $\pm$ 2.04 & 100.00 $\pm$ 0.00 & 87.14 $\pm$ 1.86 & 99.93 $\pm$ 0.14 \\
                      & $(0.05,0.5)$ & 89.98 $\pm$ 2.60 & 1.71 $\pm$ 0.61 & 90.17 $\pm$ 2.02 & 0.64 $\pm$ 0.35 & 43.91 $\pm$ 5.10 & 0.48 $\pm$ 0.34 \\
                      & $(0.05,0.25)$ & 89.98 $\pm$ 2.60 & 1.71 $\pm$ 0.61 & 90.17 $\pm$ 2.02 & 0.64 $\pm$ 0.35 & 89.13 $\pm$ 2.36 & 0.50 $\pm$ 0.36 \\
                      & $(0.05,0.1)$ & 89.98 $\pm$ 2.60 & 1.71 $\pm$ 0.61 & 90.17 $\pm$ 2.02 & 0.64 $\pm$ 0.35 & 87.75 $\pm$ 2.18 & 0.57 $\pm$ 0.43 \\
                      & $(0.05,0.05)$ & 89.98 $\pm$ 2.60 & 1.71 $\pm$ 0.61 & 90.17 $\pm$ 2.02 & 0.64 $\pm$ 0.35 & 91.11 $\pm$ 1.58 & 0.64 $\pm$ 0.35 \\
                      & $(0.01,0.5)$ & 89.05 $\pm$ 2.24 & 0.00 $\pm$ 0.00 & 90.45 $\pm$ 1.12 & 0.00 $\pm$ 0.00 & 43.91 $\pm$ 5.10 & 0.00 $\pm$ 0.00 \\
                      & $(0.01,0.25)$ & 89.05 $\pm$ 2.24 & 0.00 $\pm$ 0.00 & 90.45 $\pm$ 1.12 & 0.00 $\pm$ 0.00 & 90.15 $\pm$ 1.86 & 0.00 $\pm$ 0.00 \\
                      & $(0.01,0.1)$ & 89.05 $\pm$ 2.24 & 0.00 $\pm$ 0.00 & 90.45 $\pm$ 1.12 & 0.00 $\pm$ 0.00 & 89.39 $\pm$ 1.61 & 0.00 $\pm$ 0.00 \\
                      & $(0.01,0.05)$ & 89.05 $\pm$ 2.24 & 0.00 $\pm$ 0.00 & 90.45 $\pm$ 1.12 & 0.00 $\pm$ 0.00 & 87.97 $\pm$ 2.30 & 0.00 $\pm$ 0.00 \\
\midrule
\multirow{7}{*}{Phishing} & $(2.0,0.01)$ & 55.63 $\pm$ 0.00 & 100.00 $\pm$ 0.00 & 90.27 $\pm$ 0.67 & 100.00 $\pm$ 0.00 & 53.38 $\pm$ 4.50 & 100.00 $\pm$ 0.00 \\
                      & $(2.0,0.005)$ & 55.63 $\pm$ 0.00 & 100.00 $\pm$ 0.00 & 90.27 $\pm$ 0.67 & 100.00 $\pm$ 0.00 & 53.38 $\pm$ 4.50 & 100.00 $\pm$ 0.00 \\
                      & $(1.0,0.01)$ & 55.63 $\pm$ 0.00 & 100.00 $\pm$ 0.00 & 90.27 $\pm$ 0.67 & 100.00 $\pm$ 0.00 & 53.38 $\pm$ 4.50 & 100.00 $\pm$ 0.00 \\
                      & $(1.0,0.005)$ & 55.63 $\pm$ 0.00 & 100.00 $\pm$ 0.00 & 90.27 $\pm$ 0.67 & 100.00 $\pm$ 0.00 & 53.38 $\pm$ 4.50 & 100.00 $\pm$ 0.00 \\
                      & $(0.5,0.01)$ & 55.63 $\pm$ 0.00 & 100.00 $\pm$ 0.00 & 90.27 $\pm$ 0.67 & 100.00 $\pm$ 0.00 & 53.38 $\pm$ 4.50 & 100.00 $\pm$ 0.00 \\
                      & $(0.5,0.005)$ & 55.63 $\pm$ 0.00 & 100.00 $\pm$ 0.00 & 90.27 $\pm$ 0.67 & 100.00 $\pm$ 0.00 & 53.38 $\pm$ 4.50 & 100.00 $\pm$ 0.00 \\
                      & $(0.25,0.01)$ & 89.26 $\pm$ 1.96 & 100.00 $\pm$ 0.00 & 90.27 $\pm$ 0.67 & 100.00 $\pm$ 0.00 & 53.38 $\pm$ 4.50 & 100.00 $\pm$ 0.00 \\
                      & $(0.25,0.005)$ & 89.26 $\pm$ 1.96 & 100.00 $\pm$ 0.00 & 90.27 $\pm$ 0.67 & 100.00 $\pm$ 0.00 & 51.13 $\pm$ 5.52 & 100.00 $\pm$ 0.00 \\
                      & $(0.1,0.01)$ & 91.71 $\pm$ 0.20 & 84.00 $\pm$ 5.33 & 78.96 $\pm$ 15.00 & 92.00 $\pm$ 9.80 & 78.98 $\pm$ 15.01 & 90.67 $\pm$ 9.04 \\
                      & $(0.1,0.005)$ & 91.71 $\pm$ 0.20 & 84.00 $\pm$ 5.33 & 78.96 $\pm$ 15.00 & 92.00 $\pm$ 9.80 & 92.45 $\pm$ 0.00 & 93.33 $\pm$ 0.00 \\
                      & $(0.05,0.01)$ & 91.85 $\pm$ 0.43 & 0.00 $\pm$ 0.00 & 92.06 $\pm$ 0.11 & 0.00 $\pm$ 0.00 & 91.92 $\pm$ 0.11 & 0.00 $\pm$ 0.00 \\
                      & $(0.05,0.005)$ & 91.85 $\pm$ 0.43 & 0.00 $\pm$ 0.00 & 92.06 $\pm$ 0.11 & 0.00 $\pm$ 0.00 & 92.06 $\pm$ 0.12 & 0.00 $\pm$ 0.00 \\
                      & $(0.01,0.01)$ & 91.86 $\pm$ 0.21 & 0.00 $\pm$ 0.00 & 92.00 $\pm$ 0.11 & 0.00 $\pm$ 0.00 & 92.00 $\pm$ 0.12 & 0.00 $\pm$ 0.00 \\
                      & $(0.01,0.005)$ & 91.86 $\pm$ 0.21 & 0.00 $\pm$ 0.00 & 92.00 $\pm$ 0.11 & 0.00 $\pm$ 0.00 & 91.88 $\pm$ 0.26 & 0.00 $\pm$ 0.00 \\
\bottomrule
\end{tabular}}
\end{table}

\begin{table}
    \caption{Training accuracy and percentage of spurious features removed in MIMIC-III dataset.  A '--' means that the experiments with this regularization parameter choice was not run.}
\label{acc_train1.table}
\vskip 0.1in
\small
\centering
\resizebox{0.98\textwidth}{!}{
\begin{tabular}{llcccccc}
    \toprule 
    \multirow{3}{*}{\textbf{Dataset}}& \multirow{3}{*}{\begin{tabular}{@{}c@{}} \textbf{Regularizer} \\ \textbf{Coefficients} \\ ($\lambda_m$, $\lambda_s$) \end{tabular}} & \multicolumn{2}{c}{\textbf{Group Lasso}} & \multicolumn{2}{c}{\textbf{Local Lasso}} & \multicolumn{2}{c}{\textbf{LESS-VFL (ours)}} \\
      \cmidrule(rl){3-4}  \cmidrule(rl){5-6} \cmidrule(rl){7-8}
    && \subhead{Final} & \subhead{Spurious Features}& \subhead{Final} & \subhead{Spurious Features}& \subhead{Final} & \subhead{Spurious Features} \\
    && \subhead{Accuracy} & \subhead{Removed}& \subhead{Accuracy} & \subhead{Removed}& \subhead{Accuracy} & \subhead{Removed}\\
\midrule
\multirow{19}{*}{MIMIC-III} & $(40.0,0.5)$ & -- & -- & 81.97 $\pm$ 2.64 & 100.00 $\pm$ 0.00 & 81.12 $\pm$ 2.08 & 99.89 $\pm$ 0.22 \\
                      & $(40.0,0.25)$ & -- & -- & 81.97 $\pm$ 2.64 & 100.00 $\pm$ 0.00 & 80.60 $\pm$ 1.59 & 100.00 $\pm$ 0.00 \\
                      & $(40.0,0.1)$ & -- & -- & 81.97 $\pm$ 2.64 & 100.00 $\pm$ 0.00 & 80.61 $\pm$ 2.62 & 100.00 $\pm$ 0.00 \\
                      & $(40.0,0.05)$ & -- & -- & 81.97 $\pm$ 2.64 & 100.00 $\pm$ 0.00 & 80.82 $\pm$ 1.44 & 99.21 $\pm$ 1.57 \\
                      & $(35.0,0.5)$ & -- & -- & 80.94 $\pm$ 3.45 & 98.60 $\pm$ 2.81 & 81.28 $\pm$ 1.89 & 96.07 $\pm$ 5.68 \\
                      & $(35.0,0.25)$ & -- & -- & 80.94 $\pm$ 3.45 & 98.60 $\pm$ 2.81 & 80.96 $\pm$ 1.92 & 98.60 $\pm$ 2.81 \\
                      & $(35.0,0.1)$ & -- & -- & 80.94 $\pm$ 3.45 & 98.60 $\pm$ 2.81 & 81.79 $\pm$ 1.03 & 98.43 $\pm$ 3.15 \\
                      & $(35.0,0.05)$ & -- & -- & 80.94 $\pm$ 3.45 & 98.60 $\pm$ 2.81 & 80.23 $\pm$ 2.09 & 98.71 $\pm$ 2.58 \\
                      & $(32.5,0.1)$ & -- & -- & 83.45 $\pm$ 2.47 & 87.53 $\pm$ 10.81 & 81.85 $\pm$ 2.66 & 89.94 $\pm$ 10.14 \\
                      & $(32.5,0.05)$ & -- & -- & 83.45 $\pm$ 2.47 & 87.53 $\pm$ 10.81 & 82.56 $\pm$ 1.37 & 86.85 $\pm$ 10.21 \\
                      & $(30.0,0.5)$ & -- & -- & 84.45 $\pm$ 2.08 & 76.24 $\pm$ 6.35 & 84.39 $\pm$ 2.84 & 66.15 $\pm$ 13.21 \\
                      & $(30.0,0.25)$ & -- & -- & 84.45 $\pm$ 2.08 & 76.24 $\pm$ 6.35 & 84.26 $\pm$ 2.26 & 76.12 $\pm$ 4.97 \\
                      & $(30.0,0.1)$ & -- & -- & 84.45 $\pm$ 2.08 & 76.24 $\pm$ 6.35 & 85.21 $\pm$ 1.62 & 74.38 $\pm$ 5.37 \\
                      & $(30.0,0.05)$ & -- & -- & 84.45 $\pm$ 2.08 & 76.24 $\pm$ 6.35 & 83.61 $\pm$ 2.54 & 78.30 $\pm$ 5.57 \\
                      & $(25.0,0.5)$ & -- & -- & 85.62 $\pm$ 1.24 & 55.22 $\pm$ 5.03 & 87.53 $\pm$ 1.19 & 53.43 $\pm$ 3.23 \\
                      & $(25.0,0.25)$ & -- & -- & 85.62 $\pm$ 1.24 & 55.22 $\pm$ 5.03 & 87.82 $\pm$ 0.65 & 53.54 $\pm$ 1.39 \\
                      & $(25.0,0.1)$ & -- & -- & 85.62 $\pm$ 1.24 & 55.22 $\pm$ 5.03 & 87.21 $\pm$ 1.45 & 56.07 $\pm$ 5.01 \\
                      & $(25.0,0.05)$ & -- & -- & 85.62 $\pm$ 1.24 & 55.22 $\pm$ 5.03 & 85.73 $\pm$ 1.13 & 55.11 $\pm$ 5.03 \\
                      & $(20.0,0.5)$ & -- & -- & 85.51 $\pm$ 1.34 & 44.61 $\pm$ 2.23 & 87.58 $\pm$ 0.53 & 46.40 $\pm$ 1.90 \\
                      & $(20.0,0.25)$ & -- & -- & 85.51 $\pm$ 1.34 & 44.61 $\pm$ 2.23 & 88.01 $\pm$ 0.40 & 44.94 $\pm$ 1.51 \\
                      & $(20.0,0.1)$ & -- & -- & 85.51 $\pm$ 1.34 & 44.61 $\pm$ 2.23 & 87.78 $\pm$ 0.66 & 43.71 $\pm$ 1.59 \\
                      & $(20.0,0.05)$ & -- & -- & 85.51 $\pm$ 1.34 & 44.61 $\pm$ 2.23 & 87.50 $\pm$ 0.54 & 45.51 $\pm$ 1.48 \\
                      & $(15.0,0.5)$ & -- & -- & 85.19 $\pm$ 0.86 & 32.25 $\pm$ 5.01 & 87.39 $\pm$ 0.79 & 29.72 $\pm$ 3.10 \\
                      & $(15.0,0.25)$ & -- & -- & 85.19 $\pm$ 0.86 & 32.25 $\pm$ 5.01 & 86.92 $\pm$ 0.76 & 31.40 $\pm$ 4.34 \\
                      & $(15.0,0.1)$ & -- & -- & 85.19 $\pm$ 0.86 & 32.25 $\pm$ 5.01 & 86.67 $\pm$ 0.75 & 31.63 $\pm$ 3.76 \\
                      & $(15.0,0.05)$ & -- & -- & 85.19 $\pm$ 0.86 & 32.25 $\pm$ 5.01 & 86.48 $\pm$ 1.32 & 30.73 $\pm$ 2.98 \\
                      & $(10.0,0.5)$ & -- & -- & 83.92 $\pm$ 1.70 & 8.99 $\pm$ 2.20 & 86.37 $\pm$ 2.26 & 9.89 $\pm$ 1.66 \\
                      & $(10.0,0.25)$ & -- & -- & 83.92 $\pm$ 1.70 & 8.99 $\pm$ 2.20 & 87.50 $\pm$ 0.83 & 8.26 $\pm$ 1.52 \\
                      & $(10.0,0.1)$ & -- & -- & 83.92 $\pm$ 1.70 & 8.99 $\pm$ 2.20 & 84.82 $\pm$ 2.11 & 9.72 $\pm$ 3.01 \\
                      & $(10.0,0.05)$ & -- & -- & 83.92 $\pm$ 1.70 & 8.99 $\pm$ 2.20 & 85.65 $\pm$ 0.86 & 8.88 $\pm$ 1.99 \\
                      & $(2.0,0.5)$ & 80.21 $\pm$ 1.25 & 100.00 $\pm$ 0.00 & -- & -- & -- & -- \\
                      & $(2.0,0.25)$ & 80.21 $\pm$ 1.25 & 100.00 $\pm$ 0.00 & -- & -- & -- & -- \\
                      & $(2.0,0.1)$ & 80.21 $\pm$ 1.25 & 100.00 $\pm$ 0.00 & -- & -- & -- & -- \\
                      & $(2.0,0.05)$ & 80.21 $\pm$ 1.25 & 100.00 $\pm$ 0.00 & -- & -- & -- & -- \\
                      & $(1.0,0.5)$ & 83.88 $\pm$ 2.24 & 98.93 $\pm$ 0.63 & -- & -- & -- & -- \\
                      & $(1.0,0.25)$ & 83.88 $\pm$ 2.24 & 98.93 $\pm$ 0.63 & -- & -- & -- & -- \\
                      & $(1.0,0.1)$ & 83.88 $\pm$ 2.24 & 98.93 $\pm$ 0.63 & -- & -- & -- & -- \\
                      & $(1.0,0.05)$ & 83.88 $\pm$ 2.24 & 98.93 $\pm$ 0.63 & -- & -- & -- & -- \\
                      & $(0.5,0.5)$ & 87.55 $\pm$ 0.74 & 93.37 $\pm$ 0.98 & -- & -- & -- & -- \\
                      & $(0.5,0.25)$ & 87.55 $\pm$ 0.74 & 93.37 $\pm$ 0.98 & -- & -- & -- & -- \\
                      & $(0.5,0.1)$ & 87.55 $\pm$ 0.74 & 93.37 $\pm$ 0.98 & -- & -- & -- & -- \\
                      & $(0.5,0.05)$ & 87.55 $\pm$ 0.74 & 93.37 $\pm$ 0.98 & -- & -- & -- & -- \\
                      & $(0.25,0.5)$ & 87.64 $\pm$ 0.92 & 81.35 $\pm$ 3.42 & -- & -- & -- & -- \\
                      & $(0.25,0.25)$ & 87.64 $\pm$ 0.92 & 81.35 $\pm$ 3.42 & -- & -- & -- & -- \\
                      & $(0.25,0.1)$ & 87.64 $\pm$ 0.92 & 81.35 $\pm$ 3.42 & -- & -- & -- & -- \\
                      & $(0.25,0.05)$ & 87.64 $\pm$ 0.92 & 81.35 $\pm$ 3.42 & -- & -- & -- & -- \\
                      & $(0.1,0.5)$ & 87.63 $\pm$ 0.47 & 60.84 $\pm$ 4.70 & -- & -- & -- & -- \\
                      & $(0.1,0.25)$ & 87.63 $\pm$ 0.47 & 60.84 $\pm$ 4.70 & -- & -- & -- & -- \\
                      & $(0.1,0.1)$ & 87.63 $\pm$ 0.47 & 60.84 $\pm$ 4.70 & -- & -- & -- & -- \\
                      & $(0.1,0.05)$ & 87.63 $\pm$ 0.47 & 60.84 $\pm$ 4.70 & -- & -- & -- & -- \\
                      & $(0.05,0.5)$ & 86.01 $\pm$ 1.41 & 43.99 $\pm$ 7.25 & -- & -- & -- & -- \\
                      & $(0.05,0.25)$ & 86.01 $\pm$ 1.41 & 43.99 $\pm$ 7.25 & -- & -- & -- & -- \\
                      & $(0.05,0.1)$ & 86.01 $\pm$ 1.41 & 43.99 $\pm$ 7.25 & -- & -- & -- & -- \\
                      & $(0.05,0.05)$ & 86.01 $\pm$ 1.41 & 43.99 $\pm$ 7.25 & -- & -- & -- & -- \\
                      & $(0.01,0.5)$ & 85.02 $\pm$ 0.84 & 0.00 $\pm$ 0.00 & -- & -- & -- & -- \\
                      & $(0.01,0.25)$ & 85.02 $\pm$ 0.84 & 0.00 $\pm$ 0.00 & -- & -- & -- & -- \\
                      & $(0.01,0.1)$ & 85.02 $\pm$ 0.84 & 0.00 $\pm$ 0.00 & -- & -- & -- & -- \\
                      & $(0.01,0.05)$ & 85.02 $\pm$ 0.84 & 0.00 $\pm$ 0.00 & -- & -- & -- & -- \\
    \bottomrule
\end{tabular}}
\end{table}
    
    \begin{table}
    \caption{Training accuracy and percentage of spurious features removed for the Gina and Sylva datasets. A '--' means that the experiments with this regularization parameter choice was not run.}
\label{acc_train2.table}
\vskip 0.1in
\small
\centering
\resizebox{0.98\textwidth}{!}{
\begin{tabular}{llcccccc}
    \toprule 
    \multirow{3}{*}{\textbf{Dataset}}& \multirow{3}{*}{\begin{tabular}{@{}c@{}} \textbf{Regularizer} \\ \textbf{Coefficients} \\ ($\lambda_m$, $\lambda_s$) \end{tabular}} & \multicolumn{2}{c}{\textbf{Group Lasso}} & \multicolumn{2}{c}{\textbf{Local Lasso}} & \multicolumn{2}{c}{\textbf{LESS-VFL (ours)}} \\
      \cmidrule(rl){3-4}  \cmidrule(rl){5-6} \cmidrule(rl){7-8}
    && \subhead{Final} & \subhead{Spurious Features}& \subhead{Final} & \subhead{Spurious Features}& \subhead{Final} & \subhead{Spurious Features} \\
    && \subhead{Accuracy} & \subhead{Removed}& \subhead{Accuracy} & \subhead{Removed}& \subhead{Accuracy} & \subhead{Removed}\\
\midrule
\multirow{8}{*}{Gina} & $(2.0,0.1)$ & 50.43 $\pm$ 0.00 & 100.00 $\pm$ 0.00 & 83.46 $\pm$ 1.10 & 100.00 $\pm$ 0.00 & 80.84 $\pm$ 0.00 & 100.00 $\pm$ 0.00 \\
                      & $(2.0,0.05)$ & 50.43 $\pm$ 0.00 & 100.00 $\pm$ 0.00 & 83.46 $\pm$ 1.10 & 100.00 $\pm$ 0.00 & 83.00 $\pm$ 1.15 & 100.00 $\pm$ 0.00 \\
                      & $(2.0,0.01)$ & 50.43 $\pm$ 0.00 & 100.00 $\pm$ 0.00 & 83.46 $\pm$ 1.10 & 100.00 $\pm$ 0.00 & 83.46 $\pm$ 1.25 & 100.00 $\pm$ 0.00 \\
                      & $(2.0,0.005)$ & 50.43 $\pm$ 0.00 & 100.00 $\pm$ 0.00 & 83.46 $\pm$ 1.10 & 100.00 $\pm$ 0.00 & 83.37 $\pm$ 1.34 & 100.00 $\pm$ 0.00 \\
                      & $(1.0,0.1)$ & 50.43 $\pm$ 0.00 & 100.00 $\pm$ 0.00 & 83.46 $\pm$ 1.10 & 100.00 $\pm$ 0.00 & 80.98 $\pm$ 0.00 & 100.00 $\pm$ 0.00 \\
                      & $(1.0,0.05)$ & 50.43 $\pm$ 0.00 & 100.00 $\pm$ 0.00 & 83.46 $\pm$ 1.10 & 100.00 $\pm$ 0.00 & 83.00 $\pm$ 1.15 & 100.00 $\pm$ 0.00 \\
                      & $(1.0,0.01)$ & 50.43 $\pm$ 0.00 & 100.00 $\pm$ 0.00 & 83.46 $\pm$ 1.10 & 100.00 $\pm$ 0.00 & 83.46 $\pm$ 1.25 & 100.00 $\pm$ 0.00 \\
                      & $(1.0,0.005)$ & 50.43 $\pm$ 0.00 & 100.00 $\pm$ 0.00 & 83.46 $\pm$ 1.10 & 100.00 $\pm$ 0.00 & 83.37 $\pm$ 1.34 & 100.00 $\pm$ 0.00 \\
                      & $(0.5,0.1)$ & 50.43 $\pm$ 0.00 & 100.00 $\pm$ 0.00 & 83.46 $\pm$ 1.10 & 100.00 $\pm$ 0.00 & 80.98 $\pm$ 0.00 & 100.00 $\pm$ 0.00 \\
                      & $(0.5,0.05)$ & 50.43 $\pm$ 0.00 & 100.00 $\pm$ 0.00 & 83.46 $\pm$ 1.10 & 100.00 $\pm$ 0.00 & 83.00 $\pm$ 1.15 & 100.00 $\pm$ 0.00 \\
                      & $(0.5,0.01)$ & 50.43 $\pm$ 0.00 & 100.00 $\pm$ 0.00 & 83.46 $\pm$ 1.10 & 100.00 $\pm$ 0.00 & 83.46 $\pm$ 1.25 & 100.00 $\pm$ 0.00 \\
                      & $(0.5,0.005)$ & 50.43 $\pm$ 0.00 & 100.00 $\pm$ 0.00 & 83.46 $\pm$ 1.10 & 100.00 $\pm$ 0.00 & 83.37 $\pm$ 1.34 & 100.00 $\pm$ 0.00 \\
                      & $(0.25,0.1)$ & 50.43 $\pm$ 0.00 & 100.00 $\pm$ 0.00 & 83.46 $\pm$ 1.10 & 100.00 $\pm$ 0.00 & 80.98 $\pm$ 0.00 & 99.59 $\pm$ 0.00 \\
                      & $(0.25,0.05)$ & 50.43 $\pm$ 0.00 & 100.00 $\pm$ 0.00 & 83.46 $\pm$ 1.10 & 100.00 $\pm$ 0.00 & 83.00 $\pm$ 1.15 & 99.79 $\pm$ 0.21 \\
                      & $(0.25,0.01)$ & 50.43 $\pm$ 0.00 & 100.00 $\pm$ 0.00 & 83.46 $\pm$ 1.10 & 100.00 $\pm$ 0.00 & 83.46 $\pm$ 1.25 & 100.00 $\pm$ 0.00 \\
                      & $(0.25,0.005)$ & 50.43 $\pm$ 0.00 & 100.00 $\pm$ 0.00 & 83.46 $\pm$ 1.10 & 100.00 $\pm$ 0.00 & 83.37 $\pm$ 1.34 & 100.00 $\pm$ 0.00 \\
                      & $(0.1,0.1)$ & 81.16 $\pm$ 1.12 & 82.33 $\pm$ 0.47 & 83.46 $\pm$ 1.10 & 100.00 $\pm$ 0.00 & 81.99 $\pm$ 0.00 & 81.20 $\pm$ 0.00 \\
                      & $(0.1,0.05)$ & 81.16 $\pm$ 1.12 & 82.33 $\pm$ 0.47 & 83.46 $\pm$ 1.10 & 100.00 $\pm$ 0.00 & 83.57 $\pm$ 0.86 & 99.38 $\pm$ 0.62 \\
                      & $(0.1,0.01)$ & 81.16 $\pm$ 1.12 & 82.33 $\pm$ 0.47 & 83.46 $\pm$ 1.10 & 100.00 $\pm$ 0.00 & 83.46 $\pm$ 1.25 & 100.00 $\pm$ 0.00 \\
                      & $(0.1,0.005)$ & 81.16 $\pm$ 1.12 & 82.33 $\pm$ 0.47 & 83.46 $\pm$ 1.10 & 100.00 $\pm$ 0.00 & 83.37 $\pm$ 1.34 & 100.00 $\pm$ 0.00 \\
                      & $(0.05,0.1)$ & 82.71 $\pm$ 0.24 & 54.75 $\pm$ 3.12 & 83.29 $\pm$ 0.84 & 100.00 $\pm$ 0.00 & 81.99 $\pm$ 0.00 & 43.18 $\pm$ 0.00 \\
                      & $(0.05,0.05)$ & 82.71 $\pm$ 0.24 & 54.75 $\pm$ 3.12 & 83.29 $\pm$ 0.84 & 100.00 $\pm$ 0.00 & 83.93 $\pm$ 1.22 & 88.22 $\pm$ 5.17 \\
                      & $(0.05,0.01)$ & 82.71 $\pm$ 0.24 & 54.75 $\pm$ 3.12 & 83.29 $\pm$ 0.84 & 100.00 $\pm$ 0.00 & 83.54 $\pm$ 1.37 & 100.00 $\pm$ 0.00 \\
                      & $(0.05,0.005)$ & 82.71 $\pm$ 0.24 & 54.75 $\pm$ 3.12 & 83.29 $\pm$ 0.84 & 100.00 $\pm$ 0.00 & 83.34 $\pm$ 0.87 & 100.00 $\pm$ 0.00 \\
                      & $(0.025,0.1)$ & -- & -- & 84.03 $\pm$ 0.67 & 99.83 $\pm$ 0.15 & 81.99 $\pm$ 0.00 & 30.37 $\pm$ 0.00 \\
                      & $(0.025,0.05)$ & -- & -- & 84.03 $\pm$ 0.67 & 99.83 $\pm$ 0.15 & 83.57 $\pm$ 0.00 & 71.90 $\pm$ 0.00 \\
                      & $(0.025,0.01)$ & -- & -- & 84.03 $\pm$ 0.67 & 99.83 $\pm$ 0.15 & 80.40 $\pm$ 0.62 & 99.71 $\pm$ 0.31 \\
                      & $(0.025,0.005)$ & -- & -- & 84.03 $\pm$ 0.67 & 99.83 $\pm$ 0.15 & 84.15 $\pm$ 0.92 & 99.75 $\pm$ 0.15 \\
                      & $(0.01,0.1)$ & 80.40 $\pm$ 0.75 & 0.00 $\pm$ 0.00 & 81.12 $\pm$ 1.01 & 0.00 $\pm$ 0.00 & 81.99 $\pm$ 0.00 & 0.00 $\pm$ 0.00 \\
                      & $(0.01,0.05)$ & 80.40 $\pm$ 0.75 & 0.00 $\pm$ 0.00 & 81.12 $\pm$ 1.01 & 0.00 $\pm$ 0.00 & 79.47 $\pm$ 0.94 & 0.00 $\pm$ 0.00 \\
                      & $(0.01,0.01)$ & 80.40 $\pm$ 0.75 & 0.00 $\pm$ 0.00 & 81.12 $\pm$ 1.01 & 0.00 $\pm$ 0.00 & 81.38 $\pm$ 1.61 & 0.00 $\pm$ 0.00 \\
                      & $(0.01,0.005)$ & 80.40 $\pm$ 0.75 & 0.00 $\pm$ 0.00 & 81.12 $\pm$ 1.01 & 0.00 $\pm$ 0.00 & 80.03 $\pm$ 1.89 & 0.00 $\pm$ 0.00 \\
\midrule
\multirow{7}{*}{Sylva} & $(2.0,0.01)$ & 93.30 $\pm$ 0.00 & 100.00 $\pm$ 0.00 & 97.60 $\pm$ 0.28 & 100.00 $\pm$ 0.00 & 97.55 $\pm$ 0.18 & 100.00 $\pm$ 0.00 \\
                      & $(2.0,0.005)$ & 93.30 $\pm$ 0.00 & 100.00 $\pm$ 0.00 & 97.60 $\pm$ 0.28 & 100.00 $\pm$ 0.00 & 97.58 $\pm$ 0.23 & 100.00 $\pm$ 0.00 \\
                      & $(1.0,0.01)$ & 93.30 $\pm$ 0.00 & 100.00 $\pm$ 0.00 & 97.60 $\pm$ 0.28 & 100.00 $\pm$ 0.00 & 97.55 $\pm$ 0.18 & 100.00 $\pm$ 0.00 \\
                      & $(1.0,0.005)$ & 93.30 $\pm$ 0.00 & 100.00 $\pm$ 0.00 & 97.60 $\pm$ 0.28 & 100.00 $\pm$ 0.00 & 97.58 $\pm$ 0.23 & 100.00 $\pm$ 0.00 \\
                      & $(0.5,0.01)$ & 93.30 $\pm$ 0.00 & 100.00 $\pm$ 0.00 & 97.60 $\pm$ 0.28 & 100.00 $\pm$ 0.00 & 97.55 $\pm$ 0.18 & 100.00 $\pm$ 0.00 \\
                      & $(0.5,0.005)$ & 93.30 $\pm$ 0.00 & 100.00 $\pm$ 0.00 & 97.60 $\pm$ 0.28 & 100.00 $\pm$ 0.00 & 97.58 $\pm$ 0.23 & 100.00 $\pm$ 0.00 \\
                      & $(0.25,0.01)$ & 93.30 $\pm$ 0.00 & 100.00 $\pm$ 0.00 & 97.60 $\pm$ 0.28 & 100.00 $\pm$ 0.00 & 97.55 $\pm$ 0.18 & 100.00 $\pm$ 0.00 \\
                      & $(0.25,0.005)$ & 93.30 $\pm$ 0.00 & 100.00 $\pm$ 0.00 & 97.60 $\pm$ 0.28 & 100.00 $\pm$ 0.00 & 97.58 $\pm$ 0.23 & 100.00 $\pm$ 0.00 \\
                      & $(0.1,0.01)$ & 93.30 $\pm$ 0.00 & 100.00 $\pm$ 0.00 & 98.54 $\pm$ 0.06 & 100.00 $\pm$ 0.00 & 98.61 $\pm$ 0.06 & 100.00 $\pm$ 0.00 \\
                      & $(0.1,0.005)$ & 93.30 $\pm$ 0.00 & 100.00 $\pm$ 0.00 & 98.54 $\pm$ 0.06 & 100.00 $\pm$ 0.00 & 98.55 $\pm$ 0.04 & 100.00 $\pm$ 0.00 \\
                      & $(0.05,0.01)$ & 98.52 $\pm$ 0.08 & 23.89 $\pm$ 4.32 & 98.56 $\pm$ 0.07 & 11.11 $\pm$ 2.03 & 98.53 $\pm$ 0.09 & 10.00 $\pm$ 1.08 \\
                      & $(0.05,0.005)$ & 98.52 $\pm$ 0.08 & 23.89 $\pm$ 4.32 & 98.56 $\pm$ 0.07 & 11.11 $\pm$ 2.03 & 98.59 $\pm$ 0.11 & 10.56 $\pm$ 1.81 \\
                      & $(0.01,0.01)$ & 98.46 $\pm$ 0.10 & 0.00 $\pm$ 0.00 & 98.37 $\pm$ 0.11 & 0.00 $\pm$ 0.00 & 98.49 $\pm$ 0.09 & 0.00 $\pm$ 0.00 \\
                      & $(0.01,0.005)$ & 98.46 $\pm$ 0.10 & 0.00 $\pm$ 0.00 & 98.37 $\pm$ 0.11 & 0.00 $\pm$ 0.00 & 98.53 $\pm$ 0.07 & 0.00 $\pm$ 0.00 \\
                      \bottomrule
\end{tabular}}
\end{table}

\end{document}